\pdfoutput=1
\documentclass[twoside,11pt]{article}

\usepackage[utf8]{inputenc}
\usepackage{amsmath,amsthm,amssymb}
\let\amsproof\proof
\let\endamsproof\endproof
\usepackage[preprint]{jmlr2e}
\let\proof\amsproof
\let\endproof\endamsproof

\setkeys{Gin}{width=\linewidth}
\usepackage{mathtools}
\usepackage{bm}
\usepackage{booktabs}
\usepackage{multirow}
\usepackage{xcolor}
\usepackage{graphicx}
\usepackage{algorithm}
\usepackage{algpseudocode}
\usepackage{url}
\usepackage[capitalise,noabbrev]{cleveref}

\usepackage{aliascnt}
\makeatletter
\@for\p@thmname:=theorem,lemma,proposition,corollary,definition,remark,conjecture\do{%
  \expandafter\let\csname\p@thmname\endcsname\relax
  \expandafter\let\csname end\p@thmname\endcsname\relax
  \expandafter\let\csname c@\p@thmname\endcsname\relax}
\makeatother
\newcommand{\sharedtheorem}[2]{%
  \newaliascnt{#1}{theorem}\newtheorem{#1}[#1]{#2}\aliascntresetthe{#1}%
  \crefname{#1}{#2}{#2s}\Crefname{#1}{#2}{#2s}}
\theoremstyle{plain}
\newtheorem{theorem}{Theorem}[section]
\sharedtheorem{lemma}{Lemma}
\sharedtheorem{proposition}{Proposition}
\sharedtheorem{corollary}{Corollary}
\theoremstyle{definition}
\sharedtheorem{definition}{Definition}
\sharedtheorem{assumption}{Assumption}
\sharedtheorem{remark}{Remark}
\sharedtheorem{conjecture}{Conjecture}

\newcommand{\R}{\mathbb{R}}
\newcommand{\E}{\mathbb{E}}

\newcommand{\Var}{\mathrm{Var}}
\newcommand{\1}{\mathbf{1}}
\newcommand{\KL}{\mathrm{KL}}
\newcommand{\kl}{\mathrm{kl}}
\newcommand{\Kstar}{K^{\star}}
\newcommand{\Kscal}{K_{\mathrm{scal}}}
\newcommand{\Kloc}{K_{\mathrm{loc}}}
\newcommand{\Qs}{\mathcal{Q}}
\newcommand{\Qn}{\mathcal{Q}_0}
\newcommand{\Ys}{\mathcal{Y}}
\newcommand{\Es}{\mathcal{E}}
\newcommand{\J}{\mathcal{J}}
\newcommand{\supp}{\operatorname{supp}}
\newcommand{\range}{\operatorname{range}}
\newcommand{\rank}{\operatorname{rank}}
\newcommand{\mbar}{\bar m}
\newcommand{\sigeff}{\sigma^2_{\mathrm{eff}}}
\newcommand{\sigis}{\sigma^2_{\mathrm{IS}}}
\DeclareMathOperator*{\argmin}{arg\,min}
\DeclareMathOperator*{\argmax}{arg\,max}

\title{The Certification Exponent: Instance-Optimal Sequential Certification of Policy Improvement from History-Dependent Logs}

\author{\name Rui Wang \email wangrui@microsoft.com \\
       \addr Microsoft \\
       Vancouver, BC, Canada}

\ShortHeadings{The Certification Exponent}{Wang}
\firstpageno{1}

\begin{document}

\maketitle

\begin{abstract}
Before a new policy replaces a deployed one, we often want a certificate, computed from
logged episodes, that it is better. We ask how many episodes such a certificate needs
when the logging policy is known but history-dependent, and the error probability must
stay below $\delta$ in every environment consistent with the logs. The answer is an
exponent $\Kstar$: the smallest KL divergence from the logged law to a law with the same
logged action probabilities under which the target does not improve. No sound certifier
stops in fewer than $\log(1/\delta)/\Kstar$ expected episodes, at any $\delta$, and
whenever $0<\Kstar<\infty$ a plug-in Kelly certifier attains this rate to first order as
$\delta\to0$. Coverage, computed by one backward induction,
decides whether certification is impossible ($\Kstar=0$), possible at a finite rate, or
possible in a constant number of episodes ($\Kstar=\infty$). By convex duality, the
optimal certificate bets on the importance-weighted advantage and also places fair bets
on the logged actions, so doubly robust control variates act as hedges rather than
estimators. As a consequence, no choice of off-policy estimator changes
$\Kstar$, a coverage ceiling explains the curse of memory, and structural assumptions can
be priced in exponent units. A cushion certifier extends the approach to trees too large
to enumerate. In experiments the plug-in's median stopping time is $1.13$--$1.43\times$
the lower bound on memory tasks, $1.13$--$1.51\times$ on Tiger ($H\le5$) and
$1.17$--$1.64\times$ on random trees. The best scalar betting baselines need up to
$2.65\times$ (memory) and $4.1\times$ (Tiger) the plug-in's episodes; only on a few
shallow random trees is a doubly robust bet faster, by at most 18\%.
\end{abstract}

\begin{keywords}
  off-policy evaluation, sequential testing, testing by betting, e-values,
  instance optimality, history-dependent policies
\end{keywords}

\section{Introduction}
\label{sec:intro}

Suppose a team has trained a new dialogue policy $\pi$ offline and holds a log of
conversations produced by the deployed policy $\mu$. The decision to replace $\mu$ by
$\pi$ is binary, and a wrong replacement is costly. The team therefore wants a
\emph{sound} rule: if $\pi$ is not better than $\mu$, it is deployed with probability at
most $\delta$. Logged episodes arrive one at a time, so the rule should be sequential and
stop as soon as the evidence suffices. This is the goal of high-confidence off-policy
evaluation and safe policy improvement
\citep{thomas2015hcope,thomas2015hcpi,laroche2019spibb}, and of gated deployment with
anytime-valid confidence sequences
\citep{karampatziakis2021offpolicy,waudbysmith2024anytime}. The practical question is how
many episodes such a rule must wait for.

Memory makes this question hard. Recurrent policies, dialogue agents and controllers for
partially observable environments choose actions based on the whole history. The
importance weight that reweights a logged episode towards $\pi$ is then a product of
per-step ratios, and it can be enormous on the few episodes that $\pi$ would have
produced. Recent lower bounds show that this cost is sometimes unavoidable: under
history-dependent logging, evaluation can require exponentially many episodes in the
horizon \citep{zhang2025tractability,jajoo2026hardness,luo2026revealing}. Worst-case
bounds, however, say little about a particular log. Some instances can be certified from
a handful of episodes, and others cannot be certified at all.

Existing certifiers compress each episode to a scalar, such as an importance-sampling
(IS), per-decision IS or doubly robust (DR) estimate of the advantage, and then apply a
concentration inequality \citep{thomas2015hcope} or a betting confidence sequence
\citep{waudbysmith2024estimating} to the resulting stream. Better estimators and tighter
inequalities shorten the wait. It is unclear, though, whether these refinements approach
a limit, and whether a certifier that looks at more than one scalar per episode could do
fundamentally better. This leaves a basic question open:
\emph{for a given instance, how many logged episodes does the best possible sound
certifier need, and what does it look like?} We answer it in the expected-stopping-time
sense of sequential testing \citep{garivier2016optimal}.

Our approach views certification as a betting game \citep{shafer2021testing,ramdas2023gametheoretic}.
The certifier starts with unit wealth, places a bet on each logged episode and certifies
once its wealth reaches $1/\delta$. The bets must be fair or unfavourable under every
environment in the null, so that by Ville's inequality wealth rarely reaches $1/\delta$
when $\pi$ is not better. The null is large: it contains every environment in which
$\pi$ does not improve on $\mu$, including environments that differ from the true one on
histories the log never reaches. Because $\mu$ is known, however, no environment can
change how the logged actions were chosen. The bettor therefore has two kinds of admissible
bets: a stake on the importance-weighted advantage $X$, which pays off when $\pi$ is
better, and fair bets on the logged actions themselves, at the odds $\mu$ assigned to
them. Fair bets earn nothing on average under any null
environment, but they can offset the losses of the advantage bet on bad outcomes, which
lets the bettor stake more while keeping its wealth nonnegative. We show that the optimal bettor combines both, and that its wealth grows at rate
$\Kstar$, the KL divergence from the logged law to the nearest null law that leaves the
logged action probabilities unchanged. The control variates of DR estimators are fair
bets of this kind, chosen heuristically. They therefore act as hedges rather than as
better estimators, which explains both why they help and why no estimator can beat the
optimal combination. How far the bettor can lever its stake is limited by coverage, that
is, by how much of $\pi$'s behaviour the log reveals. The largest safe stake is a
superhedging price, computed by backward induction over the tree. \Cref{fig:overview}
illustrates the setting and the geometry of $\Kstar$.

Informally, our main result is as follows. For every logged law $P$ under which $\pi$ improves on $\mu$, every $\delta$-sound
certifier needs at least $\log(1/\delta)/\Kstar(P)$ episodes in expectation, at every
$\delta$. A plug-in Kelly certifier attains this bound to first order as $\delta\to0$
when $0<\Kstar(P)<\infty$. Coverage decides whether $\Kstar(P)$ is zero, finite or
infinite.

\paragraph{Contributions.}
We model an episode as a path through a finite outcome tree whose action nodes are
controlled by the known logger $\mu$. The null hypothesis contains every environment on
the tree under which $\pi$ does not improve on $\mu$.
\begin{enumerate}
\item In \cref{sec:exponent} we define the certification exponent
  $\Kstar(P)=\inf\{\KL(P\|q):\allowbreak q \text{ action-neutral},\allowbreak \E_q X\le 0\}$, where
  $X=(W-1)R$ is the importance-weighted advantage of an episode. A trade-off inequality
  shows that any stopping time with finite mean under $P$ stops under a null $q$ with
  probability at least $\exp(-\E_P[\tau]\KL(P\|q))$. Hence every $\delta$-sound
  sequential certifier has $\E_P[\tau_\delta]\ge\log(1/\delta)/\Kstar$,
  non-asymptotically and without a power condition (\cref{thm:lower}). By duality,
  $\Kstar$ is the optimal growth rate of a bettor holding a claim $\lambda X$ who may
  also place fair bets on every logged action at odds $\mu$ (\cref{thm:duality}). The
  maximal leverage is a superhedging price computed by backward induction
  (\cref{prop:superhedge}).
\item \Cref{sec:achieve} gives matching certifiers. A predictable Kelly certifier
  attains $\E_P[\tau_\delta]=(1+o(1))\log(1/\delta)/\Kstar$ (\cref{thm:plugin}). When
  $\Kstar<\infty$ it also satisfies the non-asymptotic bound
  $\E_P[\tau_\delta]\le\log(1/\delta)/\Kstar+O(\sqrt{\log(1/\delta)\log\log(1/\delta)})$,
  with constants given in terms of $\Kstar$, the support of $P$ and the least-favourable
  law (\cref{thm:plugin-explicit}). The plug-in fits its bet to the maximum-likelihood
  \emph{logged-law projection} of the empirical law, which exploits the known $\mu$;
  without the projection it overfits and is $15\times$ slower in our experiments. When
  the null contains a full-support law, a universal-portfolio mixture has finite-sample
  tail and expectation bounds (\cref{thm:mixture}). Its exact form sharpens the
  expected-time remainder to $(d/\Kstar)\log\log(1/\delta)+C_P$
  (\cref{thm:mixture}(v)); with \cref{thm:lower}, this pins the first-order constant from
  both sides. Averaging the two wealths keeps both
  guarantees at a cost of $\log 2$ (\cref{cor:hedge}).
\item We show that coverage decides the regime (\cref{sec:coverage}). The advantage $\Delta$
  and the superhedging price $m$ place each instance in one of three regimes. If
  $\Kstar=0$, no $\delta$-sound certifier certifies with probability above $\delta$. The
  case $0<\Kstar<\infty$ is equivalent to the null containing a full-support law. If
  $\Kstar=\infty$, an arbitrage certifier is exactly sound and stops after $1/P(G>0)$
  episodes in expectation (\cref{prop:trichotomy}). Without positivity of $\mu$,
  certification is possible if and only if $\pi$ beats $\mu$ after every unlogged action
  of $\pi$ is assigned the worst reward, even when $V(\pi)>V(\mu)$
  (\cref{prop:positivity}). Non-asymptotically, $\Kstar$ lies between two multiples of
  $\Delta^2/2\sigeff$ (\cref{prop:sandwich}).
\item We derive several consequences of the exponent (\cref{sec:consequences}). Replacing $X$ by any
  DR or per-decision IS payoff leaves $\Kstar$ unchanged, and every sound certificate
  built on a per-episode statistic $S$ needs at least $\log(1/\delta)/K_S$ episodes in expectation,
  with $K_S\le\Kstar$ (\cref{thm:invariance}). As the advantage vanishes, $\Kstar$ tends
  to the semiparametric efficiency bound $\Delta^2/2\sigeff$, under regularity
  conditions on $P$ alone (\cref{thm:local}). Away from that limit the ratio
  $\Kstar/(\Delta^2/2\sigeff)$ ranges from $0.014$ to $19.4$ on random trees, so the
  efficiency bound can be off by more than an order of magnitude in either direction.
  The coverage ceiling $\Kstar\le P(A)\log\frac{V(\pi)-r_A}{V(\mu)-r_A}$, where $A$ is
  the event that $\pi$ could have produced the logged actions (\cref{thm:ceiling}),
  recovers the exponential rates of \citet{jajoo2026hardness} and
  \citet{luo2026revealing} for one-sided certification. Structural assumptions are
  linear constraints whose value $\Kstar_F/\Kstar$ is computable. Any certifier that
  turns an assumption into a speed-up certifies falsely at the least-favourable law with
  probability about $\delta^{\Kstar/\Kstar_F}\gg\delta$ (\cref{thm:structure}).
\item We extend certification to trees that cannot be enumerated (\cref{sec:local}). Feasibility
  of a bet reduces to a sequence of local conditions driven by the superhedging price
  (\cref{thm:cushion}). Any supersolution of the price recursion, including a closed
  form that needs only per-step ratio bounds (\cref{lem:ratio}), gives a sound certifier
  that costs $O(T|A|)$ per episode for arbitrary observation spaces
  (\cref{prop:scalable}).
\item We test the theory (\cref{sec:experiments}) on memory tasks,
  the Tiger POMDP \citep{kaelbling1998planning}, random trees and the published hard
  instances. The strongest baselines are scalar betting certificates: scalar Kelly and
  the betting confidence sequence of \citet{waudbysmith2024estimating} on the PDIS and
  DR payoffs. The best of them needs $1.05$--$2.65\times$ the plug-in's episodes on
  memory trees and $2.8$--$4.1\times$ on Tiger ($H\le5$). On shallow random trees a
  fitted DR bet can be faster than the plug-in (\cref{sec:random-comparators}).
  Doubling-grid certificates in the style of HCOPE, logarithmic smoothing and empirical
  Bernstein need $9$--$52\times$ more episodes, or fail. We check soundness at the
  least-favourable null on 21 instances, report the wall-clock and memory cost of every
  certifier (\cref{sec:cost}), and split the scalable certifier's gap to $\Kstar$ into
  supersolution, gambler-class and online-learning factors (\cref{tab:matched}).
\end{enumerate}

The rest of the paper is organized as follows. \Cref{sec:related} reviews related work,
and \cref{sec:setting} introduces logged trees, action-neutral laws and soundness.
\Cref{sec:exponent} proves the lower bound and the duality, classifies instances by
coverage (\cref{sec:coverage}) and relates $\Kstar$ to the efficiency bound.
\Cref{sec:achieve} gives matching certifiers.
\Cref{sec:consequences} derives estimator invariance, the coverage
ceiling and the price of structural assumptions. \Cref{sec:local} develops the scalable
cushion certifier, \cref{sec:experiments} reports experiments, and \cref{sec:conclusion}
concludes. Proofs and additional experiments are in the appendix.

\begin{figure}[t]
\centering
\includegraphics{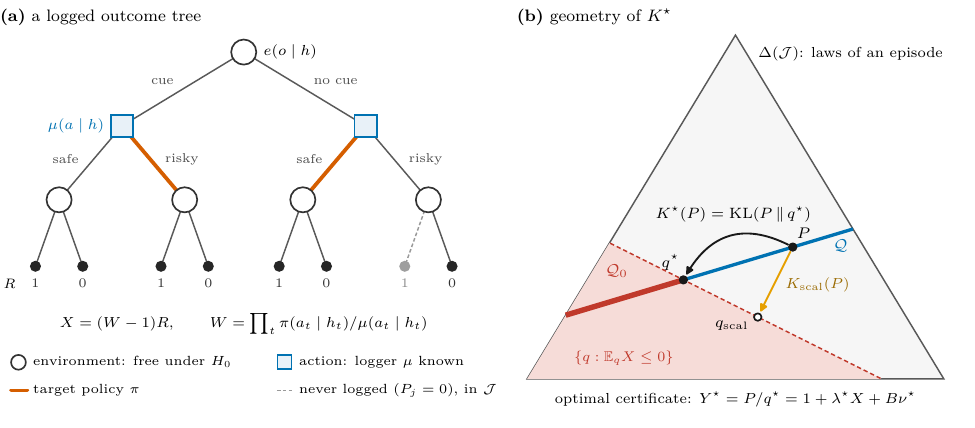}
\caption{Setting and main object. (a)~A logged outcome tree (the memory
task with $d=0$). Environment nodes (circles) have kernels left free by the null; at
action nodes (squares) the logger $\mu$ is known. Thick edges are the actions of the
target $\pi$. The grey leaf has zero logged probability but remains in the outcome space
$\J$, so every certificate must stay nonnegative on it. (b)~Schematic, not to scale, of
\cref{thm:duality}. $\Qs$ is the set of action-neutral laws and $\Qn$ its part with
$\E_qX\le0$. $\Kstar(P)$ is the divergence from $P$ to $\Qn$, attained at $q^\star$, and
the optimal certificate $P/q^\star$ is affine in the payoff $X$ and the fair bets $B$.
The scalar exponent $\Kscal$ is the divergence to the larger set $\{q:\E_qX\le0\}$, so
$\Kscal\le\Kstar$.}
\label{fig:overview}
\end{figure}

\section{Related work}
\label{sec:related}

Our work draws on sequential testing, testing by betting and off-policy evaluation. Relative
to these literatures, the new ingredient is the null class: with a known logger, the
action-neutral laws form a polytope, and this structure produces the fair bets $B$ and the
superhedging price $m$ from which our results follow. In particular, HCOPE
\citep{thomas2015hcope} applies a scalar concentration bound to $X$ and, in our terms,
targets the scalar exponent $\Kscal$ rather than $\Kstar$.

\paragraph{Sequential testing and instance-optimal lower bounds.} \Cref{thm:lower} uses the
change-of-measure argument of \citet{garivier2016optimal} and
\citet{kaufmann2016complexity}. $\Kscal$ is the $\KL_{\inf}$ of
\citet{honda2010asymptotically} applied to $X$. \citet{sethi2026markovian} characterize
optimal sequential tests with Markovian data under the same KL direction, but without an
action layer or a value inequality. Our contribution is the alternative class of
action-neutral laws, which may change any environment kernel, including kernels at
unvisited histories, but never the logged action conditionals, and its consequences.
The explicit plug-in bound of \cref{thm:plugin-explicit} follows the good-event plus
concentration template of track-and-stop analyses \citep{garivier2016optimal} and of
$\KL_{\inf}$-based identification with nonparametric arms \citep{agrawal2021optimal}; what
is specific here is that the plug-in is a logged-law projection and its loss is controlled
by a sandwich in $\ell_1$ that holds without any interior assumption.

\paragraph{Coverage and partial identification.} Without positivity the value of the
target is only partially identified, and the worst case sets uncovered rewards to the
bottom of their range, as in the bounds of \citet{manski1990nonparametric}.
\Cref{prop:positivity} shows that this worst case is exactly what the sequential problem
sees: the exponent is that of the identified lower value, and certification is impossible
when that value does not exceed the logger's. \Cref{prop:trichotomy} separates the
complementary regime, in which coverage is so strong that a single episode can be a
certificate.

\paragraph{E-values, growth-rate optimality and betting.} For an arbitrary composite null
and a simple alternative, \citet{larsson2025numeraire} prove that a log-optimal
(numeraire) e-variable exists and characterize it through the reverse information
projection; \citet{lardy2024reverse} extend this theory to cases where the infimal KL is
infinite. \Cref{thm:duality} gives the
closed-form instance of that theory for the polytope null $\Qn$ induced by a known
history-dependent logger. In this instance the optimal e-variable is affine,
$1+\lambda X+B\nu$, with $\nu$ a vector of fair bets on logged actions, and its feasible
set is the superhedging set of \cref{prop:superhedge}. This
explicit form is what the plug-in, the invariance theorem and the cushion certifier use.
\Cref{thm:duality} also specializes the growth-rate-optimal e-variables of
\citet{grunwald2024safe}, and
\cref{rem:bipolar} explains why the bipolar gap of \citet{ram2026optimal} and the
counterexamples of \citet{kitamura2012asymptotic} do not arise. The specialization adds the
explicit family of e-variables (\cref{lem:evariables}), the superhedging recursion
(\cref{prop:superhedge}) and the strong duality analysis including the infinite case
(\cref{app:duality}). The recursion is the discrete-time superhedging price of an
incomplete market \citep{elkaroui1995dynamic}, with the logged actions as the traded
assets. Testing by betting is surveyed by \citet{shafer2021testing} and
\citet{ramdas2023gametheoretic}. Split and running-MLE likelihood-ratio tests
\citep{wasserman2020universal} are also valid for composite nulls; on a finite tree
their denominator is a likelihood maximization over $\Qn$, the same convex program the
plug-in solves. The universal
mixture is Cover's universal portfolio
\citep{cover1991universal}, used as in \citet{waudbysmith2024estimating} and
\citet{orabona2024tight}; see \citet{kaufmann2021mixture} for mixture martingales in
sequential testing.

\paragraph{Anytime-valid off-policy evaluation.} \citet{karampatziakis2021offpolicy} and
\citet{waudbysmith2024anytime} construct betting confidence sequences for off-policy values
in contextual bandits, including doubly robust corrections. \citet{karampatziakis2020empirical}
use empirical likelihood, which is dual to a KL tilt, for bandit OPE.
\citet{sandoval2026multiarmed} prove Kelly optimality for multi-armed sequential testing.
Our setting is multi-step and history-dependent, with a known logger. Our certifier's rate
is asymptotically instance-optimal against the action-neutral class, and the logged-law projection is
what makes the plug-in practical.

\paragraph{Scalar high-confidence OPE and safe policy improvement.} HCOPE and HCPI
\citep{thomas2015hcope,thomas2015hcpi}, logarithmic smoothing \citep{sakhi2024logarithmic},
universal OPE \citep{chandak2021universal} and SPIBB \citep{laroche2019spibb} bound a
scalar estimator or restrict policy updates to well-covered regions. \Cref{thm:invariance}
shows that any certificate based on a per-episode statistic of the same log has exponent at
most $\Kstar$, and \cref{sec:experiments} measures the gap.

\paragraph{Semiparametric efficiency.} \citet{jiang2016doubly} and \citet{kallus2020double}
give the local efficiency theory of DR estimators. \Cref{thm:local} identifies the
efficiency bound as the small-advantage limit of $\Kstar$. Empirical likelihood has a
similar large-deviation optimality for moment restrictions \citep{kitamura2001asymptotic},
subject to the caveats of \citet{kitamura2012asymptotic}.

\paragraph{History-dependent OPE and the curse of memory.} \citet{tennenholtz2020off}
study off-policy evaluation in partially observable environments.
\citet{zhang2025tractability}
identify the curse of memory for history-dependent off-policy evaluation in POMDPs.
\citet{jajoo2026hardness} and \citet{luo2026revealing} prove worst-case exponential
evaluation lower bounds under a known history-dependent logger. These works establish the
phenomenon. \Cref{thm:ceiling} complements them with an instance-dependent ceiling for
one-sided certification, and \cref{tab:hard} computes $\Kstar$ on their constructions.

\section{Setting}
\label{sec:setting}

We first fix the objects: the tree of possible episodes, the class of environments that
could have generated the log, and what it means for a certifier to be sound.

\paragraph{Logged trees.} An episode alternates environment outcomes and logged actions,
\[
\tau=(o_1,a_1,o_2,a_2,\dots,o_T,a_T,o_{T+1}),\qquad h_t=(o_1,a_1,\dots,o_t).
\]
Rewards are part of the outcomes, and the total reward $R(\tau)$ lies in $[0,b]$. All
possible episodes form a finite rooted tree. Its non-leaf nodes are \emph{environment
nodes}, where an outcome is drawn, and \emph{action nodes}, where the logger acts. The tree
includes every outcome the environment could produce, including outcomes that never
appear in the logs. We write $\J$ for the set of leaves (complete episodes), $ha$ for the
child of an action node $h$ reached by action $a$, and $h\preceq j$ if node $h$ lies on
the path to leaf $j$. The logger $\mu(a\mid h)>0$ is known and its propensities are
recorded. The target $\pi(a\mid h)$ is fixed. Both may depend on the whole history. Let
\[
W(\tau)=\prod_{t=1}^{T}\frac{\pi(a_t\mid h_t)}{\mu(a_t\mid h_t)},\qquad
X(\tau)=\bigl(W(\tau)-1\bigr)R(\tau).
\]

\paragraph{Environments and logged laws.} An environment $e$ assigns a distribution over
outcomes to every environment node. The class $\Es$ contains \emph{all} such assignments
on the tree, so environments may be history-dependent and partially observable. Each
$e\in\Es$ and the logger $\mu$ together induce a law $P_e$ on $\J$. Since $\E_{P_e}[WR]$
is the value of $\pi$,
\[
\Delta(P_e) := V_e(\pi)-V_e(\mu)=\E_{P_e}[WR]-\E_{P_e}[R]=\E_{P_e}[X].
\]
Episodes are i.i.d.\ draws from $P:=P_{e_0}$ for an unknown true environment $e_0$.

\begin{lemma}[Logged laws form a polytope]
\label{lem:polytope}
Let $B\in\R^{\J\times N}$ have one column for each pair $(h,a)$ of an action node and an
action, with entries
$B_{j,(h,a)}=\1\{ha\preceq j\}-\mu(a\mid h)\1\{h\preceq j\}$. A distribution $q$ on
$\J$ equals $P_e$ for some $e\in\Es$ if and only if $B^\top q=0$. We call such laws
\emph{action-neutral} and write $\Qs=\{q\in\Delta(\J):B^\top q=0\}$.
\end{lemma}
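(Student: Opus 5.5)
The plan is to prove both directions through node marginals. For any distribution $q$ on $\J$ and any node $h$, write $q(h):=\sum_{j:\,h\preceq j} q_j$ for the probability that a $q$-random leaf passes through $h$. The column $(h,a)$ of $B^\top q$ is then exactly $q(ha)-\mu(a\mid h)\,q(h)$. So $B^\top q=0$ says precisely that at every action node the mass flowing to each child is $\mu(a\mid h)$ times the mass reaching $h$. The lemma thus asserts that a leaf law comes from some environment if and only if its induced flow respects $\mu$ at every action node.

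\emph{Necessity.} Take $q=P_e$. The leaf probability factorises along the path as a product of environment kernel values $e(o\mid h)$ at environment nodes and logger probabilities $\mu(a\mid h)$ at action nodes. Summing over leaves below a node, by induction from the leaves upward, gives $P_e(h)$ as the product of the factors along the path from the root to $h$. In particular $P_e(ha)=P_e(h)\,\mu(a\mid h)$, so every column of $B^\top P_e$ vanishes.

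\emph{Sufficiency.} Suppose $B^\top q=0$, and define an environment $e$ node by node. At an environment node $h$ with $q(h)>0$, set $e(o\mid h):=q(ho)/q(h)$; this is a distribution because the children of $h$ partition the leaves below $h$. At an environment node with $q(h)=0$, choose $e(\cdot\mid h)$ arbitrarily; this freedom is allowed because $\Es$ contains all assignments. Then I would show by induction down the tree that $P_e(h)=q(h)$ for every node.
\begin{itemize}
\item The root has mass $1$ under both laws.
\item At an environment child $ho$: if $q(h)>0$, then $P_e(ho)=q(h)\,e(o\mid h)=q(ho)$. If $q(h)=0$, then both sides are $0$, since $q(ho)\le q(h)$ and $P_e(ho)\le P_e(h)=0$.
\item At an action child $ha$: $P_e(ha)=\mu(a\mid h)\,P_e(h)=\mu(a\mid h)\,q(h)=q(ha)$, where the last equality is the constraint $B^\top q=0$.
\end{itemize}
Applying this at the leaves gives $P_e=q$.

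The only delicate points are the zero-mass nodes, where the conditional kernel is undefined, and noting that action-node constraints at zero-mass nodes hold trivially. Both are handled by the arbitrary choice above together with the bound $q(\text{child})\le q(\text{parent})$. The lemma's name then follows because $\Qs$ is the intersection of the simplex $\Delta(\J)$ with a linear subspace.
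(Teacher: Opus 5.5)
Your proof is correct and follows the same route as the paper. You identify the $(h,a)$ column of $B^\top q$ as $q(ha)-\mu(a\mid h)\,q(h)$, define $e(o\mid h)=q(ho)/q(h)$ (chosen arbitrarily at zero-mass nodes), and verify $P_e=q$ by induction down the tree, writing out explicitly the zero-mass and action-child cases that the paper's one-line induction leaves implicit.
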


\begin{proof}
Write $q(h)=\sum_{j\succeq h}q_j$. The condition $B^\top q=0$ states
$q(ha)=\mu(a\mid h)q(h)$ at every action node, that is, actions are drawn from $\mu$.
This is necessary. Conversely, if it holds, define $e$ at each environment node $h$ by
$e(o\mid h)=q(ho)/q(h)$, chosen arbitrarily when $q(h)=0$. Then $P_e=q$ by induction
along each path.
\end{proof}

The columns of $B$ have a direct reading. For $\nu\in\R^N$,
\begin{equation}
\label{eq:fairbets}
(B\nu)(\tau)=\sum_{t=1}^{T}\Bigl[\nu(h_t,a_t)-\sum_{a}\mu(a\mid h_t)\,\nu(h_t,a)\Bigr]
\end{equation}
is the total payoff of a sequence of \emph{fair bets on the logged action at odds $\mu$}.
Every control variate or doubly robust correction with known propensities has this form,
and $\E_q[B\nu]=0$ for every $q\in\Qs$.

\paragraph{Certifiers.} The null set is
$\Qn=\{q\in\Qs:\E_q X\le 0\}$. By \cref{lem:polytope} it is exactly the set of logged
laws of environments under which $\pi$ does not improve on $\mu$.

\begin{definition}[Soundness]
A test $\phi_n:\J^n\to\{0,1\}$ is \emph{$\delta$-sound} if $q^{\otimes n}(\phi_n=1)\le\delta$
for every $q\in\Qn$. A \emph{sequential certifier} is a stopping time $\tau_\delta$ for
the filtration generated by the episodes, possibly enlarged by independent
randomization. It is $\delta$-sound if $q^{\otimes\infty}(\tau_\delta<\infty)\le\delta$ for
every $q\in\Qn$.
\end{definition}

\begin{definition}[Certification exponent]
\label{def:exponent}
For $P\in\Qs$, let
\[
\Kstar(P)=\inf\{\KL(P\,\|\,q):q\in\Qn\},
\]
with $\inf\emptyset=\infty$. For comparison, the \emph{scalar exponent}
$\Kscal(P)=\sup\{\E_P\log(1+\lambda X):\lambda\ge0,\ 1+\lambda X\ge0 \text{ on } \J\}$ is the
optimal growth rate of betting on $X$ alone. By the duality of
\citet{honda2010asymptotically}, it is the $\KL_{\inf}$ of the law of $X$ against all
laws on the range of $X$ with non-positive mean.
\end{definition}

The direction of the divergence ($P$ first) is the one that governs test power. The
alternative $q$ may change any environment kernel, including kernels at histories that
$P$ never visits, but it can never change the logged action conditionals. This is what
separates $\Kstar$ from both the bandit $\KL_{\inf}$ \citep{honda2010asymptotically} and
an unconstrained divergence between episode laws.

\section{The certification exponent}
\label{sec:exponent}

\subsection{Lower bound}

\begin{theorem}[Lower bound]
\label{thm:lower}
Let $\tau$ be any stopping time for the episode filtration, possibly enlarged by
independent randomization, with $\E_P[\tau]<\infty$. Then for every law $q$ on $\J$,
\begin{equation}
\label{eq:tradeoff}
q^{\otimes\infty}(\tau<\infty)\ \ge\ \exp\bigl(-\E_P[\tau]\,\KL(P\,\|\,q)\bigr).
\end{equation}
Consequently:
\begin{enumerate}
\item[(i)] \emph{Sequential certifiers.} Every $\delta$-sound sequential certifier
  satisfies, for every $\delta\in(0,1)$ and every logged law $P\in\Qs$ under which $\pi$
  improves on $\mu$ (with the convention $\log(1/\delta)/0=\infty$ when $\Kstar(P)=0$),
  \[
  \E_P[\tau_\delta]\ \ge\ \frac{\log(1/\delta)}{\Kstar(P)}.
  \]
  The bound is non-asymptotic, has leading constant one and needs no power condition.
  If $\E_P[\tau_\delta]=\infty$ it holds trivially; otherwise $P(\tau_\delta<\infty)=1$.
\item[(ii)] \emph{Fixed-sample tests.} If $\phi_n$ is a $\delta$-sound test with
  $P^{\otimes n}(\phi_n=1)\ge1-\beta>\delta$, then
  $n\,\Kstar(P)\ge\kl(1-\beta,\delta)\ge(1-\beta)\log(1/\delta)-\log2$, where $\kl$ is the
  binary relative entropy; the first inequality uses that $y\mapsto\kl(x,y)$ is
  nonincreasing on $y\le x$.
\item[(iii)] \emph{Speed costs soundness.} If $\Kstar(P)<\infty$ (so that the minimizer
  $q^\star$ exists) and $q^\star$ is the
  least-favourable law of \cref{thm:duality}, then every stopping time with
  $\E_P[\tau]<\infty$, sound or not, satisfies
  $q^{\star\otimes\infty}(\tau<\infty)\ge\exp(-\E_P[\tau]\,\Kstar(P))$.
\end{enumerate}
\end{theorem}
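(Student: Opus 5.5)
The plan is to prove the trade-off inequality \eqref{eq:tradeoff} by a change of measure applied to the log-likelihood ratio at the stopping time. After that, (i)--(iii) follow by choosing $q$ suitably and optimizing.

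\emph{Trivial cases.} If $\KL(P\|q)=\infty$ the right side of \eqref{eq:tradeoff} is $0$ and there is nothing to prove. Otherwise $P\ll q$ on $\J$. Since $\E_P[\tau]<\infty$, we have $P(\tau<\infty)=1$.

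\emph{Change of measure.} Let $L_\tau=\sum_{i\le\tau}\log\bigl(P(J_i)/q(J_i)\bigr)$. The independent randomization has the same law under both measures, so it contributes nothing to the ratio. On $\{\tau<\infty\}$ the Radon--Nikodym derivative of $q^{\otimes\infty}$ with respect to $P^{\otimes\infty}$, restricted to $\mathcal F_\tau$, is $e^{-L_\tau}$; this uses $P\ll q$. Hence
\[
q^{\otimes\infty}(\tau<\infty)\ \ge\ \E_P\bigl[e^{-L_\tau}\1\{\tau<\infty\}\bigr]=\E_P\bigl[e^{-L_\tau}\bigr].
\]
The inequality allows for $q$-mass that $P$ does not see. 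Jensen's inequality then gives
\[
\E_P\bigl[e^{-L_\tau}\bigr]\ \ge\ \exp\bigl(-\E_P[L_\tau]\bigr).
\]
Finally, Wald's identity gives $\E_P[L_\tau]=\E_P[\tau]\,\KL(P\|q)$.

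\emph{Main obstacle: justifying Wald.} Wald's identity needs the increments to be integrable, which is delicate because $\log(P/q)$ can be unbounded. On a finite tree with $P\ll q$ the increments take finitely many values on $\supp P$, so they are bounded and integrable. Together with $\E_P[\tau]<\infty$, this makes the standard Wald identity applicable. Randomization enlarges the filtration but keeps the episodes i.i.d.\ and independent of the past, so the identity still holds.

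\emph{Part (i).} If $\E_P[\tau_\delta]<\infty$, apply \eqref{eq:tradeoff} to any $q\in\Qn$ and use soundness:
\[
\delta\ \ge\ \exp\bigl(-\E_P[\tau_\delta]\,\KL(P\|q)\bigr).
\]
Rearranging gives $\E_P[\tau_\delta]\,\KL(P\|q)\ge\log(1/\delta)$. Taking the infimum over $q\in\Qn$ yields $\E_P[\tau_\delta]\,\Kstar\ge\log(1/\delta)$. If $\Kstar=0$, the product is $0<\log(1/\delta)$, a contradiction, so $\E_P[\tau_\delta]=\infty$, consistent with the stated convention. If $\E_P[\tau_\delta]=\infty$ the bound holds trivially.

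\emph{Part (ii).} Use the data-processing inequality for the fixed-sample test $\phi_n$:
\[
n\KL(P\|q)=\KL\bigl(P^{\otimes n}\|q^{\otimes n}\bigr)\ \ge\ \kl\bigl(P^{\otimes n}(\phi_n=1),\,q^{\otimes n}(\phi_n=1)\bigr).
\]
We have $P^{\otimes n}(\phi_n=1)\ge1-\beta$ and $q^{\otimes n}(\phi_n=1)\le\delta<1-\beta$. By the stated monotonicity of $\kl$ in each argument on the relevant side, the right side is at least $\kl(1-\beta,\delta)$. Taking the infimum over $q\in\Qn$ gives $n\Kstar\ge\kl(1-\beta,\delta)$. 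The elementary bound
\[
\kl(x,y)\ \ge\ x\log(1/y)-h(x)\ \ge\ x\log(1/y)-\log 2,
\]
where $h$ is the binary entropy, then gives the final inequality.

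\emph{Part (iii).} Apply \eqref{eq:tradeoff} with $q=q^\star$. Its existence as a minimizer follows from \cref{thm:duality}, or from compactness of $\Qn$ and lower semicontinuity of $\KL(P\|\cdot)$. Since $\KL(P\|q^\star)=\Kstar(P)$, the bound is immediate.
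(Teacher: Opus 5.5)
Your proof is correct, and it reaches \eqref{eq:tradeoff} by a slightly different route from the paper's.

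The paper never works at $\tau$ itself. It truncates to the bounded stopping times $\tau\wedge N$ and applies the Garivier--Kaufmann inequality $\E_P[\tau\wedge N]\,\KL(P\|q)\ge\kl\bigl(P(\tau\le N),q(\tau\le N)\bigr)$, which is Wald's identity plus data processing on $\{\tau\le N\}$. It then lets $N\to\infty$.

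You stay at the unbounded $\tau$ and state the change of measure as an inequality, $q^{\otimes\infty}(\tau<\infty)\ge\E_P[e^{-L_\tau}]$. This absorbs both the $q$-mass on $\{\tau=\infty\}$ and any $q$-mass outside $\supp P$. You then apply Jensen and Wald.

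In substance the two coincide: Jensen on an event of $P$-probability one is data processing with $\kl(1,y)=\log(1/y)$. Your version, however, needs no limiting step. It also does not need the stopping time to be finite under both laws, as the cited lemma requires; that requirement is why the paper truncates. In exchange, the paper's route gives the finite-$N$ inequality for every stopping time, with no assumption on $\E_P[\tau]$. That inequality also yields power-dependent bounds for certifiers that need not stop almost surely.

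Your integrability check for Wald (bounded increments on $\supp P$) matches the paper's. In part (ii) you rightly use monotonicity of $\kl$ in both arguments, whereas the paper mentions only the $y$-direction.

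One wording point: $q\ll P$ need not hold, so $e^{-L_\tau}$ is the density of the $P$-absolutely continuous part of $q$ on $\mathcal F_\tau\cap\{\tau<\infty\}$, not a full Radon--Nikodym derivative. It is $P\ll q$ that makes $L_\tau$ finite. The inequality you actually wrote is the correct statement, so only the description needs adjusting.
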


\begin{proof}[Proof sketch]
Wald's identity and the data-processing inequality applied to the event $\{\tau\le N\}$, as in
\citet[Lemma~1]{garivier2016optimal}, give \eqref{eq:tradeoff} as $N\to\infty$. Parts (i) and
(ii) follow by taking the infimum over $q\in\Qn$, and (iii) by taking $q=q^\star$. Full
proof in \cref{app:lower}.
\end{proof}

Part (i) removes any power or asymptotic qualification. A certifier that fails to
certify with positive probability under $P$ has $\E_P[\tau_\delta]=\infty$, so finite
expected stopping time already means certifying almost surely. Part (iii) is the sharper
statement behind (i). A certifier that stops faster than $\log(1/\delta)/\Kstar$ in
expectation must certify falsely with probability greater than $\delta$, and it does so at
one identifiable null, the least-favourable environment $q^\star$. \Cref{thm:structure}
uses this to show that every certifier exploiting a structural assumption pays for its
speed with soundness at $q^\star$.

\subsection{Duality: the optimal certificate}

\begin{lemma}[Every e-variable is dominated by a linear bet]
\label{lem:evariables}
Assume $\Qn\neq\emptyset$, and let $Y:\J\to[0,\infty)$ satisfy
$\sup_{q\in\Qn}\E_q Y\le1$. Then there are $\lambda\ge0$ and $\nu\in\R^N$ with
$Y\le1+\lambda X+B\nu$ on every leaf.
\end{lemma}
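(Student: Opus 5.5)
This is finite-dimensional LP duality over the polytope $\Qn$. Since $\J$ is finite and $\Qn=\{q\in\R^{\J}: q\ge0,\ \1^\top q=1,\ B^\top q=0,\ X^\top q\le 0\}$ is a nonempty compact polyhedron, the hypothesis $\sup_{q\in\Qn}\E_q Y\le1$ says that the primal linear program
\[
\max_{q}\ Y^\top q \quad\text{s.t.}\quad q\ge0,\ \1^\top q=1,\ B^\top q=0,\ X^\top q\le0
\]
is feasible and has optimal value at most $1$. I would write down its LP dual explicitly, with multipliers $c\in\R$ for $\1^\top q=1$, $\nu\in\R^N$ for $B^\top q=0$ (free, since these are equalities), and $\lambda\ge0$ for $X^\top q\le0$. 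The dual reads
\[
\min_{c,\nu,\lambda}\ c \quad\text{s.t.}\quad c\1+\lambda X+B\nu\ \ge\ Y \text{ on } \J,\ \lambda\ge0.
\]
The sign of $\nu$ can be flipped freely. By weak duality every dual-feasible point has $c\ge Y^\top q$ for all $q\in\Qn$, which is the easy direction.

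The key step is strong duality. Because the primal is feasible ($\Qn\ne\emptyset$) and bounded (value $\le1$, indeed finite since $q$ ranges over a compact set), LP strong duality gives a dual optimal $(c^\star,\lambda^\star,\nu^\star)$ that is attained and satisfies $c^\star=\max_{q\in\Qn}Y^\top q\le1$. Attainment holds automatically for linear programs with finite optimum, so no Slater-type interiority condition is needed. This is the point I would emphasize: it is exactly why the polyhedral structure of the null from \cref{lem:polytope} matters, and it avoids the bipolar-gap issues arising in general convex nulls.

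It then remains to show $c^\star\le1$ yields $Y\le 1+\lambda^\star X+B\nu^\star$. Since $c^\star\le1$, we have $1\cdot\1\ge c^\star\1$, so $Y\le c^\star+\lambda^\star X+B\nu^\star\le 1+\lambda^\star X+B\nu^\star$ leafwise, with $\lambda^\star\ge0$. Nonnegativity of $Y$ is not even needed for the inequality; it matters only for interpreting the right-hand side as a nonnegative bet. The main obstacle is purely bookkeeping: I need to derive the dual correctly, in particular that the equality constraints give unrestricted $\nu$, that the inequality $X^\top q\le0$ gives $\lambda\ge0$ with the correct orientation, and that the dual constraint direction is leafwise $\ge Y$ because $q\ge0$. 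Alternatively, I could phrase this via Farkas' lemma applied to the system $q\ge0$, $\1^\top q=1$, $B^\top q=0$, $X^\top q\le0$, $Y^\top q>1$ being infeasible, which gives the same certificate directly.
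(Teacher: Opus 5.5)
Your proposal is correct and is essentially the paper's own proof. The paper likewise writes $\sup_{q\in\Qn}\E_qY$ as a feasible, bounded linear program and invokes strong LP duality to obtain $\min\{c: c\1+\lambda X+B\nu\ge Y,\ \lambda\ge0\}$ with optimal $c\le1$. Your explicit derivation of the dual's sign conventions and the final step of replacing $c^\star$ by $1$ fill in bookkeeping that the paper leaves implicit.
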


\begin{proof}
$\sup_{q\in\Qn}\E_qY$ is the linear program
$\max\{Y^\top q: q\ge0,\ \1^\top q=1,\ B^\top q=0,\ X^\top q\le0\}$. It is feasible and
bounded, so by strong LP duality its value equals
$\min\{c: c\1+\lambda X+B\nu\ge Y,\ \lambda\ge0\}$, and the optimal $c$ is at most $1$.
\end{proof}

Let $\Ys=\{Y=1+\lambda X+B\nu:\lambda\ge0,\ \nu\in\R^N,\ Y\ge0\text{ on every leaf of }\J\}$
be the set of \emph{feasible bets}. Every $Y\in\Ys$ satisfies
$\E_qY=1+\lambda\E_qX\le1$ for $q\in\Qn$, so it is an e-variable for $\Qn$. By
\cref{lem:evariables}, these are all of them, up to domination.

\begin{theorem}[Duality]
\label{thm:duality}
For every $P\in\Qs$,
\[
\Kstar(P)=\sup_{Y\in\Ys}\E_P\log Y
=\sup\Bigl\{\E_P\log(1+\lambda X+B\nu):\lambda\ge0,\ 1+\lambda X+B\nu\ge0\text{ on }\J\Bigr\}.
\]
If $\Kstar(P)<\infty$, the supremum is attained at some $Y^\star=1+\lambda^\star X+B\nu^\star$.
The infimum defining $\Kstar$ is attained at a law $q^\star$ with
$q^\star_j=P_j/Y^\star_j$ on $\supp P$. Any further mass of $q^\star$ lies on leaves
where $Y^\star=0$. Moreover $\lambda^\star\E_{q^\star}X=0$ and $\KL(P\|q^\star)=\E_P\log Y^\star$.
\end{theorem}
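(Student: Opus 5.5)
The argument has two halves, weak duality and strong duality, followed by the attainment statements.

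\emph{Weak duality.} Fix $Y=1+\lambda X+B\nu\in\Ys$ and $q\in\Qn$ with $\KL(P\|q)<\infty$, so $P\ll q$. Write $\E_P\log Y=\E_P\log(P/q)+\E_P\log(qY/P)$ on $\supp P$. Jensen gives
\[
\E_P\log(qY/P)\le\log\sum_{j\in\supp P}q_jY_j\le\log\E_qY\le0,
\]
because $Y\ge0$ and $\E_qY=1+\lambda\E_qX\le1$. Hence $\sup_{\Ys}\E_P\log Y\le\Kstar(P)$. When $\Qn=\emptyset$ we have $\Kstar=\infty$. In that case the LP of \cref{lem:evariables} is infeasible, and Farkas' lemma produces $\lambda\ge0,\nu$ with $\lambda X+B\nu\ge0$ on $\J$ and strictly positive somewhere on $\supp P$ (here I use $P\in\Qs$). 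Scaling then gives $\E_P\log Y\to\infty$. The same Farkas argument covers the case $\Qn\neq\emptyset$ but $\KL(P\|q)=\infty$ for all $q\in\Qn$: no null law charges all of $\supp P$, and I expect one can build an arbitrage-type bet vanishing off a face.

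\emph{Strong duality.} Since $\J$ is finite, $\Qn$ is a compact polytope and $q\mapsto\KL(P\|q)$ is lower semicontinuous and convex, so the infimum is attained at some $q^\star$ whenever it is finite. I would write the convex program $\min\KL(P\|q)$ subject to $q\ge0$, $\1^\top q=1$, $B^\top q=0$ and $X^\top q\le0$. All constraints are linear, so Slater-type qualification reduces to feasibility and the KKT conditions hold at $q^\star$. There exist multipliers $c,\nu,\lambda\ge0$ and $s\ge0$ (for $q\ge0$) with $-P_j/q^\star_j+c+(B\nu)_j+\lambda X_j-s_j=0$ on $\supp P$, together with complementary slackness. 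Off $\supp P$ the stationarity condition reads $c+(B\nu)_j+\lambda X_j=s_j\ge0$, with $s_j=0$ wherever $q^\star_j>0$.

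Multiplying stationarity by $q^\star$ and summing uses $B^\top q^\star=0$ and $\lambda\E_{q^\star}X=0$, and gives $c=1$ after the $s$-terms cancel by slackness. Setting $Y^\star=1+\lambda X+B\nu$ yields $Y^\star=P/q^\star>0$ on $\supp P$ and $Y^\star\ge0$ elsewhere, with $Y^\star=0$ wherever $q^\star$ puts mass off $\supp P$. Then $\E_P\log Y^\star=\KL(P\|q^\star)$, which closes the gap and gives every claim of the theorem. Sign conventions for $B\nu$ are absorbed into $\nu$.

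\emph{Main obstacle.} The delicate step is justifying the KKT multipliers, because $\KL$ is not differentiable where $q_j=0$ on $\supp P$. I would sidestep this by an LP-style argument: first restrict to the face $F=\{j:q^\star_j>0\}\supseteq\supp P$, where the objective is smooth, and then use linearity of the constraints (Abadie/linear constraint qualification) so that the normal cone is generated by the constraint gradients. Alternatively, I would apply Fenchel–Rockafellar duality to the conjugate of $q\mapsto-\E_P\log q$, whose dual is exactly the supremum over $\Ys$. If the second route is cleaner, I would use it and recover attainment of $Y^\star$ from compactness of the feasible $(\lambda,\nu)$ modulo $\ker B$ restricted to the relevant leaves.
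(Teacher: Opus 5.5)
Your finite case is correct, by a different route from the paper's. The case $\Qn\neq\emptyset$ with $\Kstar(P)=\infty$, however, is left unproved.

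\emph{The gap.} For such $P$ you only say you expect an arbitrage-type bet to exist. The theorem asserts $\sup_{Y\in\Ys}\E_P\log Y=\infty$ there, and weak duality gives nothing in this case. The missing idea is convexity of $\Qn$. If every leaf $j\in\supp P$ were charged by some $q^{(j)}\in\Qn$, the average of these finitely many laws would lie in $\Qn$, charge all of $\supp P$, and have finite divergence from $P$. So there is a single leaf $j_0\in\supp P$ with $q_{j_0}=0$ for every $q\in\Qn$. Knowing only that each null law misses \emph{some} leaf of $\supp P$ is not enough to build one bet.

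Then $\max\{q_{j_0}:q\in\Qn\}=0$. LP duality as in \cref{lem:evariables}, or equivalently Farkas applied to the infeasible system $q\ge0$, $B^\top q=0$, $X^\top q\le0$, $q_{j_0}=1$, yields $\lambda\ge0$ and $\nu$ with $G=\lambda X+B\nu\ge\1_{\{j_0\}}$ on all of $\J$. The bets $Y_k=1+kG\ge1$ are feasible, and $\E_P\log Y_k\ge P_{j_0}\log(1+k)\to\infty$. The paper's Step B2 is the same argument, with the maximal support $U$ of a relative-interior point of $\Qn$ in place of $j_0$. Note also that the required bet does not ``vanish off a face''. It is nonnegative everywhere and at least one at $j_0$.

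\emph{The finite case.} Your KKT argument works, and the obstacle you flag is not actually there. $\KL(P\|q^\star)<\infty$ forces $q^\star_j>0$ on $\supp P$. The objective depends only on those coordinates, so it is smooth on a neighbourhood of $q^\star$ in $\R^{\J}$. Polyhedral constraints then give KKT multipliers with no further qualification, so the face restriction and the Fenchel--Rockafellar detour are unnecessary.

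The paper argues from the dual side instead:
\begin{enumerate}
\item A null law positive on $\supp P$ supplies the refined Slater condition.
\item The Lagrangian dual function is computed coordinatewise.
\item The multiplier $c$ of $\1^\top q=1$ is shown positive using that same law, and is optimized out through $\log c+1-c\le0$.
\item Dual attainment comes from the strong duality theorem.
\item $q^\star$ is recovered by minimizing the Lagrangian at the dual optimum.
\end{enumerate}
You instead start from the primal minimizer and read $Y^\star$ off the multipliers, obtaining $c=1$ by multiplying stationarity by $q^\star_j$ and summing. Your route is shorter and gives primal and dual attainment together, with no scale-removal step. The paper's route identifies the whole Lagrangian dual with the betting problem, rather than only matching the optimal values.
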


\begin{proof}[Proof sketch]
\emph{Weak duality.} For $Y\in\Ys$ and $q\in\Qn$, Jensen's inequality gives
$\E_P\log Y-\KL(P\|q)=\E_P\log(Yq/P)\le\log\E_qY\le0$. \emph{Strong duality.} If
$\Kstar<\infty$, some $q\in\Qn$ is positive on $\supp P$. This is the refined Slater
condition for the convex program $\min_{q\in\Qn}\KL(P\|q)$, whose constraints are affine.
Computing the Lagrangian dual and optimizing out its scale gives the stated supremum. If
$\Kstar=\infty$, a Farkas-type alternative produces feasible bets whose growth rate is
unbounded. The full proof is in \cref{app:duality}.
\end{proof}

\begin{proposition}[Maximal leverage is a superhedging price]
\label{prop:superhedge}
Define $m$ on the nodes of the tree by $m(j)=-X(j)$ at leaves,
$m(h)=\sum_a\mu(a\mid h)\,m(ha)$ at action nodes, and $m(h)=\max_o m(ho)$ at environment
nodes. Then $m(\mathrm{root})=\max_{q\in\Qs}\E_q[-X]$. A multiplier $\lambda\ge0$ is
feasible (some $\nu$ makes $1+\lambda X+B\nu\ge0$) if and only if
$\lambda\,m(\mathrm{root})\le1$. In particular, if $m(\mathrm{root})<0$ then $\Qn=\emptyset$
and $\Kstar=\infty$.
\end{proposition}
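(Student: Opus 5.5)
The plan has three steps: identify $m(\mathrm{root})$ as a maximum over $\Qs$ by backward induction, obtain the feasibility criterion from LP duality (or a direct hedging construction), and deduce the final claim from the sign of $m(\mathrm{root})$.

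\emph{Step 1: the value identity.} For every node $h$, let $\Qs_h$ be the set of laws on the leaves below $h$ that draw actions from $\mu$ at every action node below $h$. I will show $m(h)=\max_{q\in\Qs_h}\E_q[-X]$ by backward induction on the depth of the subtree. By \cref{lem:polytope}, such a law is the same as an assignment of kernels at the environment nodes below $h$, and these kernels can be chosen independently on disjoint subtrees. At an action node the law of the next action is fixed to $\mu(\cdot\mid h)$, so the maximum is the $\mu$-average of the children's maxima. At an environment node the kernel is free, so the maximum of a mixture over children is the largest child value, attained by a point mass. Leaves are trivial. This recursion is exactly the one defining $m$.

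\emph{Step 2: the feasibility criterion.} For fixed $\lambda\ge0$, the existence of $\nu$ with $1+\lambda X+B\nu\ge0$ is a linear feasibility problem. By Farkas' lemma it is infeasible iff some $q\ge0$ satisfies $B^\top q=0$ and $q^\top(1+\lambda X)<0$. We may normalise to $\1^\top q=1$, because $q=0$ does not give a strict inequality and $q\neq0$ forces $\1^\top q>0$. Feasibility therefore holds iff $\min_{q\in\Qs}\E_q[1+\lambda X]\ge0$, that is, iff $1-\lambda\max_{q\in\Qs}\E_q[-X]\ge0$, which by Step 1 is $\lambda m(\mathrm{root})\le1$.

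As a constructive check of the easy direction, and as the form the cushion certifier will later reuse, I would also build $\nu$ explicitly. Set $\nu(h,a)=\lambda\,m(ha)$ at each action node. A telescoping argument along the path then shows
\[
1+\lambda X(\tau)+(B\nu)(\tau)\ \ge\ 1-\lambda m(\mathrm{root}).
\]
The reason is that each fair-bet increment equals $\lambda\bigl(m(h_ta_t)-m(h_t)\bigr)$. Summing these increments gives $\lambda\bigl(m(\text{leaf})-m(\mathrm{root})\bigr)$, minus the nonnegative slack $\lambda\bigl(m(h_t)-m(h_to)\bigr)$ at each environment step. For the sign, note that $m(\text{leaf})=-X$ cancels $\lambda X$ and that $m(h)\ge m(ho)$ at environment nodes. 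Keeping these signs straight is the only delicate bookkeeping.

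\emph{Step 3: the final claim.} If $m(\mathrm{root})<0$, then by Step 1 every $q\in\Qs$ has $\E_q X\ge -m(\mathrm{root})>0$, so $\Qn=\emptyset$ and $\Kstar=\inf\emptyset=\infty$. The main obstacle is the Farkas normalisation in Step 2, namely excluding the degenerate $q=0$ case, together with correctly invoking the decoupling of environment kernels across subtrees in Step 1. The latter follows from \cref{lem:polytope}, because the kernels at distinct environment nodes are unconstrained and independent.
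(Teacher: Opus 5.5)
Your proof is correct and is essentially the paper's argument: Farkas' lemma is the same LP duality the paper invokes for $\max_\nu\min_j(1+\lambda X+B\nu)_j=\min_{q\in\Qs}\E_q[1+\lambda X]$, and Step~1 is the backward induction it appeals to. There is one sign slip in your constructive check: since $m(h_t)\ge m(h_to)$, the environment-step slack $\lambda\bigl(m(h_t)-m(h_to)\bigr)$ is \emph{added} to $\lambda\bigl(m(\text{leaf})-m(\mathrm{root})\bigr)$, not subtracted, and that is what gives your displayed bound; with the sign fixed, this hedge together with $0\le\E_{q}[1+\lambda X+B\nu]=1-\lambda m(\mathrm{root})$ at a maximizing $q\in\Qs$ proves the equivalence without Farkas at all.
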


\begin{proof}
By LP duality, $\max_\nu\min_j(1+\lambda X+B\nu)_j=\min_{q\in\Qs}\E_q[1+\lambda X]$. The
maximum of $\E_q[-X]$ over action-neutral laws is computed by backward induction, since
the environment chooses outcomes freely while actions follow $\mu$.
\end{proof}

\begin{remark}[Interpretation]
$Y=1+\lambda X+B\nu$ is the terminal wealth of a gambler who starts with one unit,
places fair bets at odds $\mu$ on each logged action, and holds a claim $\lambda X$ on
the importance-weighted advantage. Action risk can be hedged with fair bets.
Environment risk cannot, because the environment is adversarial within $\Es$. This is
Kelly betting with a contingent claim in an incomplete market, and $m$ is the
superhedging price of the short claim $-X$. Doubly robust corrections are special fair
bets, but in the optimal certificate the bets are chosen to maximize growth, not to
minimize variance.
\end{remark}

\begin{remark}[No bipolar gap]
\label{rem:bipolar}
\citet{ram2026optimal} and \citet{kitamura2012asymptotic} show that the optimal growth
rate of a sequential test can fall strictly below the naive infimum of $\KL$ over the
null, unless $\KL(P\|\cdot)$ restricted to the null is lower semicontinuous at the right
points. Here $\Qn$ is a compact polytope in a finite-dimensional simplex, so
\cref{thm:lower,thm:duality} give the same quantity. The duality agrees with the
growth-rate-optimal e-variable of \citet{grunwald2024safe}. What \cref{lem:evariables}
adds is the explicit finite-dimensional form of all e-variables for this null.
\end{remark}

\begin{remark}[Continuous rewards]
\label{rem:continuous}
If terminal rewards take values in a compact set within $[0,b]$, the leaf constraint
$1+\lambda(W-1)R+B\nu\ge0$ is affine in $R$. It therefore suffices to check it at
$R\in\{0,b\}$, so a compact reward support reduces to a finite tree without loss of
generality.
\end{remark}

\begin{remark}[KL direction]
Entropic-risk and soft-Bellman dualities use $\KL(q\|P)$, with the alternative first
\citep{todorov2007linearly}. $\Kstar$ is a reverse information projection with $P$
first, and does not reduce to those recursions.
\end{remark}

\subsection{Coverage decides the regime}
\label{sec:coverage}

Let $S=\supp P$ and $m=m(\mathrm{root})$ from \cref{prop:superhedge}.

\begin{proposition}[Coverage trichotomy]
\label{prop:trichotomy}
Let $\mu>0$ on every action node.
\begin{enumerate}
\item[(a)] If $\Delta\le0$ then $\Kstar(P)=0$, and every $\delta$-sound certifier has
$P(\tau<\infty)\le\delta$; in particular $\E_P\tau=\infty$.
\item[(b)] If $\Delta>0$ then $\Kstar(P)>0$, and the infimum defining $\Kstar$ is attained
whenever it is finite.
\item[(c)] If $\Delta>0$, the following are equivalent: \cref{ass:fs}; $m>0$;
$\Kstar(P)<\infty$.
\item[(d)] If $\Delta>0$ and $m\le0$, there is $\nu$ with $G=X+B\nu\ge0$ on $\J$ and
$\E_PG=\Delta$. The rule $\tau=\inf\{t:G(j_t)>0\}$ is $\delta$-sound for every $\delta$
($q(\tau<\infty)=0$ for all $q\in\Qn$) and $\E_P\tau=1/P(G>0)$, independent of $\delta$.
\end{enumerate}
\end{proposition}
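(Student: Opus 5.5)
The plan is to treat the four parts separately. Each one reduces to three ingredients: membership of $P$ in $\Qs$, the superhedging identity of \cref{prop:superhedge}, and the duality of \cref{thm:duality}. A single object ties (c) and (d) together: when $m\le0$, the LP duality in the proof of \cref{prop:superhedge} gives a nonnegative hedged payoff $G=X+B\nu$.

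For (a), since $P\in\Qs$ and $\E_PX=\Delta\le0$, we have $P\in\Qn$. Hence $\Kstar(P)\le\KL(P\|P)=0$. Soundness applied at the null $q=P$ then gives $P(\tau<\infty)\le\delta<1$ directly, so $\E_P\tau=\infty$.

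For (b), $\Delta>0$ means $P\notin\Qn$. The set $\Qn$ is a compact polytope and $q\mapsto\KL(P\|q)$ is lower semicontinuous on the simplex, so a finite infimum is attained at some $q^\star\in\Qn$. Since $q^\star\neq P$, we get $\KL(P\|q^\star)>0$. If the infimum is infinite, then $\Kstar=\infty>0$ trivially.

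For (c), I read \cref{ass:fs} as ``$\Qn$ contains a full-support law'' and prove four implications.
\begin{enumerate}
\item[(1)] \emph{$m>0$ implies \cref{ass:fs}.} By \cref{prop:superhedge} there is $q_1\in\Qs$ with $\E_{q_1}X=-m<0$. Let $r\in\Qs$ be the law of the environment that picks outcomes uniformly at every node; it has full support because $\mu>0$. The mixture $(1-\varepsilon)q_1+\varepsilon r$ is action-neutral, because $\Qs$ is convex, and has full support. For small $\varepsilon$ it still has $\E X\le0$, so it lies in $\Qn$.
\item[(2)] \emph{\cref{ass:fs} implies $\Kstar<\infty$.} $\KL(P\|q)<\infty$ for any $q\in\Qn$ with full support.
\item[(3)] \emph{$\Kstar<\infty$ implies $m>0$.} Argue by contrapositive and suppose $m\le0$. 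By the LP identity $\max_\nu\min_j(X+B\nu)_j=\min_{q\in\Qs}\E_qX=-m\ge0$, some $\nu$ gives $G=X+B\nu\ge0$ on $\J$. Because $P\in\Qs$, $\E_PG=\E_PX=\Delta>0$, so $P(G>0)>0$. For every $\lambda\ge0$, the bet $1+\lambda G=1+\lambda X+B(\lambda\nu)$ lies in $\Ys$. Its growth $\E_P\log(1+\lambda G)\to\infty$ as $\lambda\to\infty$, so \cref{thm:duality} gives $\Kstar=\infty$.
\item[(4)] \emph{\cref{ass:fs} implies $m>0$.} This is the step I expect to be the most delicate, and I would argue it by contradiction. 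Suppose $m\le0$ and let $q\in\Qn$ have full support. With $G$ as in step (3), $\E_qG=\E_qX\le0$. Since $G\ge0$ and $q$ charges every leaf, this forces $G\equiv0$. Then $\Delta=\E_PG=0$, contradicting $\Delta>0$.
\end{enumerate}
Chaining (1)--(4) gives the equivalence of all three conditions.

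For (d), take $G$ as in step (3); it satisfies $G\ge0$ on $\J$ and $\E_PG=\Delta$. For soundness, any $q\in\Qn$ has $\E_qG=\E_qX\le0$ with $G\ge0$, so $q(G>0)=0$. Episodes are i.i.d., so $q^{\otimes\infty}(\tau<\infty)=0$. Under $P$, each episode independently has $G>0$ with probability $P(G>0)>0$, so $\tau$ is geometric with mean $1/P(G>0)$, which does not depend on $\delta$.
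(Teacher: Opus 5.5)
Your proof is correct and follows the paper's argument essentially step for step: the same dual certificate $G=X+B\nu\ge0$ drives (c) and (d), and the same mixture with the uniform-environment law gives $m>0\Rightarrow$ \cref{ass:fs}. The variations are minor. For $\Kstar<\infty\Rightarrow m>0$ you use \cref{thm:duality} with the unbounded-growth bets $1+\lambda G$, whereas the paper argues directly that $q(G>0)=0<P(G>0)$ forces $P\not\ll q$ for every $q\in\Qn$; and your step (4), which you flag as the delicate one, is not needed, since (1)--(3) already close the cycle.
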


\begin{proof}[Proof sketch]
(a) $P\in\Qn$, so \cref{thm:lower} applies with $q=P$. (b) $P\notin\Qn$ and $\Qn$ is
closed; lower semicontinuity of $\KL(P\|\cdot)$ on the compact set gives attainment.
(c) and (d) follow from the LP dual $m=\min\{c:c+B\nu\ge-X\text{ on }\J\}$ of
\cref{prop:superhedge}: if $m\le0$ the dual solution gives $G\ge0$, and every
$q\in\Qn$ has $0\ge\E_qX=\E_qG$, so $q(G>0)=0$ while $P(G>0)>0$. If $m>0$, mixing a
maximizer of $\E_q[-X]$ with a full-support element of $\Qs$ gives a full-support law in
$\Qn$. Full proof in \cref{app:coverage}.
\end{proof}

Part (a) says that no amount of data certifies a policy that does not beat the logger;
part (d) says that a dominance certificate, when it exists, is free. The exponent is
informative exactly in the middle regime $\Delta>0<m$.

\paragraph{Partial coverage.} Now allow $\mu(a\mid h)=0$ at some action nodes. Actions
outside the logger's support are never observed, so their rewards are unconstrained in
$[0,b]$ and the null must contain every reward assignment there. Let $W^0$ be the
importance weight restricted to covered trajectories, $X^0=(W^0-1)R$, and
$\Delta^0=\E_PX^0=V(\pi)-V^{\mathrm{unc}}(\pi)-V(\mu)$, where $V^{\mathrm{unc}}(\pi)$ is the
value $\pi$ collects through uncovered actions.

\begin{proposition}[Positivity under partial coverage]
\label{prop:positivity}
With uncovered rewards adversarial in $[0,b]$, $\Kstar(P)$ equals the exponent of the
covered tree with claim $X^0$. In particular certification is impossible ($\Kstar=0$ and
part (a) applies) if and only if $\Delta^0\le0$, which can happen even when
$V(\pi)>V(\mu)$.
\end{proposition}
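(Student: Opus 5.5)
The plan is to reduce the partial-coverage problem to the fully covered tree by profiling out the unconstrained rewards. First I will make the model precise. The outcome tree splits into \emph{covered} leaves, those whose path uses only actions with $\mu(a\mid h)>0$, and \emph{uncovered} leaves, those reached through some action with $\mu(a\mid h)=0$. Every action-neutral law $q$ puts zero mass on uncovered leaves, because $q(ha)=\mu(a\mid h)q(h)=0$. So $P$ and every $q\in\Qs$ live on the covered subtree, and the covered subtree is a logged tree with $\mu>0$ everywhere.

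The null, however, consists of pairs $(q,r)$, where $q$ is an action-neutral law on the covered tree and $r$ is a reward assignment on uncovered leaves with values in $[0,b]$. The value $V(\pi)$ under such a pair is computed by following $\pi$ through the tree. The covered part of this computation is $\E_q[W^0R]$. The uncovered part is $V^{\mathrm{unc}}_{q,r}(\pi)$, which is linear and nonnegative in $r$ and equals $0$ when $r\equiv0$. The null condition is $\E_q[W^0R]+V^{\mathrm{unc}}_{q,r}(\pi)-\E_q R\le0$.

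Next I will profile out $r$. Since $\KL(P\|q)$ does not depend on $r$, the exponent is an infimum over $q$ of $\KL(P\|q)$ subject to the existence of some admissible $r$. The existential condition on $r$ is easiest to satisfy at $r\equiv0$, because that choice minimizes the uncovered contribution. The reduced null is therefore exactly $\{q\in\Qs:\E_qX^0\le0\}$ on the covered tree. I also need the reward $R$ on covered leaves, which $P$ identifies, to enter $W^0$ only through covered paths; I will check this against the definition of $W^0$ as $W$ with uncovered continuations removed.

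It then follows that $\Kstar(P)$ equals the exponent of the covered tree with claim $X^0$. The "iff" statement comes from \cref{prop:trichotomy}(a),(b) applied to that covered tree, whose logger is positive. If $\Delta^0=\E_PX^0\le0$, then $P$ itself lies in the reduced null, so $\Kstar=0$ and the impossibility claim follows as in part (a) via \cref{thm:lower} with $q=P$ (paired with $r\equiv0$). If $\Delta^0>0$, part (b) gives $\Kstar>0$. The identity $\Delta^0=V(\pi)-V^{\mathrm{unc}}(\pi)-V(\mu)$ comes from splitting $V(\pi)$ under $P$'s true rewards into its covered and uncovered parts.

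For the final clause I will give a two-leaf-type example. Take a single action node with $\mu=(1,0)$ and $\pi=(0,1)$, where the uncovered action yields reward $b$ and the covered action yields $b/2$. Then $V(\pi)=b>V(\mu)=b/2$, but $X^0=-R$, so $\Delta^0<0$. A less degenerate version mixes in some covered mass for $\pi$.

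The main obstacle is formalizing the null when rewards on uncovered leaves are free while $P$ is defined only on covered leaves. I need to show that the sup/inf exchange is legitimate, which holds because the constraint set in $r$ is a box and the objective does not depend on $r$. I also need to confirm that the lower bound machinery of \cref{thm:lower} still applies, since the observable law is unchanged by $r$, so the test's rejection probability under $(q,r)$ equals that under $q$.
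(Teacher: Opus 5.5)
Your proposal is correct and follows the paper's proof. The paper likewise shows that the observable laws of null environments are exactly $\{q\in\Qs^0:\E_qX^0\le0\}$: one inclusion uses $V^{\mathrm{unc}}\ge0$, the other zeroes the uncovered rewards. It then applies \cref{prop:trichotomy}(a),(b) on the covered tree and gives essentially your one-step example, with a general covered reward $r_1<b$ in place of your $b/2$. The ``sup/inf exchange'' you flag needs no argument, since $\KL(P\|q)$ does not depend on the uncovered part and you are only taking the infimum over the projection of the feasible pairs onto $q$.
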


\begin{proof}[Proof sketch]
The adversary sets every uncovered reward to $0$, which is the worst case for the
inequality $\E_qX\le0$ and is invisible to the data; the remaining null is the covered
tree's. Full proof in \cref{app:coverage}.
\end{proof}

This is the sequential, exponent-level form of the partial-identification bounds of
\citet{manski1990nonparametric}: the identified set for $V(\pi)$ has lower end
$V(\pi)-V^{\mathrm{unc}}(\pi)$, and the exponent is that of the lower end. On 40 random
instances with uncovered actions, 26 had $\Delta^0\le0$, 18 of them with
$V(\pi)>V(\mu)$; brute-force optimization over uncovered rewards matched the reduction to
$7.4\times10^{-8}$.

\subsection{Local expansion: the efficiency bound is a limit of the exponent}

Let $\sigeff(P)=\min_\nu\Var_P(X+B\nu)$ be the residual variance of $X$ after
$L^2(P)$ projection onto the span of fair bets. This is the semiparametric efficiency
bound for $\Delta$ with known propensities
\citep{jiang2016doubly,kallus2020double}. Let $\sigis(P)=\Var_P(X)$.

\begin{theorem}[Local expansion]
\label{thm:local}
Along a sequence of instances with $\Delta\to0$ and $\sigeff$ bounded away from zero,
suppose $X$ is uniformly bounded on $\J$ and the efficient residual
$Z=X+B\theta^\star$, $\theta^\star\in\argmin_\nu\Var_P(X+B\nu)$, is uniformly bounded
on $\supp P$. Then
\[
\Kstar(P)=\frac{\Delta^2}{2\sigeff}\bigl(1+O(\Delta)\bigr),\qquad
\Kscal(P)=\frac{\Delta^2}{2\sigis}\bigl(1+O(\Delta)\bigr).
\]
Hence $\Kstar/\Kscal\to\sigis/\sigeff\ge1$.
\end{theorem}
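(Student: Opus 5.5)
We plan to work entirely from the dual side of \cref{thm:duality}, sandwiching $\Kstar$ between a lower bound from a specific feasible bet and an upper bound from a specific null law. Throughout, write $Z=X+B\theta^\star$. Since $\E_P[B\nu]=0$ for $P\in\Qs$, we have $\E_PZ=\Delta$ and $\Var_P Z=\sigeff$. By the first-order condition for $\theta^\star$, $Z-\Delta$ is $L^2(P)$-orthogonal to every $B\nu$.

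\emph{Lower bound.} Take the bet $Y=1+\lambda Z=1+\lambda X+B(\lambda\theta^\star)$ with $\lambda=\Delta/\sigeff$. This $Y$ must be nonnegative on all of $\J$, not only on $\supp P$, and this is the delicate point. We will handle it with \cref{prop:superhedge}. By \cref{prop:trichotomy}(c), $m>0$ in the regime of interest, and by uniform boundedness of $X$ there are fair bets $\nu_0$ with $1+\lambda_0X+B\nu_0\ge0$ for a fixed $\lambda_0>0$. We then modify $\theta^\star$ off $\supp P$: the columns of $B$ at action nodes that $P$ never visits vanish on $\supp P$, so changing $\nu$ there does not change $Y$ on $\supp P$. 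Alternatively, we mix with the safe bet, $Y_\epsilon=(1-\epsilon)(1+\lambda Z)+\epsilon(1+\lambda'X+B\nu_0)$, with $\epsilon=O(\Delta)$.

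Taylor expansion then gives
\[
\E_P\log(1+\lambda Z)\ge\lambda\Delta-\tfrac{\lambda^2}{2}\E_PZ^2-O(\lambda^3)
=\frac{\Delta^2}{2\sigeff}\bigl(1+O(\Delta)\bigr),
\]
using $\E_PZ^2=\sigeff+\Delta^2$ and boundedness of $Z$ on $\supp P$. Hence $\Kstar\ge\frac{\Delta^2}{2\sigeff}(1+O(\Delta))$.

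\emph{Upper bound.} Define the tilt $q=P(1-\lambda(Z-\Delta)')$ restricted to $\supp P$, where the correction is chosen so that $q$ is action-neutral with $\E_qX=0$. Concretely, set $q_j=P_j(1-\lambda(Z_j-\Delta))$ with $\lambda=\Delta/(\sigeff)$ up to $O(\Delta^2)$, and check the two constraints in turn.
\begin{itemize}
\item Action-neutrality: $B^\top q=B^\top P-\lambda\,\E_P[(Z-\Delta)B]=0$, by the orthogonality of $Z-\Delta$ to the fair bets.
\item Mean of $X$: $\E_qX=\E_qZ=\Delta-\lambda\Var_PZ=0$ exactly when $\lambda=\Delta/\sigeff$.
\end{itemize}
Positivity of $q$ holds for small $\Delta$ because $Z$ is bounded on $\supp P$, and $q\in\Qn$ as required. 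Then
\[
\KL(P\|q)=-\E_P\log\bigl(1-\lambda(Z-\Delta)\bigr)=\tfrac{\lambda^2}{2}\sigeff+O(\lambda^3)=\frac{\Delta^2}{2\sigeff}\bigl(1+O(\Delta)\bigr).
\]
This construction is clean and needs only boundedness on $\supp P$.

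\emph{Scalar exponent and conclusion.} The $\Kscal$ statement is the same argument with $\nu=0$ and $Z=X$. The lower bound uses $\lambda=\Delta/\sigis$; here $X$ is bounded on all of $\J$, so feasibility on $\J$ is immediate for small $\lambda$. The upper bound uses Honda--Takemura duality with the tilted law $P(1-\lambda(X-\Delta))$ on the range of $X$. Dividing the two expansions gives $\Kstar/\Kscal\to\sigis/\sigeff$, and $\sigis\ge\sigeff$ holds since $\nu=0$ is a candidate in the minimization defining $\sigeff$.

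The main obstacle is the lower-bound feasibility off $\supp P$. The efficient residual need not be bounded or nonnegative-compatible on unvisited leaves, so we must show that a perturbation of $\theta^\star$ or a mixture costing only $O(\Delta)$ relative error restores $Y\ge0$ on $\J$. We will first try the mixture with the superhedge bet and bound the loss via concavity of $\log$.
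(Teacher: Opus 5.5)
Your upper bound (the linear tilt of $P$ along $Z-\Delta$, checked through the orthogonality of $Z-\Delta$ to the fair bets), the Taylor lower bound on $\supp P$, and the scalar case all match the paper's argument. The gap is the step you flag yourself. The hypotheses bound $Z$ only on $\supp P$, but $1+\lambda Z$ must be nonnegative on every leaf of $\J$, and neither of your remedies closes this.

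\emph{Mixing.} Mixing with a safe bet $Y_0$ at weight $\epsilon=O(\Delta)$ adds only $\epsilon Y_0(j)$ of slack at a leaf $j\notin\supp P$. That slack has no relation to how negative $Z_j$ is there. And if $Z$ were uniformly bounded off $\supp P$, no mixing would be needed at all.

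\emph{Re-choosing $\theta^\star$.} Changing $\theta^\star$ at action nodes that $P$ never reaches is the right freedom, but on its own it is not enough. Since $\mu>0$, a path leaves $\supp P$ only through a null outcome $o$ at some $P$-reached environment node $g$. The fair bets below $go$ have $\mu$-mean zero, so they cannot lift wealth uniformly on that subtree. Averaging over an action-neutral law on the subtree that maximizes $\E[-X]$ shows that nonnegativity there forces $1+\lambda\Sigma_g\ge\lambda m(go)$. Here $\Sigma_g$ is the part of the sum \eqref{eq:fairbets} for $B\theta^\star$ coming from the action nodes before $g$. So you need a lower bound on $\Sigma_g$, uniform along the sequence. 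It depends only on $\theta^\star$ at $P$-reached nodes, and your proposal gives none.

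\emph{The missing idea.} Because $P\in\Qs$, fair bets at $P$-reached action nodes below $g$ have conditional mean zero under $P$. Hence
$\Sigma_g=\E_P[B\theta^\star\mid g]=\E_P[Z\mid g]-\E_P[X\mid g]$.
This is bounded in absolute value by $\kappa_S+\kappa_X$, where $\kappa_S=\sup_{\supp P}|Z|$ and $\kappa_X=\sup_\J|X|$, since $P(\cdot\mid g)$ lives on $\supp P$.

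\emph{Completing the step.} Set $\theta^\star$ to zero at every action node that $P$ does not reach. This leaves $B\theta^\star$ unchanged on $\supp P$, so it is still a variance minimizer. It also removes every fair bet after the exit point. The result is $|Z(j)|\le|X(j)|+\kappa_S+\kappa_X\le2\kappa_X+\kappa_S$ on every leaf of $\J$. Then $1+\lambda Z\ge1/2$ on $\J$ once $\lambda(2\kappa_X+\kappa_S)\le1/2$, and the rest of your lower bound goes through unchanged.
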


Both hypotheses concern only $X$ and the law $P$, which is the usual regularity for an
efficient influence function. The dual bet must also be nonnegative on leaves outside
$\supp P$, where $P$ does not pin down $\theta^\star$. \Cref{app:local} shows that this
costs nothing. Choose the minimizer whose fair bets vanish at action nodes that $P$ never
reaches. Its residual then satisfies $\sup_\J|Z|\le2\sup_\J|X|+\sup_{\supp P}|Z|$ on
every leaf of the tree.

The proof (\cref{app:local}) evaluates the dual at a shrunken efficient bet and the
primal at a linear tilt of $P$. \Cref{thm:local} identifies the efficiency bound as the
small-advantage limit of the large-deviation exponent. Away from that limit the two
quantities separate in \emph{both} directions. Over 189 random trees with $\Delta>0.01$
(\cref{fig:random}), the ratio $\Kstar/(\Delta^2/2\sigeff)$ had median $1.03$,
10--90\% range $0.44$--$3.0$ and extremes $0.014$ and $19.4$. It was below $1$ on 47\% of
trees. Where it exceeds $1$, a certificate calibrated by a central limit theorem or
Bernstein bound for the best DR estimator wastes that factor. Where it is below $1$, such
a certificate claims a rate that no sound procedure attains, so it cannot be uniformly
valid at small $\delta$. $\Kstar$ must be computed, not approximated.

The expansion also has a non-asymptotic form. Let $Z=X+B\theta^\circ$ be the efficient
residual of \cref{app:local} (so $\sup_\J|Z|<\infty$ and $\E_PZ=\Delta$), and write
$\sigma=\sigeff$.

\begin{proposition}[Explicit local sandwich]
\label{prop:sandwich}
Let $\Delta>0$.
\begin{enumerate}
\item[(i)] If $\rho_+=(\Delta/\sigma)\max\{0,\max_S(Z-\Delta)\}<1$ then
$\Kstar(P)\le\Delta^2/\bigl(2\sigma(1-\rho_+)\bigr)$.
\item[(ii)] Let $s=\Delta/(\sigma+\Delta^2)$. If $s\max\{0,-\min_\J Z\}\le1$ and
$\rho_-=s\max\{0,-\min_SZ\}<1$ then
$\Kstar(P)\ge\Delta^2(1-2\rho_-)/\bigl(2(\sigma+\Delta^2)(1-\rho_-)\bigr)$.
\end{enumerate}
The same bounds hold for $\Kscal$ with $Z$ replaced by $X$ and $\sigma$ by $\sigis$.
\end{proposition}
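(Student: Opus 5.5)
Both bounds use the two sides of \cref{thm:duality}. For the upper bound (i) I exhibit an explicit action-neutral null law and compute its divergence. For the lower bound (ii) I exhibit an explicit feasible bet. Throughout, $Z=X+B\theta^\circ$ is bounded on all of $\J$, and $\E_qZ=\E_qX$ for every $q\in\Qs$.

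\emph{Upper bound (i).} Take the tilt $q_j=P_j(1-c(Z_j-\Delta))$ on $S$, with $c=\Delta/\sigma$. Then $\E_q Z=\Delta-c\sigma=0$, so $\E_qX=0$. To see that $q$ is action-neutral, use the first-order condition for $\theta^\circ$: $\mathrm{Cov}_P(Z,B\nu)=0$ for all $\nu$. Since $\E_PB\nu=0$, this gives $\E_P[ZB\nu]=0$, and hence $B^\top q=B^\top P-cB^\top(P\circ(Z-\Delta))=0$. Positivity of $q$ needs $c(Z-\Delta)<1$ on $S$, which is the condition $\rho_+<1$. Thus $q\in\Qn$. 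Next bound $\KL(P\|q)=\E_P[-\log(1-U)]$ with $U=c(Z-\Delta)$, $\E_PU=0$, $U\le\rho_+<1$. For $u\le\rho<1$ one has $-\log(1-u)\le u+u^2/(2(1-\rho))$; I expect this to follow from the integral form of the remainder, possibly with a piecewise check for negative $u$. Taking expectations gives $\KL\le c^2\sigma/(2(1-\rho_+))=\Delta^2/(2\sigma(1-\rho_+))$.

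\emph{Lower bound (ii).} Take the dual bet $Y=1+s(Z-\Delta)+s\Delta\cdot$\,? This is awkward, because the affine form must be $1+\lambda X+B\nu$ with $\lambda\ge0$. So I choose $Y=1+sZ=1+sX+B(s\theta^\circ)$, with $\lambda=s$. Feasibility means $1+sZ\ge0$ on $\J$, which is exactly the hypothesis $s\max\{0,-\min_\J Z\}\le1$. Then $\Kstar\ge\E_P\log(1+sZ)$. Write $V=sZ$, so $\E_PV=s\Delta$, $\E_PV^2=s^2(\sigma+\Delta^2)$, and $V\ge-\rho_-$ on $S$. Use $\log(1+v)\ge v-\frac{v^2}{2(1-\rho)}$ for $v\ge-\rho$; this is the step to verify, and it holds since the second derivative is $-1/(1+v)^2\ge-1/(1-\rho)^2$. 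If that inequality gives a $(1-\rho)^2$ denominator instead, I will instead use the sharper elementary bound $\log(1+v)\ge v-v^2/(2(1-\rho))$ proven by comparing derivatives on $[-\rho,0]$ and using $\log(1+v)\ge v-v^2/2$ for $v\ge0$. This yields
\[
\Kstar\ge s\Delta-\frac{s^2(\sigma+\Delta^2)}{2(1-\rho_-)}.
\]
With $s=\Delta/(\sigma+\Delta^2)$ this equals
\[
\frac{\Delta^2}{\sigma+\Delta^2}\Bigl(1-\frac{1}{2(1-\rho_-)}\Bigr)=\frac{\Delta^2(1-2\rho_-)}{2(\sigma+\Delta^2)(1-\rho_-)}.
\]

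\emph{Scalar version.} For $\Kscal$, I repeat the same two steps with $\nu=0$ and $Z$ replaced by $X$. The upper bound now uses the Honda--Takemura characterization of $\Kscal$ as a $\KL_{\inf}$ over laws on the range of $X$ with nonpositive mean; the tilt of the law of $X$ needs no action-neutrality. The lower bound uses the bet $1+sX$, which is admissible in the definition of $\Kscal$.

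The main obstacle is the elementary logarithm inequalities with the exact constants $1/(1-\rho)$. I will check these first, both for negative and positive arguments, since everything else is direct substitution.
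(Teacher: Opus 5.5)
Your proposal is correct and follows the paper's proof. It uses the same linear tilt $q=P\bigl(1-(\Delta/\sigma)(Z-\Delta)\bigr)$ for (i), the same feasible bet $1+sZ=1+sX+B(s\theta^\circ)$ for (ii), and the same pointwise logarithm bounds with constant $1/(1-\rho)$, which the paper obtains from the series estimate $\sum_{k\ge2}|x|^k/k\le x^2/(2(1-|x|))$; so you are right to drop the crude second-derivative bound, which only gives $(1-\rho)^{-2}$, in favour of your derivative-comparison fallback. For the scalar upper bound the paper applies the Jensen (weak-duality) argument directly to the tilt of $P$ on $\J$ rather than invoking the Honda--Takemura characterization, which amounts to the same thing.
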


Both $\rho_\pm=O(\Delta)$, so \cref{prop:sandwich} implies \cref{thm:local} with explicit
constants. Proof in \cref{app:local}; we checked the bounds numerically on 111 random
trees and a small-$\Delta$ sweep.

\section{Achieving the exponent}
\label{sec:achieve}

\Cref{fig:architecture} in \cref{app:details} summarizes the certifiers of this section and of
\cref{sec:local}.

\subsection{A predictable Kelly certifier with a logged-law plug-in}

\begin{algorithm}[t]
\caption{Projected plug-in Kelly certifier}
\label{alg:plugin}
\begin{algorithmic}[1]
\Require level $\delta$; refit times $r_k=\lceil1.15^k\rceil$
\State $M_0\gets1$; $(\hat\lambda,\hat\nu)\gets(0,0)$
\For{$t=1,2,\dots$}
  \If{$t-1=r_k$ for some $k$}
    \State $\hat p\gets$ empirical law of episodes $1,\dots,t-1$
    \State $\tilde p\gets\argmin_{q\in\Qs}\KL(\hat p\|q)$ \Comment{logged-law projection}
    \State $(\hat\lambda,\hat\nu)\gets\argmax\{\E_{\tilde p}\log(1+\lambda X+B\nu):\lambda\ge0,\ 1+\lambda X+B\nu\ge0\text{ on }\J\}$
  \EndIf
  \State observe episode $\tau_t$; $s_t\gets1-t^{-1/2}$
  \State $M_t\gets M_{t-1}\bigl(1+s_t[\hat\lambda X(\tau_t)+(B\hat\nu)(\tau_t)]\bigr)$
  \If{$M_t\ge1/\delta$} \Return certify at $\tau_\delta=t$ \EndIf
\EndFor
\end{algorithmic}
\end{algorithm}

\Cref{alg:plugin} bets, at each episode, the optimal bet for the current estimate of $P$,
shrunk towards the trivial bet. The estimate is not the empirical law $\hat p$ itself but
its projection onto action-neutral laws. This is the maximum-likelihood estimate of the
environment kernels with $\mu$ known: $\tilde p$ equals $\mu$ times the empirical
environment conditionals. The projection matters in practice. The empirical law is not
action-neutral, and sampling noise in its action frequencies looks like a positive edge
on the fair bets. The plug-in then overfits: $\KL(\hat p\|\Qs)$ is of order
$\rank(B)/2t$. In a memory task with $\rank B=46$, $\Kstar(\hat p)$ averaged $0.285$ at
$t=100$ against a true $\Kstar=0.021$, and the projection cut the median stopping time
from 6555 to 440 episodes against a lower bound of 334. Since the projection is the
identity on $\Qs$, it does not change the asymptotic analysis.

\begin{theorem}[Asymptotically instance-optimal plug-in certifier]
\label{thm:plugin}
Let the refit times $r_1<r_2<\cdots$ satisfy $r_{k+1}\le\gamma r_k+1$ for a constant
$\gamma$. Let the bet used at episode $t$ be the one fitted at the last refit before $t$.
\begin{enumerate}
\item[(i)] \emph{Soundness.} \Cref{alg:plugin} is $\delta$-sound.
\item[(ii)] \emph{Almost-sure rate.} If $\Kstar(P)<\infty$, then
  $\liminf_{t\to\infty}\frac1t\log M_t\ge\Kstar(P)$ almost surely under $P$.
\item[(iii)] \emph{Optimal expected stopping time.} If $0<\Kstar(P)<\infty$, then for every
  $\epsilon\in(0,1/3)$ there is a constant $C(\epsilon,P)$, independent of $\delta$, such
  that
  \[
  \frac{\log(1/\delta)}{\Kstar(P)}\ \le\ \E_P[\tau_\delta]\ \le\
  \frac{\log(1/\delta)}{(1-3\epsilon)\Kstar(P)}+C(\epsilon,P).
  \]
  Hence $\E_P[\tau_\delta]/\log(1/\delta)\to1/\Kstar(P)$ as $\delta\to0$.
\end{enumerate}
\end{theorem}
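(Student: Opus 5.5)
We prove the three parts in order. Parts (i) and (ii) are short. Part (iii) needs a concentration argument around the plug-in.

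\emph{Soundness.} The bet $(\hat\lambda,\hat\nu)$ used at episode $t$ is fitted on episodes $1,\dots,t-1$, so it is predictable. It is feasible, i.e.\ $Y_t:=1+\hat\lambda X+B\hat\nu\ge0$ on every leaf of $\J$. With $s_t\in[0,1)$, the factor $1+s_t(Y_t-1)=(1-s_t)+s_tY_t$ is a convex combination of $1$ and $Y_t$, so it is nonnegative. For any $q\in\Qn$ we have $\E_q[B\hat\nu]=0$ and $\hat\lambda\E_qX\le0$, so the conditional mean of each factor is at most $1$. Hence $M_t$ is a nonnegative $q$-supermartingale with $M_0=1$, and Ville's inequality gives $q(\exists t:M_t\ge1/\delta)\le\delta$.

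\emph{Almost-sure rate.} Define $\phi(p)=\sup_{Y\in\Ys}\E_p\log Y$. By \cref{thm:duality}, $\phi=\Kstar$ on $\Qs$.
\begin{itemize}
\item By the strong law, $\hat p\to P$ almost surely. The projection $\tilde p$ equals $\mu$ times the empirical environment conditionals on nodes visited by $P$. Hence $\tilde p\to P$, and $\supp\tilde p\subseteq\supp P$ eventually.
\item We need $\E_P\log\hat Y_k\to\Kstar(P)$ for the fitted bets $\hat Y_k$. The feasible set of $(\lambda,\nu)$ is not compact in the $\nu$-directions that are irrelevant on $\supp P$. So quotient by those directions; on $\supp P$ the values $\hat Y$ then range over a compact set, using $\lambda\le1/m$ from \cref{prop:superhedge} and $m>0$ from \cref{prop:trichotomy}(c).
\item By Berge's maximum theorem, together with upper semicontinuity of $\E_p\log Y$ in $(p,Y)$, every limit point $Y^\infty$ of the fitted bets is a maximizer for $P$. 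Then $\E_P\log Y^\infty=\Kstar$.
\item The shrinkage handles the possibility $Y^\infty=0$ somewhere on $\supp P$. Since $\log(1-s_t+s_tY)\ge\log(1-s_t)=\log t^{-1/2}$, the per-step loss is at most $\tfrac12\log t$.
\item Write $\frac1t\log M_t$ as an average of terms $\log(1-s_u+s_u\hat Y(\tau_u))$. Apply a martingale strong law to the centred terms; their variance is $O(\log^2u)$, which is summable after dividing by $u^2$. This leaves the average of $\E_P\log(1-s_u+s_u\hat Y_u)$, whose $\liminf$ is at least $\Kstar$.
\item The refit condition $r_{k+1}\le\gamma r_k+1$ only matters in (iii). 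In (ii) it suffices that refits happen infinitely often.
\end{itemize}

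\emph{Expected stopping time.} The lower bound is \cref{thm:lower}(i). For the upper bound, fix $\epsilon$ and define the good event $G_t$: for every refit time $r_k\ge\epsilon t$, the fitted bet satisfies $\E_P\log(1-s+s\hat Y)\ge(1-\epsilon)\Kstar$.
\begin{itemize}
\item On $G_t$, the bets used in episodes $\epsilon t\gamma,\dots,t$ are all good. This is where $r_{k+1}\le\gamma r_k+1$ enters: it guarantees a refit soon after $\epsilon t$, so the good bets cover all but an $O(\epsilon)$ fraction of the window.
\item Apply Azuma/Freedman to the log-increments, which are bounded below by $-\tfrac12\log t$ and above by $\log(1+\max Y)$. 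Then $\log M_t\ge(1-3\epsilon)\Kstar t-O(\sqrt{t}\log t)$ except with probability $e^{-c t/\log^2t}$.
\item By continuity of the argmax-value map at $P$ (Berge again), there is $\eta>0$ such that $\|\tilde p-P\|_1\le\eta$ implies a good bet. A union bound over refits and Hoeffding on the $\ell_1$ error of $\hat p$ give $P(G_t^c)\le C\,t\,e^{-ct}$. This transfers to $\tilde p$ because the projection is continuous at $P$ on the relevant set.
\item Finally, $\E\tau_\delta=\sum_tP(\tau_\delta>t)$. The sum is bounded by $T_\delta=\log(1/\delta)/((1-3\epsilon)\Kstar)$ plus a summable tail, which gives a constant $C(\epsilon,P)$ independent of $\delta$.
\end{itemize}

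\emph{Main obstacle.} The delicate step is the uniform continuity argument. Near-optimality of the fitted bet under $P$ must follow from $\tilde p$ being close to $P$, even though optimal bets may vanish on leaves of $\supp P$, where $\log$ is unbounded. The plan is to use the $s_t$ shrinkage to lower-bound $\log$ by $-\tfrac12\log t$ and then argue via upper semicontinuity restricted to the compact quotient feasible set. If that fails, we would instead perturb $Y^\star$ toward $1$, i.e.\ use $(1-\eta)Y^\star+\eta$, to get a bet that is strictly positive on $\supp P$ and near-optimal, and compare the plug-in's value against it.
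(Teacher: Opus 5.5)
\paragraph{Review.} Part (i), part (ii) and the lower bound in (iii) are fine. The genuine gap is in the upper bound of (iii), in where your good event starts.

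\paragraph{The gap.} Your $G_t$ only controls bets fitted at refits $r_k\ge\epsilon t$. So at least the first $\epsilon t$ episodes (up to about $\gamma\epsilon t$) are played with bets about which $G_t$ says nothing. Early on, $\tilde p$ need not charge every leaf of $S$, and the fitted bet may be arbitrarily close to $0$ at a leaf that then occurs. For those episodes the only bound you have is your own $\log Y_u\ge-\tfrac12\log u$, which sums to a loss of order $\tfrac12\epsilon t\log(\epsilon t)$. That is not an $O(\epsilon)$ share of $\Kstar t$: for large $t$ it exceeds $\Kstar t$ itself. Hence $\log M_t\ge(1-3\epsilon)\Kstar t-O(\sqrt t\log t)$ does not follow on $G_t$, and the summable tail needed for $\E_P[\tau_\delta]$ is not established. ``Good bets on all but an $\epsilon$ fraction of the window'' would suffice only if per-episode losses were bounded by a constant. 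The shrinkage $s_t=1-t^{-1/2}$ gives only $\tfrac12\log t$.

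\paragraph{The repair.} Require good bets at every episode $u\ge\sqrt t$, as the paper does with $T_0=\lceil\sqrt T\rceil$. The uncontrolled loss is then at most $\tfrac12\sqrt t\log t=o(t)$. The bet used at episode $u$ is fitted on $n(u)\ge(u-2)/\gamma$ episodes, so the failure probability is $\sum_{u\ge\sqrt t}O(e^{-cu/\gamma})=O(e^{-c'\sqrt t})$, which is still summable. This lower bound on $n(u)$, not the covering of the window, is where $r_{k+1}\le\gamma r_k+1$ really enters.

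\paragraph{Comparison with the paper.} With that change your argument goes through, by a different route from the paper's proof of this theorem. The paper proves leafwise continuity, $\max_{j\in S}|\log\hat Y_j-\log Y^\star_j|\le\epsilon'$. It then compares $\log M_T$ pathwise with the i.i.d.\ sum $\sum_t\log Y^\star(j_t)$, so Hoeffding, Borel--Cantelli and the ordinary strong law suffice. Your route, value continuity plus Azuma and a martingale strong law with increments whose range grows like $\log t$, is closer to the paper's proof of \cref{thm:plugin-explicit}.

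\paragraph{Two smaller remarks.} First, your stated ``main obstacle'' does not arise. Since $\E_P\log Y^\star=\Kstar\ge0$, we have $Y^\star>0$ on $S$. Likewise, any maximizer for a law with support $S$ is positive there, because the constant bet $1$ has value $0$. This positivity is exactly what gives $\E_{p_n}\log Y^\star\to\Kstar$ in your Berge-type step, so no perturbation toward $1$ is needed.

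Second, bounding the fitted bets through $\lambda\le1/m$ with $m>0$ uses \cref{prop:trichotomy}(c), which needs $\Delta>0$. For (ii) with $\Kstar=0$, use instead $Y_j\le1/\bar q_j$ on $S$ for any $\bar q\in\Qn$ positive on $S$; here $\bar q=P$ works. This bounds every feasible bet on $S$ uniformly.
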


\begin{proof}[Proof sketch]
(i) Each factor $1+s_t(\hat\lambda X+B\hat\nu)$ is a convex combination of $1$ and a
feasible bet, and is predictable. Under every $q\in\Qn$, $M_t$ is therefore a
nonnegative supermartingale, and Ville's inequality gives
$q(\sup_tM_t\ge1/\delta)\le\delta$. (ii)--(iii) The projection has support exactly
$\supp P$ once every leaf of $\supp P$ has been observed. Its distance to $P$ is
controlled by Hoeffding's inequality. The optimal log-wealth is continuous at $P$ on laws
with that support. Together these make the probability of every bad event summable. The
lower bound in (iii) is \cref{thm:lower}(i), which holds for every $\delta$-sound
certifier. Full proof in
\cref{app:plugin}.
\end{proof}

The analysis assumes that each refit returns an exact maximizer of the empirical
log-wealth over the \emph{full} feasible set, i.e.\ with nonnegativity enforced on every
leaf of $\J$ and not only on observed leaves. Soundness (i) needs only feasibility, so a
numerical solver that returns a feasible but suboptimal bet remains sound; parts
(ii)--(iii) then hold with $\Kstar$ replaced by $\Kstar$ minus the solver's optimality
gap. Our log-barrier solver returns feasible iterates with duality gap at most
$\epsilon_{\mathrm{opt}}=10^{-10}$, so \cref{thm:plugin-explicit} below holds with $\Kstar$
replaced by $\Kstar-\epsilon_{\mathrm{opt}}$. The constant $C(\epsilon,P)$ is not explicit; the next theorem makes the whole bound
explicit. Write $P_{\min}=\min_SP$, $q^\star=P/Y^\star$ on $S$,
$\Lambda=\max\{\max_S\log(1/q^\star),\log(2/P_{\min})\}$, $c_S=|S|+\sqrt{2|S|/P_{\min}}$
and $\eta'=P_{\min}/(2c_S)$.

\begin{theorem}[Explicit plug-in bounds]
\label{thm:plugin-explicit}
Let $0<\Kstar(P)<\infty$ and run \cref{alg:plugin} with $\gamma=1.15$, $R_1=2$.
\begin{enumerate}
\item[(i)] For $\beta\in(0,1)$ and $T\ge1$, with $\ell=\log(4|S|T/\beta)$ and
$T_0=\max\{4,R_1+1,\lceil2+\gamma\ell/(2\eta'^2)\rceil\}$,
\begin{align*}
&P\Bigl(\log M_T<T\Kstar-R_\beta(T)\Bigr)\le\beta,\qquad\text{where}\\
&R_\beta(T)=(T_0-1)\bigl(\Kstar+\tfrac12\log T_0\bigr)+4\sqrt T
+2\Lambda c_S\sqrt{2\gamma T\ell}+\Lambda\sqrt{2T\log(2/\beta)}.
\end{align*}
\item[(ii)] There are constants $a,b,T_2,T_3$, given in closed form in \cref{app:plugin} and
depending only on $(\Kstar,\Lambda,|S|,P_{\min},\gamma,R_1)$, such that with
$T^\star=\log(1/\delta)/\Kstar$ and $L^\star=\max(2T^\star,3)$,
\[
\E_P[\tau_\delta]\le T^\star+\frac{a}{\Kstar}\sqrt{L^\star\log L^\star}
+\frac{b}{\Kstar}\log^2L^\star+\max(T_2,T_3)+2 .
\]
\end{enumerate}
In particular $\E_P[\tau_\delta]=\log(1/\delta)/\Kstar
+O\bigl(\sqrt{\log(1/\delta)\log\log(1/\delta)}\bigr)$, which matches
\cref{thm:lower}(i) to first order with an explicit second-order term, without
\cref{ass:fs}.
\end{theorem}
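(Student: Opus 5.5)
The plan is to decompose $\log M_T$ into a comparison term against the optimal bet $Y^\star$ plus error terms, and to control each on a good event. Write $\hat Y_t=1+\hat\lambda_t X+B\hat\nu_t$ for the bet fitted at the last refit before $t$. Then
\[
\log M_T=\sum_{t\le T}\log\bigl(1-s_t+s_t\hat Y_t(\tau_t)\bigr)
\ge\sum_{t\le T}\log\hat Y_t(\tau_t)-\sum_{t\le T}\log(1/s_t)\ \text{(shrinkage term)},
\]
using concavity of the logarithm: $\log(1-s+sy)\ge s\log y$. To avoid a $\log 0$ issue, the cleaner bound is $\log(1-s+sy)\ge\log y+\log s$. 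The shrinkage cost is then $\sum_t\log(1/(1-t^{-1/2}))\le\sum_t 2t^{-1/2}\le4\sqrt T$, which is the $4\sqrt T$ term. For the first $T_0-1$ rounds, before the projection is reliable, I would use only the floor $\log(1-s_t)=\log(1-t^{-1/2})\ge-\tfrac12\log T_0$-type bounds and the trivial comparison with $T\Kstar$. This yields the $(T_0-1)(\Kstar+\tfrac12\log T_0)$ term.

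The main step is the good event. By Hoeffding plus a union bound over $|S|$ leaves and $T$ refits, with probability at least $1-\beta/2$ every refit $r\ge T_0$ has $|\hat p_j-P_j|\le\sqrt{\ell/(2r)}$ on all leaves. In particular every leaf of $S$ is observed, and the choice of $T_0$ (via $\eta'$) ensures $\hat p_j\ge P_{\min}/2$. I would then prove an $\ell_1$ sandwich for the logged-law projection: since $\tilde p$ equals $\mu$ times the empirical environment conditionals, a telescoping argument over the tree gives $\|\tilde p-P\|_1\le c_S\,\max_j|\hat p_j-P_j|$ with $c_S=|S|+\sqrt{2|S|/P_{\min}}$, and $\supp\tilde p=S$. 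Next, the plug-in optimality transfers: since $\hat Y_t$ maximizes $\E_{\tilde p}\log Y$ over the full feasible set, $\E_{\tilde p}\log\hat Y_t\ge\E_{\tilde p}\log Y^\star$. Hence
\[
\E_P\log\hat Y_t\ge\Kstar-(\E_{\tilde p}-\E_P)[\log\hat Y_t-\log Y^\star].
\]
The required uniform bound on $|\log Y|$ over $S$ for the relevant bets comes from $\tilde p_j/\hat Y_{t,j}$ being a law in $\Qn$ (the least-favourable law of $\tilde p$ by \cref{thm:duality}), so $\log\hat Y_{t,j}\le\log(\tilde p_j/\text{(mass)})$. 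Combined with $\log Y^\star_j=\log(P_j/q^\star_j)\le\Lambda$, this caps the relevant logs by $\Lambda$. The loss is therefore at most $2\Lambda\|\tilde p-P\|_1$, and summing $\sqrt{\ell/2r}$ over geometrically spaced refits with $r_{k+1}\le\gamma r_k+1$ gives $2\Lambda c_S\sqrt{2\gamma T\ell}$.

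Finally, the martingale $\sum_t(\log\hat Y_t(\tau_t)-\E_P\log\hat Y_t)$ has predictable increments bounded by $\Lambda$ in range on the good event. Azuma–Hoeffding with probability $\beta/2$ gives $\Lambda\sqrt{2T\log(2/\beta)}$, which completes (i). I expect the hardest step to be the two-sided bound on $\log\hat Y_t$ on $S$, especially from below: the proposal is to use the least-favourable law identity, giving $\hat Y_{t,j}=\tilde p_j/\tilde q_j\ge\tilde p_j\ge P_{\min}/2$, whence $\log(2/P_{\min})$ appears in $\Lambda$.

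For (ii), I would write $\E\tau_\delta\le\sum_{T\ge0}P(\tau_\delta>T)\le\sum_T P(\log M_T<\log(1/\delta))$. For $T$ beyond $T^\star$ plus a margin, choose $\beta=\beta_T$ (for instance $1/T^2$) so that $T\Kstar-R_{\beta_T}(T)\ge\log(1/\delta)$. Solving this inequality with $\ell\asymp\log T$ gives the threshold $T^\star+(a/\Kstar)\sqrt{L^\star\log L^\star}+(b/\Kstar)\log^2L^\star$, with $T_2,T_3$ absorbing the $T_0$ constants. The tail sum $\sum_T\beta_T\le2$ then gives the final bound.
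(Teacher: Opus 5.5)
Your skeleton matches the paper's proof. The floor $\log Y_t\ge-\tfrac12\log t$ is used for $t<T_0$, and $\log Y_t\ge\log\hat Y_t(j_t)-2t^{-1/2}$ afterwards. You take a Hoeffding union bound with $\ell=\log(4|S|T/\beta)$ and an $\ell_1$ bound on the projection. The optimality transfer gives $\E_P\log\hat Y_t\ge\Kstar-2\Lambda\|\tilde p-P\|_1$, you apply Azuma--Hoeffding, and you take $\beta_T\propto T^{-2}$ for (ii). Your lower bound $\hat Y_{t,j}=\tilde p_j/\tilde q_j\ge\tilde p_j$, using \cref{thm:duality} at $\tilde p$, is a valid substitute for the paper's first-order condition towards $Y\equiv1$. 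It applies because $\Kstar(\tilde p)\le\KL(\tilde p\,\|\,q^\star)<\infty$ once $\supp\tilde p=S$.

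The gap is the \emph{upper} bound on $\log\hat Y_t$. From $\hat Y_{t,j}=\tilde p_j/\tilde q_j$ you only get $\log\hat Y_{t,j}\le\log(1/\tilde q_j)$. Here $\tilde q$ is the least-favourable law of the random estimate $\tilde p$, and nothing in your argument bounds $\tilde q_j$ from below. But $\Lambda$ is defined through $q^\star$, the least-favourable law of the true $P$. So ``this caps the relevant logs by $\Lambda$'' does not follow, and both the continuity loss $2\Lambda\|\tilde p-P\|_1$ and the Azuma range $2\Lambda$ are left unsupported. The missing idea is to use $q^\star$ itself as a test law against every feasible bet. Since $q^\star\in\Qn$, any $Y=1+\lambda X+B\nu\in\Ys$ has $\E_{q^\star}Y=1+\lambda\E_{q^\star}X\le1$. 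Because $Y\ge0$ on all of $\J$ (this is where fitting over the full leaf set matters), $q^\star_jY_j\le1$ for each $j$, so $\log Y_j\le\log(1/q^\star_j)\le\Lambda$ on $S$. This holds uniformly over all refits, with no continuity argument.

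Two smaller points. First, the constant $c_S=|S|+\sqrt{2|S|/P_{\min}}$ does not come from telescoping over the tree's conditionals, which would introduce inverse probabilities of internal nodes and the depth. It comes from the projection inequality $\KL(\hat p\|\tilde p)\le\KL(\hat p\|P)$, valid since $P\in\Qs$ and $\tilde p$ minimizes over $\Qs$. Then $\KL(\hat p\|P)\le\chi^2(\hat p\|P)\le|S|\hat\eta^2/P_{\min}$, followed by Pinsker and $\|\hat p-P\|_1\le|S|\hat\eta$. Second, Azuma needs almost-surely bounded differences. Multiply each increment by the $\mathcal F_{t-1}$-measurable indicator of the good event for the sample size $n(t)\ge(t-2)/\gamma$, rather than arguing ``on the good event''. $T_0$ is chosen precisely so that $n(t)\ge\ell/(2\eta'^2)$ for $t\ge T_0$.
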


The proof (\cref{app:plugin}) replaces \cref{lem:continuity} by a quantitative continuity
lemma and controls the plug-in error at all refits with Hoeffding's and the
Azuma--Hoeffding inequalities.

\begin{conjecture}[Second-order term]
\label{conj:second-order}
The \emph{plug-in} bound of \cref{thm:plugin-explicit} can also be improved to
$O(\log\log(1/\delta))$ (proved below for the exact mixture, but not for the
plug-in), and no family of $\delta$-sound certifiers achieves
$\log(1/\delta)/\Kstar(P)+o(\log\log(1/\delta))$ uniformly over a relatively open subset of
$\{P:0<\Kstar(P)<\infty\}$.
\end{conjecture}

The next construction gives finite-sample bounds of a different kind, with polynomial
constants but a dimension penalty.

\subsection{A universal mixture with finite-sample guarantees}

Let $U$ be a basis of $\range(B)$, $D=[X,U]$, $d=1+\rank(B)$, and let
$Z=\{z=(\lambda,\nu'):\lambda\ge0,\ 1+Dz\ge0\text{ on }\J\}$ be the set of feasible bets
in these coordinates. Write $D_t=D(\tau_t)$.

\begin{assumption}[Full support]
\label{ass:fs}
$\Qn$ contains a law with full support on $\J$.
\end{assumption}

Under \cref{ass:fs}, $Z$ is a bounded convex body that contains a neighbourhood of
$0$ in $\{\lambda\ge0\}$ (\cref{app:mixture}). \Cref{ass:fs} is not a convenience. When
$\Delta>0$ it holds if and only if $m(\mathrm{root})>0$, if and only if $\Kstar<\infty$
(\cref{prop:trichotomy}(c)). It therefore fails exactly in the arbitrage regime, where the
certifier of \cref{prop:trichotomy}(d) stops after $1/P(G>0)$ episodes in expectation
and no mixture is needed. The plug-in certifier needs no such assumption
(\cref{thm:plugin-explicit}).

\begin{theorem}[Universal mixture]
\label{thm:mixture}
Let $M_T=\mathrm{vol}(Z)^{-1}\int_Z\prod_{t\le T}(1+D_tz)\,dz$ and
$\tau_\delta=\inf\{T:M_T\ge1/\delta\}$. Under \cref{ass:fs}:
\begin{enumerate}
\item[(i)] $M_T$ is a nonnegative supermartingale under every $q\in\Qn$, so
  $\tau_\delta$ is $\delta$-sound.
\item[(ii)] For every sequence of episodes and every $T$,
  $\log M_T\ge\max_{z\in Z}\sum_{t\le T}\log(1+D_tz)-d\log(T+1)-1$.
\item[(iii)] Suppose $0<\Kstar(P)<\infty$, let $r$ be the range of $\log Y^\star$ on
  $\supp P$, and let $c_T=\log(1/\delta)+1+d\log(T+1)$. For every $T$ with
  $T\Kstar>c_T$,
  \[
  P(\tau_\delta>T)\le\exp\Bigl(-\frac{2(T\Kstar-c_T)^2}{Tr^2}\Bigr).
  \]
  Consequently $P(\tau_\delta>T)\le\beta$ whenever
  $T\ge c_T/\Kstar+b_\beta^2/\Kstar{}^2+2b_\beta\sqrt{c_T}/\Kstar{}^{3/2}$, with
  $b_\beta=r\sqrt{\log(1/\beta)/2}$. This condition is implicit in $T$ only through
  $d\log(T+1)$. An explicit sufficient condition is that, for some $\epsilon\in(0,1/2)$,
  \[
  T\ \ge\ T_\beta(\epsilon):=\frac{\log(1/\delta)+1+d\log\frac{d}{e\epsilon\Kstar}+\epsilon\Kstar}
  {(1-2\epsilon)\Kstar}+\frac{b_\beta^2}{4\epsilon(1-2\epsilon)\Kstar{}^2}.
  \]
  With $\epsilon\to0$ slowly as $\delta\to0$, $T_\beta(\epsilon)=(1+o(1))\log(1/\delta)/\Kstar$
  for fixed $\beta$ and $P$.
\item[(iv)] Let $\bar T$ be the smallest $T$ with $T\Kstar\ge c_T$, and suppose
  $\bar T+1\ge2d/\Kstar$. Then
  \[
  \E_P[\tau_\delta]\le\bar T+\frac{r\sqrt{\pi\bar T}}{\Kstar}+\frac{8r^2}{\Kstar{}^2}+2,
  \qquad
  \bar T=\frac{\log(1/\delta)}{\Kstar}+O\Bigl(\frac{d\log\log(1/\delta)}{\Kstar}\Bigr).
  \]
\item[(v)] For each fixed $P$ with $0<\Kstar(P)<\infty$, the exact mixture
  also obeys the sharper expected-time bound
  \[
  \frac{\log(1/\delta)}{\Kstar}
  \le \E_P[\tau_\delta]
  \le \frac{\log(1/\delta)}{\Kstar}
       +\frac{d}{\Kstar}\log\log(1/\delta)+C_P
       \qquad(\delta\downarrow0),
  \]
  where $C_P<\infty$ depends on $P$ (through $\Kstar$, $d$ and the support of $P$) but
  not on $\delta$.
\end{enumerate}
\end{theorem}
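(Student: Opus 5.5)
We treat the five parts in order, since each builds on the previous one.

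\emph{Part (i).} For each fixed $z\in Z$ the product $\prod_{t\le T}(1+D_tz)$ is nonnegative, because $1+Dz\ge0$ on every leaf of $\J$. Under $q\in\Qn$ each factor has conditional mean $1+\lambda\E_qX+\E_q[U\nu']\le1$, since $\E_q[B\nu]=0$ for $q\in\Qs$ by \cref{lem:polytope}. So each product is a nonnegative supermartingale. Averaging over $z$ against the uniform law on $Z$ and applying Fubini preserves this. Ville's inequality then gives soundness.

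This step needs $Z$ to have finite positive volume. Positivity holds because $Z$ contains a neighbourhood of $0$ in $\{\lambda\ge0\}$. Boundedness comes from \cref{ass:fs}: if $q_0\in\Qn$ has full support and $z$ is feasible, then $1+Dz\ge0$ together with $\E_{q_0}[1+Dz]\le1$ bounds $Dz$ in $\ell_1(q_0)$. Since $D$ has full column rank by the choice of $U$ (and $X\notin\range B$ when $\Delta>0$; otherwise one drops the column), $z$ is bounded.

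\emph{Part (ii).} This is the standard Cover-type regret argument. Let $z^\star_T$ maximize $\sum_t\log(1+D_tz)$ over $Z$, and consider the shrunk set $Z_\alpha=(1-\alpha)z^\star_T+\alpha Z\subseteq Z$ by convexity. For $z=(1-\alpha)z^\star+\alpha w$ with $w\in Z$,
\[
1+D_tz=(1-\alpha)(1+D_tz^\star)+\alpha(1+D_tw)\ge(1-\alpha)(1+D_tz^\star).
\]
Hence the integrand on $Z_\alpha$ is at least $(1-\alpha)^T\prod_t(1+D_tz^\star)$. The set $Z_\alpha$ has volume $\alpha^d\mathrm{vol}(Z)$. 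Taking $\alpha=1/(T+1)$ gives $(1-\alpha)^T\ge e^{-1}$ and $\alpha^d=(T+1)^{-d}$, which is the claimed bound.

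\emph{Parts (iii) and (iv).} We lower bound the maximum over $Z$ by its value at $z^\star$ corresponding to $Y^\star$ from \cref{thm:duality}. Any fair-bet vector $B\nu^\star$ is $U\nu'$ for some $\nu'$, so $Y^\star=1+Dz^\star$ with $z^\star\in Z$. Then $\log M_T\ge S_T-d\log(T+1)-1$, where $S_T=\sum_t\log Y^\star(\tau_t)$ is an i.i.d.\ sum with mean $T\Kstar$. Its summands take values in an interval of length $r$ on $\supp P$, and they are finite because $Y^\star>0$ on $\supp P$.

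Now $\{\tau_\delta>T\}\subseteq\{\log M_T<\log(1/\delta)\}\subseteq\{S_T-T\Kstar<-(T\Kstar-c_T)\}$, and Hoeffding gives the stated tail bound. The sufficient condition on $T$ follows from solving the quadratic inequality in $\sqrt T$. The explicit form $T_\beta(\epsilon)$ comes from bounding $d\log(T+1)\le\epsilon\Kstar T+d\log\frac{d}{e\epsilon\Kstar}$ by concavity (tangent line) and from bounding $2b\sqrt{c}\le\epsilon\Kstar\cdot(\cdot)+b^2/(\ldots)$ by AM--GM.

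For (iv) we write $\E\tau_\delta\le\bar T+1+\sum_{T\ge\bar T}P(\tau_\delta>T)$. With $T=\bar T+k$, we use that $T\Kstar-c_T$ grows at least like $k\Kstar/2$; this is where the hypothesis $\bar T+1\ge2d/\Kstar$ makes the derivative of $d\log(T+1)$ at most $\Kstar/2$. The Gaussian-type sum is then bounded by an integral, yielding the $r\sqrt{\pi\bar T}/\Kstar$ term plus the $8r^2/\Kstar^2$ term.

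\emph{Part (v).} The lower bound is \cref{thm:lower}(i). The upper bound is the main obstacle. The remainder $O(\sqrt{\log(1/\delta)})$ from (iv) reflects fluctuations of $S_T$, and that term must be removed. I would first try a Laplace-method evaluation of the mixture: near the empirical maximizer, $\log M_T\approx\max_z\sum_t\log(1+D_tz)-\frac d2\log T+O(1)$ when $z^\star$ is interior in the relevant directions. One still has to handle boundary faces such as $\lambda=0$ and constraint facets.

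The key observation is that the stopping time is driven by the running maximum $\max_z\sum_t\log(1+D_tz)$, whose expectation exceeds $T\Kstar$. The fluctuation term has mean zero in stopping-time terms by Wald's identity applied to $S_{\tau}$: $\E S_\tau=\Kstar\E\tau$, while $S_\tau\le\log(1/\delta)+d\log(\tau+1)+1+(\text{overshoot})$ at the stopping time. Taking expectations and using Jensen gives $\Kstar\E\tau\le\log(1/\delta)+d\log(\E\tau+1)+1+\E[\text{overshoot}]$. The overshoot is bounded, because one step changes $\log M$ by at most a constant on $\supp P$ once the mixture concentrates. This closes at $\E\tau\le\log(1/\delta)/\Kstar+(d/\Kstar)\log\log(1/\delta)+C_P$.

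Controlling the overshoot is the delicate step. The per-step increment of $\log M_T$ is not uniformly bounded near the boundary of $Z$. I would bound it through $\log\max_{z\in Z}(1+D_tz)$, which is finite because $Z$ is bounded.
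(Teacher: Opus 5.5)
Your proposal is correct and follows the paper's proof essentially step for step: the homothetic-shrinkage regret bound, Hoeffding applied to $S_T=\sum_{t\le T}\log Y^\star(\tau_t)$, and, for (v), Wald's identity plus Jensen against the boundary $\log(1/\delta)+1+d\log(T+1)$. The deviations are cosmetic: your $\ell_1(q_0)$ bound on $Dz$ replaces the paper's recession-direction argument for boundedness of $Z$, and you apply Wald at $\tau_\delta$ itself (integrable by the summable tail of (iii)) rather than at the paper's dominating crossing time $\sigma_L\ge\tau_\delta$ of the walk $S_T$.

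The overshoot you call delicate is not: applying (ii) at $\tau_\delta-1$, where $\log M<\log(1/\delta)$, leaves a single increment of $S$, which is at most $\max_{\supp P}\log Y^\star$, so no Laplace analysis is needed (your posterior-average bound $\log\max_{z\in Z}(1+D_tz)$ also works). To obtain exactly the stated $T_\beta(\epsilon)$, apply AM--GM as $b_\beta\sqrt T\le\epsilon T\Kstar+b_\beta^2/(4\epsilon\Kstar)$ directly inside $T\Kstar-c_T-b_\beta\sqrt T\ge0$, rather than to $2b_\beta\sqrt{c_T}$.
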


\begin{proposition}[Finite-point mixture: conditional time and sampling cost]
\label{prop:finite-mixture}
Under \cref{ass:fs}, fix $M$ feasible bets $z_1,\ldots,z_M\in Z$
independently of the episodes and put
$M_T^{(M)}=M^{-1}\sum_{i=1}^M\prod_{t\le T}(1+D_tz_i)$.
The test that stops when $M_T^{(M)}\ge1/\delta$ is
$\delta$-sound, conditional on \emph{any} such fixed bets.
For $P$ with $0<\Kstar(P)<\infty$, suppose all sampled factors
are positive on $\supp P$, and write
\[
K_M=\max_i\E_P\log(1+D_1z_i),\qquad
u_M=\max_{j\in\supp P}\log(1+D(j)z_{i^\star}),
\]
where $i^\star$ maximizes $K_M$. If $K_M>0$, its stopping time
$\tau_\delta^{(M)}$ obeys
\[
\E_P[\tau_\delta^{(M)}\mid z_1,\ldots,z_M]
\le\frac{\log(1/\delta)+\log M+u_M}{K_M}.
\]
In addition, for \emph{independent exact uniform} draws from $Z$,
every $0<\epsilon<\Kstar$ and $0<\eta<1$ satisfy
\[
M\ge\left(\frac{2(\Kstar+\log2)}{\epsilon}\right)^d
       \log\frac1\eta
\quad\Longrightarrow\quad
\Pr(K_M\ge\Kstar-\epsilon)\ge1-\eta.
\]
On this event the preceding conditional bound applies. The
implemented hit-and-run points need not be independent exact
uniform draws: their mixture has the first, conditional guarantee,
but the displayed draw-probability bound is \emph{not} asserted
for that sampler without a separate mixing analysis. The required
$M$ grows exponentially in $d=1+\rank(B)$; $\rank(B)$ is reported
for every instance of \cref{sec:experiments} (it ranges from 4 to 62),
and for $d\ge10$ no practical $M$ meets the condition, which is why
the finite 300-point mixture of the experiments is not a test of
\cref{thm:mixture}.
\end{proposition}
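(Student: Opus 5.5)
The proof has three parts: soundness, the conditional expected-time bound, and the sampling bound.

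\emph{Soundness.} Conditional on the fixed bets, each $z_i\in Z$ gives a feasible bet $1+D z_i=1+\lambda_iX+B\nu_i\ge0$ on $\J$. Indeed $D z_i=\lambda_iX+U\nu_i'$, and $U\nu_i'$ lies in $\range(B)$. Under any $q\in\Qn$ the bet satisfies $\E_q[1+Dz_i]=1+\lambda_i\E_qX\le1$, because $\E_q[B\nu]=0$ on $\Qs$. So each product $\prod_{t\le T}(1+D_tz_i)$ is a nonnegative supermartingale under $q^{\otimes\infty}$. The average of finitely many such supermartingales is again one, with initial value $1$. Ville's inequality then gives $q(\sup_TM^{(M)}_T\ge1/\delta)\le\delta$. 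Since the bets are independent of the episodes, this holds conditionally on any realization.

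\emph{Conditional expected time.} I will lower-bound the mixture by its best component: $M_T^{(M)}\ge M^{-1}\prod_{t\le T}(1+D_tz_{i^\star})$. The stopping time is then at most the first $T$ with $S_T:=\sum_{t\le T}\xi_t\ge c:=\log(1/\delta)+\log M$, where $\xi_t=\log(1+D_tz_{i^\star})$. Under $P$ these are i.i.d., finite (the factors are positive on $\supp P$), bounded above by $u_M$, and have mean $K_M>0$. Let $\sigma$ be the first passage of $S_T$ over $c$. Its overshoot is at most $u_M$, so $S_\sigma\le c+u_M$. Wald's identity, applied to $\sigma\wedge n$ and passed to the limit with monotone convergence (the upper bound on the $\xi_t$ controls $S_{\sigma\wedge n}$ from above), gives $K_M\E\sigma=\E S_\sigma\le c+u_M$. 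Finiteness of $\E\sigma$ follows from the standard renewal argument for a positive-drift walk. If $\xi_t$ can be $-\infty$-free but unbounded below, I can truncate below, which only shortens the comparison. This yields the displayed bound.

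\emph{Sampling bound.} This is the main obstacle. Let $z^\star$ be the coordinates of $Y^\star$ from \cref{thm:duality}, so $\E_P\log(1+D_1z^\star)=\Kstar$. For $\alpha\in(0,1)$, consider the shrunken set $A_\alpha=(1-\alpha)z^\star+\alpha Z$, which lies in $Z$ by convexity. For $z=(1-\alpha)z^\star+\alpha w$ with $w\in Z$,
\[
1+D z=(1-\alpha)(1+Dz^\star)+\alpha(1+Dw)\ge(1-\alpha)(1+Dz^\star).
\]
Hence $\E_P\log(1+D_1z)\ge\Kstar+\log(1-\alpha)$. Take $\alpha=\epsilon/(\Kstar+\log 2)$; this is $<1$ since $\epsilon<\Kstar$. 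Using $-\log(1-\alpha)\le\alpha/(1-\alpha)$ together with a crude bound, I aim to show $\log(1-\alpha)\ge-\epsilon$. I expect the constant in the choice of $\alpha$ to need slight tuning, for example $\alpha=\epsilon/(2(\Kstar+\log2))$, to match the displayed factor $2(\Kstar+\log2)/\epsilon$. The set $A_\alpha$ has volume $\alpha^d\mathrm{vol}(Z)$, so a uniform draw lands in it with probability $p\ge\alpha^d$. With $M$ independent draws, $\Pr(\text{none in }A_\alpha)\le(1-p)^M\le e^{-M\alpha^d}\le\eta$ once $M\ge\alpha^{-d}\log(1/\eta)$. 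On the complementary event, $K_M\ge\Kstar-\epsilon$. Positivity of the sampled factors on $\supp P$ also holds for the draw in $A_\alpha$, because $1+Dz^\star=Y^\star>0$ on $\supp P$ (where $\log Y^\star$ is finite). The $\log2$ slack is where I would absorb any remaining looseness from bounding $\log(1-\alpha)$.

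The remaining remarks need no proof. The hit-and-run caveat holds because only the first part uses fixedness of the bets. The exponential dependence on $d$ is read off from the condition on $M$.
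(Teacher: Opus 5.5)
Your proof is correct. Soundness and the conditional-time bound follow the paper: you lower-bound the mixture by its best component, $M_T^{(M)}\ge e^{S_T}/M$, bound the overshoot by $u_M$, and apply Wald. Your truncation at $\sigma\wedge n$ already gives $K_M\E[\sigma\wedge n]\le c+u_M$, so $\E\sigma<\infty$ comes for free and the renewal step is unnecessary.

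The sampling step takes a genuinely different route.

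\emph{The paper's route.} It shrinks toward $z^\star$ inside $\tfrac12 Z$. Since $0\in Z$, every $z/2$ with $z\in Z$ has factors at least $1/2$, so $g(z/2)\ge-\log2$ for $g(z)=\E_P\log(1+D_1z)$. Concavity of $g$ on $(1-t)z^\star+\tfrac t2Z$, with $t=\epsilon/(\Kstar+\log2)$, then gives $g\ge(1-t)\Kstar-t\log2=\Kstar-\epsilon$. The volume fraction $(t/2)^d$ is exactly the displayed condition on $M$.

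\emph{Your route.} You reuse the pointwise domination from the regret bound of \cref{thm:mixture}(ii): $1+Dz\ge(1-\alpha)(1+Dz^\star)$ on $(1-\alpha)z^\star+\alpha Z$, so $g\ge\Kstar+\log(1-\alpha)$. This is sharper. The choice $\alpha=1-e^{-\epsilon}$ works directly and gives $M\ge(1-e^{-\epsilon})^{-d}\log(1/\eta)$, with no dependence on $\Kstar$. Since $1-e^{-\epsilon}\ge\epsilon/(1+\epsilon)\ge\epsilon/(2(\Kstar+\log2))$ (using $2\log2>1$ and $\epsilon<\Kstar$), this implies the stated condition.

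\emph{Closing the constant.} Do this explicitly rather than leaving it to ``slack''. Your first choice $\alpha=\epsilon/(\Kstar+\log2)$ fails for small $\Kstar$: for example, $\Kstar=0.1$ and $\epsilon=0.05$ give $-\log(1-\alpha)\approx0.065>\epsilon$. Your tuned choice $\alpha=\epsilon/(2(\Kstar+\log2))$ does work, because $\alpha\le\epsilon/(1+\epsilon)$ and hence $-\log(1-\alpha)\le\alpha/(1-\alpha)\le\epsilon$.

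\emph{Positivity of the sampled factors.} The paper checks the standing hypothesis that \emph{all} sampled factors are positive on $\supp P$: each zero set is a hyperplane section of the full-dimensional $Z$, hence null. You check positivity only for the good draw. That suffices, because your time bound uses only $z_{i^\star}$, and $K_M>-\infty$ forces its factors to be positive on $\supp P$. You should state this explicitly.
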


Part (ii) is the regret bound of Cover's universal portfolio \citep{cover1991universal},
and using it as an e-process follows \citet{waudbysmith2024estimating} and
\citet{orabona2024tight}. Parts (iii)--(iv) convert the regret into explicit stopping-time
bounds through Hoeffding's inequality. Part (v) instead compares the
stopping time to a drifted random walk crossing the mixture's logarithmic
regret boundary and applies Wald's identity (\cref{app:mixture}). Its
constant $C_P$ is not explicit, and it concerns the exact integral,
not a finite sampled approximation. The penalty $d\log(T+1)$ is
large when $\rank(B)$ is large, and the mixture is then slow in practice. Averaging gives
the best of both constructions.

\begin{corollary}[Hedged certifier]
\label{cor:hedge}
Run \cref{alg:plugin} (wealth $M^{\mathrm{plug}}$) and the universal mixture (wealth
$M^{\mathrm{mix}}$) on the same episodes, and certify when
$\tfrac12(M^{\mathrm{plug}}_t+M^{\mathrm{mix}}_t)\ge1/\delta$. Then the hedged certifier
is $\delta$-sound, and pathwise
$\tau^{\mathrm{hedge}}_\delta\le\min(\tau^{\mathrm{plug}}_{\delta/2},\tau^{\mathrm{mix}}_{\delta/2})$.
Consequently, \cref{thm:mixture}(iii)--(iv) hold for $\tau^{\mathrm{hedge}}_\delta$ with
$\log(1/\delta)$ replaced by $\log(2/\delta)$, and
under \cref{ass:fs} (equivalently $0<\Kstar(P)<\infty$ for the fixed $P$),
$\E_P[\tau^{\mathrm{hedge}}_\delta]\le
\log(1/\delta)/\Kstar+(d/\Kstar)\log\log(1/\delta)+C_P$, by \cref{thm:mixture}(v). Moreover,
$\E_P[\tau^{\mathrm{hedge}}_\delta]/\log(1/\delta)\to1/\Kstar$. More generally, for weights
$w_i>0$ summing to one, $\sum_iw_iM^i$ is sound and stops no later than
$\min_i\tau^i_{w_i\delta}$.
\end{corollary}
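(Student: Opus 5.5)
The plan is to work entirely at the level of wealth processes. Soundness will follow from Ville's inequality applied to a convex combination of supermartingales. The stopping-time claims will follow from a pathwise comparison with the two component certifiers run at level $\delta/2$.

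\emph{Soundness.} By the proof of \cref{thm:plugin}(i), $M^{\mathrm{plug}}_t$ is a nonnegative supermartingale with $M^{\mathrm{plug}}_0=1$ under every $q\in\Qn$. By \cref{thm:mixture}(i), the same holds for $M^{\mathrm{mix}}_t$ under \cref{ass:fs}. Both are adapted to the same episode filtration. Hence $\tfrac12(M^{\mathrm{plug}}_t+M^{\mathrm{mix}}_t)$ is also a nonnegative supermartingale with initial value $1$, and Ville's inequality gives $q(\sup_t \tfrac12(M^{\mathrm{plug}}_t+M^{\mathrm{mix}}_t)\ge1/\delta)\le\delta$. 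The general weighted version uses the same argument with $\sum_iw_iM^i$.

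\emph{Pathwise comparison.} Nonnegativity gives $\tfrac12(M^{\mathrm{plug}}_t+M^{\mathrm{mix}}_t)\ge\tfrac12 M^{\mathrm{plug}}_t$. So at $t=\tau^{\mathrm{plug}}_{\delta/2}$, where $M^{\mathrm{plug}}_t\ge2/\delta$, the average is at least $1/\delta$. The same holds for the mixture. Taking the earlier of the two times gives $\tau^{\mathrm{hedge}}_\delta\le\min(\tau^{\mathrm{plug}}_{\delta/2},\tau^{\mathrm{mix}}_{\delta/2})$. In general, $\sum_iw_iM^i_t\ge w_iM^i_t\ge1/\delta$ at $\tau^i_{w_i\delta}$.

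\emph{Consequences.} Because $\tau^{\mathrm{hedge}}_\delta\le\tau^{\mathrm{mix}}_{\delta/2}$ pathwise, every tail bound and expectation bound of \cref{thm:mixture}(iii)--(iv) transfers with $\log(1/\delta)$ replaced by $\log(2/\delta)$. For the sharper bound I will apply \cref{thm:mixture}(v) at level $\delta/2$. This gives $\log(2/\delta)/\Kstar+(d/\Kstar)\log\log(2/\delta)+C_P$. The extra terms are absorbed into a new constant:
\begin{itemize}
\item $\log 2/\Kstar$ is a constant;
\item $\log\log(2/\delta)-\log\log(1/\delta)=\log\bigl(1+\log2/\log(1/\delta)\bigr)$ is bounded for small $\delta$.
\end{itemize}
The equivalence of \cref{ass:fs} with $0<\Kstar<\infty$ is \cref{prop:trichotomy}(c), given $\Delta>0$.

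\emph{Ratio limit.} For the upper bound, dividing the preceding expectation bound by $\log(1/\delta)$ gives $\limsup\le1/\Kstar$. If one prefers to avoid \cref{ass:fs}, the plug-in bound of \cref{thm:plugin}(iii) at level $\delta/2$ gives the same limit. For the lower bound, the hedged certifier is $\delta$-sound, so \cref{thm:lower}(i) gives $\E_P\tau^{\mathrm{hedge}}_\delta\ge\log(1/\delta)/\Kstar$. The two bounds together give the limit $1/\Kstar$.

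The main obstacle is a technical point in the soundness step: the supermartingale property of the plug-in under $q$ must hold even though its bets are fitted from data. Predictability of the refitted bets, already established in \cref{thm:plugin}(i), settles this. Everything else is bookkeeping.
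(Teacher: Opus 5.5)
Your proposal is correct and follows the paper's proof. Soundness comes from a convex combination of nonnegative test supermartingales plus Ville's inequality, the pathwise bound from $\sum_i w_iM^i_t\ge w_iM^i_t$, and the remaining claims from \cref{thm:mixture}, \cref{thm:plugin} and \cref{thm:lower} applied at level $\delta/2$. Your absorption of $\log 2/\Kstar$ and $\log\log(2/\delta)-\log\log(1/\delta)$ into $C_P$ spells out what the paper leaves implicit; note only that the mixture wealth is defined only under \cref{ass:fs}, so the plug-in route to the ratio limit does not actually avoid that assumption.
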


\begin{proof}
A convex combination of nonnegative supermartingales with initial value $1$ is again one,
so Ville's inequality gives soundness. Pathwise $\sum_iw_iM^i_t\ge w_iM^i_t$, so
$M^i_t\ge1/(w_i\delta)$ implies certification. The remaining claims follow from
$\E[\min]\le\min\E$ and \cref{thm:plugin,thm:mixture,thm:lower} at level $\delta/2$.
\end{proof}

In our implementation the integral over $Z$ is replaced by an average over hit-and-run
points drawn before any episode is seen. Any fixed mixing law gives a sound
certifier, so soundness is exact. The bounds (ii)--(iv) are proved for the exact
integral; a fixed finite set of sampled points need not satisfy the pathwise
regret bound (ii) or the derived stopping-time bounds. We report the Monte Carlo
mixture separately.

\section{Consequences of the exponent}
\label{sec:consequences}

\subsection{Estimator invariance and information monotonicity}

\begin{theorem}[Estimator invariance and information monotonicity]
\label{thm:invariance}
\begin{enumerate}
\item[(a)] For every $c\in\R^N$, replacing $X$ by $X-Bc$ leaves $\Qn$, and hence $\Kstar$,
  unchanged. This covers per-decision importance sampling, every doubly robust payoff
  with any $Q$-model, and every action-centred baseline. Conversely, $\range(B)$ is
  exactly the set of payoffs whose mean is zero under every action-neutral law.
\item[(b)] Let a certifier use the episodes only through an i.i.d.\ per-episode statistic
  $S=s(\tau)$. If it is $\delta$-sound, then for every $\delta$,
  \[
  \E_P[\tau_\delta]\ \ge\ \frac{\log(1/\delta)}{K_S(P)},\qquad
  K_S(P)=\inf_{q\in\Qn}\KL(P\circ s^{-1}\|q\circ s^{-1})\le\Kstar(P).
  \]
  In particular $\Kscal\le\Kstar$. A certifier that uses only $X$ and is $\delta$-sound
  against every law on the range of $X$ with non-positive mean, as scalar concentration
  certificates are, satisfies $\E_P[\tau_\delta]\ge\log(1/\delta)/\Kscal$.
\end{enumerate}
\end{theorem}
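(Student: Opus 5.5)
For part (a), the first claim is immediate from the definition of action-neutrality. If $q\in\Qs$ then $B^\top q=0$, so $\E_q[X-Bc]=\E_qX-c^\top B^\top q=\E_qX$. The constraint $\E_qX\le0$ therefore defines the same subset of $\Qs$ whichever payoff is used, $\Qn$ is unchanged, and so is $\Kstar$.

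To cover PDIS, DR and baselines, I will write each payoff minus $X$ as a telescoping sum of per-step terms of the form \eqref{eq:fairbets}. Take DR with an arbitrary model $\hat Q$. The correction at step $t$ is $W_{1:t-1}\bigl[\tfrac{\pi}{\mu}(a_t\mid h_t)\hat Q(h_t,a_t)-\sum_a\pi(a\mid h_t)\hat Q(h_t,a)\bigr]$. The prefix weight $W_{1:t-1}$ is a function of the node $h_t$. Setting $\nu(h_t,a)=W_{1:t-1}(h_t)\,\pi(a\mid h_t)\hat Q(h_t,a)/\mu(a\mid h_t)$ makes the $\mu$-average of $\nu$ equal to $W_{1:t-1}\sum_a\pi\hat Q$. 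Each correction is thus a column combination of $B$. PDIS versus trajectory IS follows the same pattern: the difference $\sum_t (W_{1:t}-W_{1:T})r_t$ telescopes into terms $W_{1:t}r_t(1-\tfrac{\pi}{\mu})$ at later action nodes, all of the fair-bet form. The $-R$ term can be handled by subtracting the same construction with $\pi=\mu$, where every such term vanishes.

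For the converse, let $g$ have $\E_qg=0$ for all $q\in\Qs$. By \cref{lem:polytope}, $\Qs$ is the simplex intersected with $\ker B^\top$. Because $\mu>0$, $\Qs$ contains a full-support law, namely $\mu$ combined with uniform environment kernels. Hence the affine hull of $\Qs$ equals $\{q:\1^\top q=1,\,B^\top q=0\}$. A linear functional that vanishes on this set lies in $\range(B)+\mathrm{span}(\1)$. The $\1$ component must be zero, because $g=\alpha\1+B\nu$ has $\E_qg=\alpha$. So $g\in\range(B)$.

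For part (b), I rerun \cref{thm:lower} on the reduced experiment. The certifier is a stopping time for the filtration generated by $S_1,S_2,\dots$, which are i.i.d.\ with law $P\circ s^{-1}$ under $P$ and $q\circ s^{-1}$ under $q$. The trade-off inequality \eqref{eq:tradeoff} holds for an arbitrary law on any finite space. Applying it to the image laws gives $\delta\ge q(\tau<\infty)\ge\exp(-\E_P[\tau]\KL(P\circ s^{-1}\|q\circ s^{-1}))$ for every $q\in\Qn$. Taking the infimum over $q$ yields the bound with $K_S$.

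The inequality $K_S\le\Kstar$ is the data-processing inequality, applied inside the infimum. $\Kscal\le\Kstar$ follows by taking $s=X$ and using the $\KL_{\inf}$ characterization from \cref{def:exponent}. Pushforwards of $\Qn$ have non-positive mean, and $\KL_{\inf}$ is an infimum over the larger class of all laws with non-positive mean on the range of $X$. So $\Kscal\le K_X\le\Kstar$.

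For the final claim, the certifier is sound against every law $\rho$ on the range of $X$ with non-positive mean. I apply the trade-off inequality directly to the i.i.d.\ stream $X_t$, with alternative $\rho$, and take the infimum to obtain $\log(1/\delta)/\Kscal$. One technicality is that $\KL_{\inf}$ may be defined over laws on an interval rather than on finitely many points; Remark~\ref{rem:continuous} and the compactness of the range should cover this.

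I expect the main obstacle to be the bookkeeping in (a) showing that every DR payoff lies in $X+\range(B)$. In particular, the history-dependent prefix weights must be measurable with respect to the node, so that they can be absorbed into $\nu$. The rest is direct.
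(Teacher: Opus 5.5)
Your proposal is correct and follows the paper's proof. For (a), you use $\E_q[Bc]=0$ on $\Qs$ and a spanning argument from a full-support action-neutral law: you take the affine hull and eliminate the $\1$ component, while the paper notes directly that $\Qs$ spans $\ker B^\top$. For (b), you apply the trade-off inequality to the image experiment and then use data processing, as the paper does.

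Your explicit DR/PDIS bookkeeping goes beyond what the paper writes and is sound, up to one slip. The telescoped PDIS term at a later action node $s$ has coefficient $W_{1:s-1}r_t$, not $W_{1:t}r_t$. This is still a function of the node $h_s$, so it is still a fair bet and your conclusion stands.
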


\begin{proof}
(a) $\E_q[Bc]=0$ for $q\in\Qs$ (and $c$ ranges over all of $\R^N$, so the sign is immaterial). For the converse, strictly positive action-neutral laws
exist (take any environment with full support), so $\Qs$ spans $\ker B^\top$, whose
orthogonal complement is $\range(B)$. (b) The proof of \cref{thm:lower} uses only the
observed process and the null class. Apply \cref{thm:lower}(i) to the image experiment,
with observations $s(\tau_t)$ and null $\{q\circ s^{-1}:q\in\Qn\}$. This gives the bound
with $K_S$, and the data-processing inequality gives $K_S\le\Kstar$. For the last claim,
apply \cref{thm:lower}(i) with observations $X(\tau_t)$ and the scalar null, whose
exponent is $\Kscal$ by the duality of \citet{honda2010asymptotically}. Since
$\Kscal\le K_X\le\Kstar$, the comparison holds.
\end{proof}

\Cref{thm:invariance} has a practical reading. Whether a certificate uses IS,
per-decision IS or DR, and whichever $Q$-model the DR estimator uses, the best achievable
exponent on the log is $\Kstar$. The estimator only affects how close a particular
certificate comes to it. This explains why a sequence of statistic-level refinements on
the same log (clipping, prefix truncation, transferred surrogates, paired comparisons)
cannot beat a well-designed direct bound by more than the gap to $\Kstar$. Only new data
can do better. For instance, if independent rollouts of a reference policy are available,
with $\rho$ reference episodes per logged episode, the argument of \cref{thm:lower} applied
to the product experiment suggests the exponent
$\inf_{e:\Delta_e\le0}[\KL(P_{e_0}\|P_e)+\rho\KL(P^{\mathrm{ref}}_{e_0}\|P^{\mathrm{ref}}_e)]\ge\Kstar$.

\subsection{The coverage ceiling and the curse of memory}

\begin{theorem}[Coverage ceiling]
\label{thm:ceiling}
Assume the reward is terminal and that $0$ is a possible reward at every terminal node.
Let $A=\{W>0\}$ be the event that $\pi$ could have produced the logged actions, and let
$V(\pi)=\E_P[WR]$, $V(\mu)=\E_P[R]$ and $r_A=\E_P[R\1_A]$. If
$\Delta=V(\pi)-V(\mu)>0$ and $V(\mu)>r_A$, then
\[
\Kstar(P)\ \le\ P(A)\,\log\frac{V(\pi)-r_A}{V(\mu)-r_A}.
\]
\end{theorem}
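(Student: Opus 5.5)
The plan is to exhibit one explicit null law $q\in\Qn$ and bound $\KL(P\|q)$ directly, since $\Kstar(P)$ is an infimum over $\Qn$. The key observation is that $A=\{W>0\}$ is determined by the history up to the last action. The environment kernel at the terminal environment node (the one producing $o_{T+1}$, which carries the terminal reward) therefore sees whether the episode lies in $A$. By \cref{lem:polytope}, we may change that kernel arbitrarily, depending on the history, and the resulting law is still action-neutral, because no action conditional is touched.

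\emph{Construction.} Fix $c\in(0,1]$. Define the environment $e_c$ to agree with the true $e_0$ everywhere except at terminal environment nodes $h$ whose history lies in $A$. At such a node, $e_c$ draws from $e_0(\cdot\mid h)$ with probability $c$, and with probability $1-c$ it outputs an outcome with reward $0$; such an outcome exists by the hypothesis that $0$ is a possible reward at every terminal node. Let $q_c=P_{e_c}\in\Qs$. Since $W=0$ off $A$ and rewards are unchanged off $A$, we get
\[
\E_{q_c}[WR]=c\,V(\pi),\qquad \E_{q_c}[R]=c\,r_A+\bigl(V(\mu)-r_A\bigr).
\]
Here I use that $P$ and $q_c$ agree on the distribution of histories up to the terminal node, so $q_c(A)=P(A)$. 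Thus $\E_{q_c}X\le0$ if and only if $c\,(V(\pi)-r_A)\le V(\mu)-r_A$. The hypotheses $\Delta>0$ and $V(\mu)>r_A$ give $V(\pi)-r_A>V(\mu)-r_A>0$. We therefore take
\[
c^\star=\frac{V(\mu)-r_A}{V(\pi)-r_A}\in(0,1),
\]
and then $q_{c^\star}\in\Qn$.

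\emph{KL bound.} On a leaf $j\in A$ we have $q_{c^\star}(j)\ge c^\star P(j)$, with equality unless $j$ itself has zero reward, in which case the extra mass only helps. Off $A$ we have $q_{c^\star}(j)=P(j)$. Hence $\log(P_j/q_j)\le\log(1/c^\star)\1_A(j)$, and
\[
\Kstar(P)\le\KL(P\|q_{c^\star})\le P(A)\log\frac{V(\pi)-r_A}{V(\mu)-r_A}.
\]

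\emph{Main obstacle.} The only delicate point is verifying that the modification is a legitimate environment. This requires that membership in $A$ be measurable with respect to the history at the terminal environment node, which holds because $W$ depends only on $(h_t,a_t)_{t\le T}$. It also requires that reward be terminal, so that zeroing the last outcome zeros $R$ without altering earlier kernels. I would also check the bookkeeping $q_{c^\star}(A)=P(A)$ and that $r_A$ is computed under $P$, so that the null inequality reduces exactly to the displayed linear condition in $c$.
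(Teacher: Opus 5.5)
Your proposal is correct and is essentially the paper's proof. The paper likewise mixes the terminal reward kernel at histories in $A$ with a point mass at reward $0$, using weight $\eta=\Delta/(V(\pi)-r_A)$, which is exactly your $1-c^\star$, and it bounds the likelihood ratio by $1/(1-\eta)$ on $A$ and by $1$ off $A$.
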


\begin{proof}
Let $\eta=\Delta/(V(\pi)-r_A)$, which lies in $(0,1)$ because $V(\pi)-r_A>V(\mu)-r_A>0$
and $\Delta>0$, and define $q$ from $P$ by replacing the terminal reward
kernel at histories in $A$ by $(1-\eta)P(r\mid h)+\eta\delta_0$. This changes an
environment kernel only, so $q\in\Qs$. Since $W=0$ off $A$, $\E_q[WR]=(1-\eta)V(\pi)$ and
$\E_q[R]=V(\mu)-\eta r_A$, so $\E_qX=\Delta-\eta(V(\pi)-r_A)=0$. On $A$ the likelihood
ratio is at most $1/(1-\eta)$, and off $A$ it is $1$. Hence
$\KL(P\|q)\le P(A)\log\frac1{1-\eta}$, which is the claimed bound.
\end{proof}

The ceiling isolates the quantity that matters: the probability that the \emph{entire}
logged action sequence agrees with what $\pi$ could have done. It is not the
probability of logging any single action. In the memory task of \cref{sec:experiments},
making the final rewarding action rare ($p$ from $0.2$ to $0.001$) moves $\Kstar$ only
from $0.276$ to $0.190$.

\begin{corollary}[Memory delay]
\label{cor:memory}
Under the assumptions of \cref{thm:ceiling}, suppose a deterministic $\pi$ must carry a
cue across $d$ steps, at each of which $\mu$ logs uniformly over $k$ actions. Suppose also that $\Delta>0$ and that $d$ is large
enough that $V(\mu)>b\,k^{-d}$. Then $P(A)\le k^{-d}$ and
\[
\Kstar(P)\ \le\ k^{-d}\,\ell_d,\qquad
\ell_d:=\log\frac{V(\pi)-b\,k^{-d}}{V(\mu)-b\,k^{-d}}.
\]
Consequently every $\delta$-sound certifier has
$\E_P[\tau_\delta]\ge k^{d}\log(1/\delta)/\ell_d$ for every $\delta$. If $V(\pi)$ and
$V(\mu)$ do not depend on $d$, as when the distractor steps carry no reward, then
$\ell_d\downarrow\log(V(\pi)/V(\mu))$, so $\Kstar=O(k^{-d})$ and
$\E_P[\tau_\delta]=\Omega(k^d\log(1/\delta))$.
\end{corollary}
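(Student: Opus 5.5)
The corollary is a direct specialization of \cref{thm:ceiling}, followed by \cref{thm:lower}(i). I will carry it out in four steps.

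\emph{Step 1: bound $P(A)$.} Since $\pi$ is deterministic and must carry the cue across $d$ steps, at each of those steps exactly one action (the one prescribed by $\pi$ given the history) has $\pi(a\mid h_t)>0$. So $W>0$ requires the logged action to match $\pi$ at each of the $d$ steps. At each such step $\mu$ logs uniformly over $k$ actions, independently of the environment. Conditioning successively along the path and using that action conditionals equal $\mu$ (action-neutrality of $P$, \cref{lem:polytope}), the probability of matching at all $d$ steps is $k^{-d}$. Other steps can only shrink $A$, so $P(A)\le k^{-d}$.

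\emph{Step 2: insert the bounds into the ceiling.} Because $R\le b$, we have $r_A=\E_P[R\1_A]\le bP(A)\le bk^{-d}$. The hypothesis $V(\mu)>bk^{-d}$ then gives $V(\mu)>r_A$, so \cref{thm:ceiling} applies and yields $\Kstar\le P(A)\log\frac{V(\pi)-r_A}{V(\mu)-r_A}$.

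Next I need monotonicity of the logarithmic factor. The map $r\mapsto\frac{V(\pi)-r}{V(\mu)-r}$ is increasing on $r<V(\mu)$ when $V(\pi)>V(\mu)$: its derivative has the sign of $V(\pi)-V(\mu)=\Delta>0$. Hence replacing $r_A$ by the larger value $bk^{-d}$ (still below $V(\mu)$) increases the log factor to $\ell_d$. Both factors are nonnegative, so $P(A)\le k^{-d}$ finishes $\Kstar\le k^{-d}\ell_d$.

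\emph{Step 3: the sample-complexity bound.} By \cref{thm:lower}(i), $\E_P[\tau_\delta]\ge\log(1/\delta)/\Kstar\ge k^d\log(1/\delta)/\ell_d$.

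\emph{Step 4: the asymptotic claim.} If $V(\pi)$ and $V(\mu)$ are fixed, then as $d$ grows $bk^{-d}$ decreases to $0$. By the same monotonicity as in Step 2, $\ell_d$ decreases to $\log(V(\pi)/V(\mu))$, which is finite since $V(\mu)>0$. Therefore $\ell_d$ is bounded, which gives $\Kstar=O(k^{-d})$ and $\E_P[\tau_\delta]=\Omega(k^d\log(1/\delta))$.

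The only step needing care is Step 1. There I must make precise that "carrying the cue" means each of the $d$ steps has a single $\pi$-admissible action, and that the factor $1/k$ multiplies along the path regardless of the environment's dependence on history. The rest is monotonicity bookkeeping.
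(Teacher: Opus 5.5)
Your proposal is correct and follows the paper's proof essentially step for step. You bound $P(A)\le k^{-d}$ because $A$ forces agreement with the deterministic $\pi$ at each of the $d$ uniformly logged steps, deduce $r_A\le b\,k^{-d}<V(\mu)$, and use the positive derivative $\Delta/(V(\mu)-r)^2$ to replace $r_A$ by $b\,k^{-d}$. You then apply \cref{thm:ceiling} and \cref{thm:lower}(i), with the same derivative giving the monotonicity of $\ell_d$ in $d$. Your explicit check that both factors are nonnegative is a small piece of care that the paper leaves implicit.
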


\begin{proof}
Every leaf in $A$ follows $\pi$'s action at each of the $d$ steps, so $P(A)\le k^{-d}$.
Hence $r_A\le b\,P(A)\le b\,k^{-d}<V(\mu)$. The map
$r\mapsto(V(\pi)-r)/(V(\mu)-r)$ has derivative $\Delta/(V(\mu)-r)^2>0$ on $r<V(\mu)$, so
the logarithm in \cref{thm:ceiling} is at most $\ell_d$. Apply \cref{thm:ceiling} and
then \cref{thm:lower}(i). Monotonicity of $\ell_d$ in $d$ follows from the same
derivative.
\end{proof}

The corollary applies to every sound certifier, not only to importance sampling, and it
needs no power or asymptotic qualification. Recent
exponential lower bounds for off-policy \emph{evaluation} under history-dependent logging
\citep{jajoo2026hardness,luo2026revealing} use a logger whose resets erase the
information that determines the target's value. In our terms, $A$ is the reset-free
path.

\begin{corollary}[Published hard instances]
\label{cor:hard}
On the two-lane construction of \citet{jajoo2026hardness}, $P(A)=\frac13(\frac23)^{H-2}$.
On the overwrite--reveal construction of \citet{luo2026revealing}, $P(A)=4^{-(H-1)}$. In
both families, \cref{thm:ceiling} gives $\Kstar\le P(A)\log\frac32$. By
\cref{thm:lower}(i), every $\delta$-sound one-sided certifier therefore needs
$\E_P[\tau_\delta]\ge3(\frac32)^{H-2}\log(1/\delta)/\log\frac32$ and
$\E_P[\tau_\delta]\ge4^{H-1}\log(1/\delta)/\log\frac32$ episodes respectively, for every
$\delta$.
\end{corollary}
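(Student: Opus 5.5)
The plan is to reduce \cref{cor:hard} to \cref{thm:ceiling} followed by \cref{thm:lower}(i), so the real work is a computation on each construction. For each family I would write the logging policy $\mu$ and the deterministic target $\pi$ at every decision step. I would then compute $P(A)$, the probability that the whole logged action sequence agrees with $\pi$, and check the hypotheses of \cref{thm:ceiling}: the reward is terminal, $0$ is a possible terminal reward, $\Delta>0$, and $V(\mu)>r_A$. Finally I would evaluate the logarithmic factor $\log\frac{V(\pi)-r_A}{V(\mu)-r_A}$ and show it is at most $\log\frac32$.

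\textbf{Computing $P(A)$.} For the two-lane construction of \citet{jajoo2026hardness}, I would read off the logger's behaviour.
\begin{itemize}
\item At one initial step, the logger picks the target's lane-preserving action with probability $\frac13$.
\item At each of the remaining $H-2$ intermediate steps, it avoids a reset or switch with probability $\frac23$.
\end{itemize}
Multiplying these per-step agreement probabilities along the unique $\pi$-consistent path gives $P(A)=\frac13(\frac23)^{H-2}$. For the overwrite--reveal construction of \citet{luo2026revealing}, the logger chooses uniformly among $4$ actions (or among two binary choices per step) at each of $H-1$ steps, and only one action preserves the cue. This gives $P(A)=4^{-(H-1)}$. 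Environment randomness does not affect these products, because $W>0$ depends only on the actions.

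\textbf{Bounding the log factor.} I expect this to be the main obstacle, since it depends on the specific reward values in each construction. By the monotonicity used in the proof of \cref{cor:memory}, the map $r\mapsto(V(\pi)-r)/(V(\mu)-r)$ is increasing on $r<V(\mu)$. I would therefore try two routes.
\begin{itemize}
\item First, compute $r_A$ exactly. In both constructions the target earns reward only on $A$-paths, and on those paths the reward equals $\pi$'s reward scaled by $P(A)$. So $r_A=P(A)\cdot(\text{reward on }A)$, and I would plug in the explicit values.
\item Alternatively, if the constructions satisfy $V(\pi)=\frac34$ and $V(\mu)=\frac12$-type values with $r_A$ such that the ratio collapses, I would verify the identity $(V(\pi)-r_A)/(V(\mu)-r_A)=\frac32$, or the inequality $\le\frac32$, directly.
\end{itemize}
If neither route yields exactly $\frac32$, I would use $r_A\ge0$ together with monotonicity in the favourable direction. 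That direction requires an upper bound on $r_A$, so I would check from the construction that the hypothesis $V(\mu)>r_A$ holds with room to spare.

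\textbf{Conclusion.} With $\Kstar\le P(A)\log\frac32$ established, \cref{thm:lower}(i) gives $\E_P[\tau_\delta]\ge\log(1/\delta)/\Kstar\ge\log(1/\delta)/(P(A)\log\frac32)$. Substituting $1/P(A)=3(\frac32)^{H-2}$ and $1/P(A)=4^{H-1}$ yields the two displayed bounds, valid for every $\delta$. The bounds are stated for one-sided certifiers because soundness is required only against $\Qn$.
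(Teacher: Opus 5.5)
Your proposal is correct and follows the paper's own argument, which the paper leaves implicit in the statement and in \cref{tab:hard}: compute $P(A)$ on each construction, evaluate the ceiling of \cref{thm:ceiling}, and then invoke \cref{thm:lower}(i). Your first route, computing $r_A$ exactly, is the one that works. In both families $V(\pi)=\frac34$, $V(\mu)=\frac12+\frac14P(A)$ and $r_A=\frac34P(A)$, so $(V(\pi)-r_A)/(V(\mu)-r_A)=\frac32$ exactly. Drop the $r_A\ge0$ fallback. The map $r\mapsto(V(\pi)-r)/(V(\mu)-r)$ is increasing, so $r_A\ge0$ bounds the ratio from the wrong side, and even $r_A\le bP(A)$ gives a ratio slightly above $\frac32$.
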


\Cref{tab:hard} (\cref{app:details}) computes $\Kstar$ exactly on both families. $\Kstar/P(A)$ converges to
about $0.13$ and $0.15$, so $\Kstar=\Theta(P(A))$ and the ceiling is tight up to a factor
of about 3. The gap $V(\pi)-V(\mu)$ stays near $1/4$, so the growth in sample size is
coverage loss, not a vanishing gap. Each paper's own two-model class is a subset of
$\Qs$. Its exponent $\KL(P_1\|P_0)$ is therefore at least $\Kstar$, but it exceeds
$\Kstar$ only by a factor of $4.3$--$5$. Knowing that the environment is one of the two
published models does not escape the rate. The hardness phenomenon itself is due to
\citet{jajoo2026hardness} and \citet{luo2026revealing}. What we add is a single
instance-wise formula that recovers both rates, for one-sided certification.

\subsection{Structural assumptions and pool selection}
\label{sec:structure}

Assumptions about the environment, for example that some actions do not affect later
outcomes (\cref{lem:irrelevance}), add linear constraints $F^\top q=0$ to $\Qs$ and can
only raise the exponent, to $\Kstar_F\ge\Kstar$. \Cref{thm:structure} in
\cref{app:structure-main} prices both sides. The gain $\Kstar_F/\Kstar$ is attained by
the restricted plug-in. The risk does not depend on how the certifier is built: any
certifier with $\E_P[\tau_\delta]<\infty$ certifies falsely at the least-favourable law
$q^\star$ with probability at least $\exp(-\Kstar\,\E_P[\tau_\delta])$, which is
$\delta^{\Kstar/\Kstar_F}$ to first order for a certifier that attains $\Kstar_F$, and
every null excluded by $F$ admits a bet that certifies falsely almost surely.
\Cref{prop:pool} in the same appendix extends the plug-in to selection from a finite pool
by a union bound.

\section{Certification without enumerating the tree}
\label{sec:local}

The feasibility constraint of \cref{thm:duality} ranges over every leaf, so it cannot be
checked when the outcome tree is too large to enumerate. This section shows that
feasibility is equivalent to a sequence of local conditions along a single episode,
driven by the superhedging price $m$ of \cref{prop:superhedge}. Any upper bound on that
price therefore gives a sound certifier computable in $O(T)$ per episode.

Extend $m$ to all nodes as in \cref{prop:superhedge}. A function $\mbar$ on nodes is a
\emph{supersolution} if $\mbar(j)=-X(j)$ at leaves and
\begin{gather*}
\mbar(h)\ge\sum_a\mu(a\mid h)\,\mbar(ha)\ \text{at action nodes},\\
\mbar(h)\ge\mbar(ho)\ \text{for every }o\ \text{at environment nodes}.
\end{gather*}
By backward induction, $m$ is the smallest supersolution.

\paragraph{The cushion game.} Fix a supersolution with $\mbar(\mathrm{root})>0$. The
gambler chooses $\lambda\in[0,1/\mbar(\mathrm{root})]$ and, at every action node, a
\emph{fair fraction} $g(h,\cdot)$ with $g\ge-1$ and $\sum_a\mu(a\mid h)g(h,a)=0$. Along an
episode, the \emph{cushion} evolves as
\begin{align*}
C(\mathrm{root})&=1-\lambda\,\mbar(\mathrm{root}),\\
C(ha)&=\bigl[C(h)+\lambda\,d(h)\bigr]\bigl(1+g(h,a)\bigr),
  &d(h)&=\mbar(h)-\textstyle\sum_{a'}\mu(a'\mid h)\,\mbar(ha')\ge0,\\
C(ho)&=C(h)+\lambda\bigl[\mbar(h)-\mbar(ho)\bigr]\ \ (\ge C(h)),
\end{align*}
and the payoff is $Y=C(\text{leaf})$.

\begin{theorem}[Cushion parametrization]
\label{thm:cushion}
\begin{enumerate}
\item[(i)] \emph{Soundness.} For every supersolution and every strategy, $Y\ge0$ on every
  leaf and $Y=1+\lambda X+B\nu$ for some $\nu$. Hence $\E_qY\le1$ for every $q\in\Qn$.
\item[(ii)] \emph{Characterization.} The payoffs of the game are exactly the feasible bets
  of \cref{thm:duality} that satisfy the node constraints $w(h)\ge\lambda\mbar(h)$ at
  every non-leaf node $h$. Here $w(h)$ is one plus the payoff of the fair bets placed
  before reaching $h$.
\item[(iii)] \emph{Exactness.} With $\mbar=m$ the node constraints are implied by leaf
  feasibility, so the game generates every feasible bet of \cref{thm:duality}.
\end{enumerate}
\end{theorem}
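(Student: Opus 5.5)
The plan is to identify the cushion with a single explicit quantity and read off all three parts from it. Along any path, let $w(h)$ be one plus the payoff of the fair bets placed strictly before reaching $h$. By \eqref{eq:fairbets}, $w(\mathrm{root})=1$, $w(ho)=w(h)$ at environment nodes, and $w(ha)=w(h)+\nu(h,a)-\sum_{a'}\mu(a'\mid h)\nu(h,a')$ at action nodes. At a leaf $j$ this gives $w(j)=1+(B\nu)(j)$. The guiding invariant will be
\[
C(h)=w(h)-\lambda\,\mbar(h)\quad\text{at every node } h.
\]
At a leaf it reads $Y=C(j)=1+(B\nu)(j)+\lambda X(j)$, because $\mbar(j)=-X(j)$.

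For (i), I would construct $\nu$ from the strategy $(\lambda,g)$ and verify the invariant by induction down the tree.
\begin{itemize}
\item At the root both sides equal $1-\lambda\mbar(\mathrm{root})$.
\item At an environment node the cushion recursion gives $C(ho)=w(h)-\lambda\mbar(ho)$, and $w(ho)=w(h)$, so the invariant propagates.
\item At an action node, write $\bar m(h)=\sum_{a'}\mu(a'\mid h)\mbar(ha')$. Then $C(h)+\lambda d(h)=w(h)-\lambda\bar m(h)$. I would set $\nu(h,a)=(C(h)+\lambda d(h))\,g(h,a)+\lambda\mbar(ha)$. Since $\sum_a\mu g=0$, the increment of $w$ is $(C(h)+\lambda d(h))g(h,a)+\lambda(\mbar(ha)-\bar m(h))$. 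A direct computation then gives $w(ha)-\lambda\mbar(ha)=(C(h)+\lambda d(h))(1+g(h,a))=C(ha)$.
\end{itemize}
So $Y=1+\lambda X+B\nu$. Nonnegativity follows by induction along the same recursion: $C(\mathrm{root})\ge0$ by the constraint on $\lambda$, and each step preserves it because $d\ge0$, $\mbar(h)\ge\mbar(ho)$, and $1+g\ge0$. Then $\E_qY=1+\lambda\E_qX\le1$ on $\Qn$.

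For (ii), the forward direction is immediate from (i): $C\ge0$ at every node, so $w(h)\ge\lambda\mbar(h)$. For the converse, take a feasible $1+\lambda X+B\nu$ satisfying the node constraints, and define $C:=w-\lambda\mbar\ge0$ at all nodes. The root constraint gives $\lambda\le1/\mbar(\mathrm{root})$, and environment steps are forced by $w(ho)=w(h)$. At an action node, $\sum_a\mu(a\mid h)C(ha)=w(h)-\lambda\bar m(h)=C(h)+\lambda d(h)$, using that fair bets have $\mu$-mean zero.
\begin{itemize}
\item If this quantity is positive, set $g(h,a)=C(ha)/(C(h)+\lambda d(h))-1$. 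Then $g\ge-1$ and its $\mu$-mean is zero.
\item If it is zero, then every $C(ha)=0$, since $C\ge0$ and $\mu>0$, and any fair $g$, say $g=0$, reproduces these values.
\end{itemize}
The leaf node constraint is exactly $Y\ge0$. Hence the game reproduces the bet.

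For (iii), with $\mbar=m$, I would show by backward induction over the whole tree $\J$ that leaf feasibility $w(j)\ge-\lambda X(j)=\lambda m(j)$ implies $w(h)\ge\lambda m(h)$ at every node.
\begin{itemize}
\item At an action node, $w(h)=\sum_a\mu(a\mid h)w(ha)\ge\lambda\sum_a\mu(a\mid h) m(ha)=\lambda m(h)$.
\item At an environment node, $w(ho)=w(h)$ for all $o$, so $w(h)\ge\lambda\max_o m(ho)=\lambda m(h)$.
\end{itemize}
This is where it matters that the tree contains every outcome, including unlogged ones. Combined with (ii), the game then generates every feasible bet of \cref{thm:duality}.

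The main obstacle is bookkeeping rather than any analytic inequality. One must check that $w$ is well defined as a function of the node, which holds because $B\nu$ is a sum of per-node terms. One must also guess the right $\nu$ in (i), and handle the degenerate zero-cushion case when dividing in (ii). I would first verify the identity $C=w-\lambda\mbar$ at action nodes, since everything else follows from it.
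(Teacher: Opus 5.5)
Your proposal is correct and follows the paper's proof essentially step for step. It uses the same potential $w=C+\lambda\mbar$ (you verify this identity by induction from $\nu$ rather than defining $w$ from $C$, which is equivalent), the same stake $\nu(h,a)=[C(h)+\lambda d(h)]g(h,a)+\lambda\mbar(ha)$, the same converse with the zero-cushion case, and the same backward induction for (iii). The only blemish is notational: your $\bar m(h)$ for $\sum_{a'}\mu(a'\mid h)\mbar(ha')$ is the same symbol as the supersolution $\mbar(h)$, so write it as $\mbar(h)-d(h)$ instead.
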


The proof (\cref{app:cushion}) sets $w=C+\lambda\mbar$ and checks that $w$ is constant at
environment nodes and moves by a fair bet at action nodes.

\begin{corollary}[Local exponent]
\label{cor:local}
Let $\Kloc(\mbar)=\sup\E_P\log Y$ over strategies of the game. Then $\Kloc(m)=\Kstar$ and
$\Kloc(\mbar)\le\Kstar$ for every supersolution. If $\mbar(h)\le\max_j(-X_j)$ at every
non-leaf node, the scalar bet belongs to the game, so $\Kscal\le\Kloc(\mbar)$.
\end{corollary}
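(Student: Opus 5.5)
Three claims need proof: $\Kloc(m)=\Kstar$, $\Kloc(\mbar)\le\Kstar$ for every supersolution, and the inclusion of the scalar bet under the stated condition. Each reduces to \cref{thm:cushion} together with \cref{thm:duality}.

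\textbf{Upper bound.} Fix any supersolution $\mbar$. By \cref{thm:cushion}(i), every strategy of the cushion game produces a payoff $Y=1+\lambda X+B\nu$ with $\lambda\ge0$ and $Y\ge0$ on every leaf of $\J$. Such a payoff lies in $\Ys$. Hence $\E_P\log Y\le\sup_{Y'\in\Ys}\E_P\log Y'$, and the right-hand side equals $\Kstar(P)$ by \cref{thm:duality}. Taking the supremum over strategies gives $\Kloc(\mbar)\le\Kstar$. This needs no finiteness assumption, since \cref{thm:duality} holds for every $P\in\Qs$.

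\textbf{Exactness at $m$.} The function $m$ is itself a supersolution; it is the smallest one. By \cref{thm:cushion}(iii), with $\mbar=m$ the game generates every feasible bet in $\Ys$. Thus the set of game payoffs equals $\Ys$, the two suprema coincide, and $\Kloc(m)=\Kstar$, including the case $\Kstar=\infty$. One bookkeeping point must be checked: the game requires $\mbar(\mathrm{root})>0$ and $\lambda\in[0,1/\mbar(\mathrm{root})]$. For $\mbar=m$ this range is exactly the feasible range of \cref{prop:superhedge}, namely $\lambda m(\mathrm{root})\le1$. If $m(\mathrm{root})\le0$, I would read the constraint on $\lambda$ as vacuous, consistent with \cref{prop:trichotomy}(d), and apply \cref{thm:cushion}(iii) as stated.

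\textbf{Scalar bet.} Let $\lambda^s$ attain $\Kscal$, or take a near-optimal value if the supremum is not attained. Feasibility of $1+\lambda^sX\ge0$ on $\J$ means $\lambda^s\max_j(-X_j)\le1$. I would use the characterization \cref{thm:cushion}(ii) with $\nu=0$. With no fair bets placed, $w(h)=1$ at every node, so the node constraints read $1\ge\lambda^s\mbar(h)$ at every non-leaf $h$. The hypothesis $\mbar(h)\le\max_j(-X_j)$ then gives $\lambda^s\mbar(h)\le\lambda^s\max_j(-X_j)\le1$. Leaf feasibility holds by the choice of $\lambda^s$. Hence $1+\lambda^sX$ is a game payoff, and $\Kscal\le\Kloc(\mbar)$.

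The main obstacle I expect is confirming that the root constraint $\lambda\le1/\mbar(\mathrm{root})$ is one of the node constraints of (ii). It is, since the root is a non-leaf node with $w=1$. A degenerate case also needs a remark. If $\max_j(-X_j)\le0$, the scalar bet is feasible for every $\lambda$, and positivity of $\mbar(\mathrm{root})$ may fail. In that case $\Kscal=\infty$ or the statement is vacuous, and I would treat it by a short remark.
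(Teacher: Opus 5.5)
Your proposal is correct and follows the paper's proof. Game payoffs lie in $\Ys$, so \cref{thm:duality} gives $\Kloc(\mbar)\le\Kstar$; \cref{thm:cushion}(iii) gives equality at $m$; and the scalar bet with $\nu=0$, $w\equiv1$ meets the node constraints of \cref{thm:cushion}(ii) because $\lambda\mbar(h)\le\lambda\max_j(-X_j)\le1$. Citing part (i) instead of (ii) for the upper bound, and your remark on $m(\mathrm{root})\le0$, a case the paper's game definition silently excludes, are harmless refinements.
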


$\Kstar-\Kloc(\mbar)$ is the \emph{price of locality}. The next lemma gives a supersolution
that requires no knowledge of the environment.

\begin{lemma}[Closed-form ratio supersolution]
\label{lem:ratio}
Suppose $R\in[0,b]$ and that every action node at action depth $t$ satisfies
$\pi(a\mid h)/\mu(a\mid h)\le\rho_t$ for all $a$. Let $c_t=\prod_{s\ge t}\rho_s^{-1}$, with
$c_T=1$ after the last action, and let $W(h)$ and $t(h)$ be the running likelihood ratio
and the action depth of node $h$. Then
\[
\mbar(h)=b\bigl(1-\min(W(h),c_{t(h)})\bigr)\ \ (\text{non-leaf}),\qquad \mbar(j)=-X(j)\ \ (\text{leaf})
\]
is a supersolution for every environment.
\end{lemma}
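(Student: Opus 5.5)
The plan is to check the two supersolution inequalities of \cref{sec:local} directly, one node type at a time. The only structural facts needed are that $x\mapsto\min(x,c)$ is concave and positively homogeneous in the right sense, and that $c_t=c_{t+1}/\rho_t$. Throughout, write $r_a=\pi(a\mid h)/\mu(a\mid h)\in[0,\rho_t]$ at an action node $h$ of depth $t$. Then $W(ha)=W(h)r_a$ and $\sum_a\mu(a\mid h)r_a=1$. The latter forces $\rho_t\ge1$, so $c_t\le c_{t+1}\le 1$. I index depth so that the children $ha$ of a depth-$t$ action node have depth $t+1$, and nodes after the last action have depth $T$ with $c_T=1$. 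Nothing depends on the environment kernels, which gives the ``for every environment'' clause.

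\emph{Action nodes.} Since $\mbar=b(1-\min(W,c_t))$, the required inequality $\mbar(h)\ge\sum_a\mu(a\mid h)\mbar(ha)$ is equivalent to
\[
\min(W,c_t)\le\textstyle\sum_a\mu(a\mid h)\min(Wr_a,c_{t+1}).
\]
For $r_a>0$ I write $\min(Wr_a,c_{t+1})=r_a\min(W,c_{t+1}/r_a)$. Using $r_a\le\rho_t$, this is at least $r_a\min(W,c_{t+1}/\rho_t)=r_a\min(W,c_t)$. The same bound holds trivially when $r_a=0$. Averaging with weights $\mu(a\mid h)$ and using $\sum_a\mu r_a=1$ gives the inequality. (Action children are never leaves, since every episode ends with an outcome $o_{T+1}$.)

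\emph{Environment nodes.} If the child $ho$ is a non-leaf, then $W$ and the action depth are unchanged, so $\mbar(ho)=\mbar(h)$ and the inequality holds with equality. If $ho=j$ is a leaf, then $h$ comes after the last action, so $c_{t(h)}=c_T=1$. We need $b(1-\min(W,1))\ge(1-W)R=-X(j)$ with $W=W(h)=W(j)$ and $R\in[0,b]$. When $W\ge1$ the left side is $0$ and the right side is $\le0$. When $W<1$ the left side is $b(1-W)$, which dominates $(1-W)R$. Finally, leaves satisfy $\mbar(j)=-X(j)$ by definition.

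The only delicate point is the bookkeeping of depths: one must make sure the cap $c_{t(h)}$ at an action node and $c_{t(h)+1}$ at its children differ exactly by the factor $\rho_t$, and that the pre-leaf environment nodes carry $c=1$. Once that indexing is fixed, the homogeneity step above is the whole argument.
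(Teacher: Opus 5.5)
Your proposal is correct and is essentially the paper's proof: your pointwise bound $\min(Wr_a,c_{t+1})\ge r_a\min(W,c_t)$ for $r_a\in[0,\rho_t]$ is exactly the chord inequality the paper gets from convexity of $r\mapsto\psi_{t+1}(Wr)$ on $[0,\rho_t]$, and you average it against $\mu(\cdot\mid h)$ using $\sum_a\mu(a\mid h)r_a=1$ in the same way. The environment-node cases also match, except that the paper bounds a leaf child using only $c_{t(h)}\le1$, so its argument covers leaves at any depth, whereas you use $c_{t(h)}=1$; that is valid for the fixed-horizon trees of the paper.
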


\begin{proof}
Let $\psi_t(w)=1-\min(w,c_t)$, which is convex and nonincreasing with $\psi_t(0)=1$. At an
environment node, outcomes change neither $W$ nor $t$, so non-leaf children have equal
$\mbar$. A leaf child has $-X=(1-W)R\le b(1-W)^+\le b(1-\min(W,c_t))$, because $c_t\le1$
(each $\rho_s\ge1$, since $\E_\mu[\pi/\mu]=1$). At an action node, $\varrho(a)=\pi(a\mid h)/\mu(a\mid h)$
lies in $[0,\rho_t]$ and has $\mu$-mean $1$. The function $f(r)=\psi_{t+1}(Wr)$ is
convex, so the chord bound on $[0,\rho_t]$ gives
$\sum_a\mu(a)f(\varrho(a))\le(1-\rho_t^{-1})f(0)+\rho_t^{-1}f(\rho_t)=1-\min(W,c_{t+1}/\rho_t)=\psi_t(W)$.
\end{proof}

\begin{proposition}[Scalable certifier]
\label{prop:scalable}
Let episodes have finite horizon, finite action sets and arbitrary (possibly
non-enumerable) observation spaces. Assume $\mu>0$ is known, total rewards lie in
$[0,b]$, and the ratio bounds of \cref{lem:ratio} hold. For each episode $i$, let
$(\lambda_i,g_i)$ be a valid strategy of the cushion game: $0\le\lambda_i\le
1/\mbar(\mathrm{root})$, and at every action node $g_i(h,a)\ge-1$ and
$\sum_a\mu(a\mid h)g_i(h,a)=0$. Let $(\lambda_i,g_i)$ be chosen predictably, for example
by a recurrent network fitted to episodes $1,\dots,i-1$ whose outputs are mapped into
this set. Our implementation uses $1+g_i(h,a)=e^{s(a)}/\sum_{a'}\mu(a'\mid h)e^{s(a')}$
for network scores $s$, which is fair and exceeds $-1$ by construction, and
$\lambda_i=\mathrm{sigmoid}(\cdot)/\mbar(\mathrm{root})$. Let
$Y_i$ be the terminal cushion of episode $i$ under the supersolution of \cref{lem:ratio},
and $M_n=\prod_{i\le n}Y_i$. Then $P_e(\sup_nM_n\ge1/\delta)\le\delta$ for every
environment $e$ with $V_e(\pi)\le V_e(\mu)$.
\end{proposition}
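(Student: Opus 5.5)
The plan is to show that $M_n$ is a nonnegative supermartingale with $M_0=1$ under every null environment $e$, and then to conclude with Ville's inequality. The infinite-tree case cannot call \cref{thm:cushion} directly, since that theorem assumes a finite tree. So the soundness argument of \cref{thm:cushion}(i) will be redone along a single episode. It only uses local identities at the nodes the episode visits, so enumerating the tree is never needed.

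First, check that the closed form of \cref{lem:ratio} is a supersolution in this generality. Its proof is purely local. At action nodes it uses the chord bound for the convex map $r\mapsto\psi_{t+1}(Wr)$ with $\varrho\in[0,\rho_t]$ and $\mu$-mean one. At environment nodes the outcome leaves $W$ and $t$ unchanged, and a leaf child satisfies $-X\le b(1-W)^+$ since $R\in[0,b]$. None of this refers to the cardinality of the observation space, so $\mbar$ is a supersolution on the (possibly infinite) tree. Its root value is $\mbar(\mathrm{root})=b(1-\min(1,c_0))\ge0$. If it equals $0$, read $\lambda_i$ as an arbitrary nonnegative number. The cushion recursion still yields $C\ge0$, and the steps below go through unchanged.

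Second, fix one episode with a predictable strategy $(\lambda,g)$ and set $w(h)=C(h)+\lambda\mbar(h)$. From the recursion I will verify two facts. At environment nodes $w(ho)=w(h)$. At action nodes
\[
w(ha)=\bigl[C(h)+\lambda d(h)\bigr](1+g(h,a))+\lambda\mbar(ha),
\]
and $\sum_a\mu(a\mid h)(1+g(h,a))=1$ gives $\sum_a\mu(a\mid h)w(ha)=C(h)+\lambda d(h)+\lambda\sum_a\mu\mbar(ha)=w(h)$. Hence $w$ is a martingale along the episode under any environment, because the actions are drawn from $\mu$. Its initial value is $w(\mathrm{root})=1$ and its leaf value is $C(\mathrm{leaf})+\lambda\mbar(\mathrm{leaf})=Y-\lambda X$. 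Since the horizon is finite, taking expectations stage by stage (tower property, with finitely many actions and an arbitrary measurable kernel at each observation step) gives $\E_e[Y-\lambda X]=1$. Nonnegativity follows by induction. $C(\mathrm{root})\ge0$ because $\lambda\mbar(\mathrm{root})\le1$. At action nodes $C$ is multiplied by $1+g\ge0$ after adding $\lambda d(h)\ge0$. At environment nodes $C$ does not decrease. So $Y\ge0$, and $\E_e Y=1+\lambda\E_e X=1+\lambda\Delta_e\le1$ when $V_e(\pi)\le V_e(\mu)$.

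Third, $(\lambda_i,g_i)$ is measurable with respect to episodes $1,\dots,i-1$ (the network is fitted to those), and episodes are i.i.d.\ from $P_e$. Therefore $\E_e[Y_i\mid\mathcal F_{i-1}]\le1$, so $M_n=\prod_{i\le n}Y_i$ is a nonnegative supermartingale with $M_0=1$, and Ville's inequality gives $P_e(\sup_nM_n\ge1/\delta)\le\delta$. The implemented softmax parametrization satisfies $1+g=e^{s}/\sum\mu e^{s}>0$ with $\mu$-mean one, and the sigmoid map keeps $\lambda\in[0,1/\mbar(\mathrm{root})]$, so it is a valid strategy.

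The main obstacle I expect is measure-theoretic care in the second step. With non-enumerable observation spaces, $\mbar$ and $g$ must be measurable functions of the history so that the conditional expectations make sense. One also has to be sure that no integrability issue arises. Here $w$ is bounded on each path because $\mbar\in[-b\rho,b]$-type bounds hold (using $W\le\prod\rho_t$) and $C$ is controlled by the finite horizon, while $X$ is bounded by $b\prod_t\rho_t$. So the tower argument is legitimate, and I would state these bounds explicitly.
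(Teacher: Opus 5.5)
Your proposal is correct and follows the paper's own argument: you redo \cref{thm:cushion}(i) along the realized path, so that $Y_i=w_i+\lambda_iX_i$ with $w_i$ a martingale driven by fair action bets. This gives $\E_e[Y_i\mid\mathcal F_{i-1}]=1+\lambda_i\Delta_e\le1$ and $Y_i\ge0$ pathwise, and Ville's inequality finishes. Your extra checks (locality of \cref{lem:ratio}, the edge case $\mbar(\mathrm{root})=0$, measurability) fill in what the paper leaves implicit. On integrability, $C$ need not be bounded: $1+g\le1/\mu(a\mid h)$ need not be uniformly bounded over infinitely many histories. You do not need that bound, because $C\ge0$, $\mbar$ is bounded, and the exact one-step identities give $\E_e C<\infty$ at every stage by induction.
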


\begin{proof}
\Cref{thm:cushion}(i) uses only the per-node identities along the realized path. So
$Y_i=w_i+\lambda_iX_i$, where $w_i-1$ is a finite sum of conditionally fair bets on logged
actions, and $\E_e[Y_i\mid\text{past}]=1+\lambda_i\Delta_e\le1$. $Y_i\ge0$ holds
pathwise, so Ville's inequality applies to the nonnegative supermartingale $M_n$.
\end{proof}

Each episode costs $O(T|A|)$ beyond evaluating the strategy. The fair fractions play the
role of DR control variates, but they enter \emph{multiplicatively on the cushion}, so
feasibility never requires a range bound that grows with the horizon. An additive DR bet
$1+\lambda(X-Bc)$ must instead bound $\sup(-(X-Bc))$ over all histories. For the ratio
supersolution, the price of locality was zero on every memory tree we tested and on
192 of 200 random trees (\cref{sec:experiments}).

\section{Experiments}
\label{sec:experiments}

The experiments test three predictions. First, the stopping time of the plug-in should
scale as $\log(1/\delta)/\Kstar$. Second, the theory's comparisons should hold in
practice: scalar versus full bets, the effect of assumptions, and hard instances.
Third, the certifiers must be sound in finite samples. All code and raw outputs are
included in the supplementary material.

\paragraph{Instances.}
\begin{itemize}
\item \emph{Memory task} (cue $d$, $p$). A cue $C\sim\mathrm{Bern}(0.1)$ is shown, then
  $d$ distractor steps follow, at which $\mu$ logs uniformly over $k=2$ actions while
  $\pi$ plays a fixed action. At the final step $\mu$ logs \emph{safe} with probability
  $1/2$ and \emph{risky} with probability $p$ ($p/2$ without the cue). $\pi$ plays risky
  if and only if it saw the cue. Risky pays $1$ under the cue and $0$ otherwise, and safe
  pays $\mathrm{Bern}(0.8)$. Here $\rank(B)=4,10,22,46$ for $d=0,1,2,3$.
\item \emph{Random trees.} Depth 2--4 with 2--3 actions, Dirichlet environment and policy
  kernels, and Bernoulli terminal rewards. A quarter of reward outcomes have zero logged
  probability but remain in the tree, so certificates must stay sound on them. The
  depth-2 tree of \cref{tab:comparators} (seed 1001) has $\rank(B)=10$.
\item \emph{Tiger POMDP} \citep{kaelbling1998planning}. Listening is 85\% accurate and
  costs $0.05$. The hidden door is integrated out, so the tree is the exact logged law
  of the POMDP. The target listens until the counts differ by two. The logger is
  $\epsilon$-uniform over a memory-one policy. A unit test checks the tree's values
  against direct POMDP simulation. $\rank(B)=14,30,62$ at $H=3,4,5$.
\item \emph{Hard instances} of \citet{jajoo2026hardness} and \citet{luo2026revealing}, as
  in \cref{cor:hard}.
\item \emph{DelayedCue} (scalable setting). One of $K=4$ cues, then $L\in\{5,10,20,40\}$
  distractor steps with random symbols. At each step $\mu$ mixes copying the symbol,
  repeating its previous action and uniform play, while $\pi$ tilts toward copying. A final
  reward is paid for recalling the cue. The tree has more than $16^L$ leaves and is
  never enumerated.
\end{itemize}

\paragraph{Certifiers.}
\begin{itemize}
\item \emph{CV-Kelly}: \cref{alg:plugin}.
\item \emph{Hedged}: \cref{cor:hedge}.
\item \emph{Scalar Kelly}: \cref{alg:plugin} with $\nu=0$, whose exponent is $\Kscal$.
\item \emph{Mixture}: \cref{thm:mixture} with 300 hit-and-run points.
  This finite Monte Carlo mixture has the conditional soundness and
  positive-drift stopping-time guarantee of \cref{prop:finite-mixture},
  but not the exact-integral regret or second-order bound. The
  independent-uniform sampling probability in that proposition does
  not apply to these correlated hit-and-run points.
\item \emph{DR oracle / DR fitted}: Kelly betting on the doubly robust payoff $X-Bc$,
  where $c$ is the efficient control variate under the true law (oracle) or is refit
  predictably (fitted), in the spirit of \citet{waudbysmith2024anytime}.
\item \emph{WSR-PDIS / WSR-DR}: the betting confidence sequence of
  \citet{waudbysmith2024estimating} applied to the scalar per-episode payoff $X$ (PDIS)
  or to the DR payoff $X-Bc_t$ with $c_t$ refit predictably. The payoff range is the
  minimum and maximum over all leaves, the bet is capped at half the inverse range, and
  the wealth is the average of an aGRAPA and an ONS
  \citep{cutkosky2018black} bettor. Under every null the payoff has mean at most
  $0$, so both are $\delta$-sound. These are the strongest standard sequential
  baselines that use one scalar statistic per episode.
\item Three scalar high-confidence bounds, each checked on a doubling grid of sample
  sizes with a union bound: \emph{LS} (logarithmic smoothing; \citealp{sakhi2024logarithmic}),
  \emph{EB} (an empirical-Bernstein bound on $X$) and \emph{HCOPE} (clipped IS with a
  Maurer--Pontil bound; \citealp{thomas2015hcope}).
\end{itemize}
Every certifier runs at the same $\delta$ and is censored at $60\log(1/\delta)/\Kstar$
episodes ($30\log(1/\delta)/\Kstar$ in the $\delta$ sweep and the structural experiment).
We report median stopping times over 100 runs with 95\%
order-statistic confidence intervals, and the fraction of runs that certified when it is
below one. $\infty$ means the median run was censored. The exact program of
\cref{thm:duality} is solved by a log-barrier method with damped Newton steps to a duality
gap of $10^{-10}$. On 30 random trees it matches an independent exponential-cone solve of
the primal (Clarabel via CVXPY) to within $10^{-11}$, and the superhedging recursion
matches the LP maximal stake to relative error $10^{-15}$.

\subsection{The plug-in attains the exponent}

\Cref{fig:delta} and \cref{tab:slopes} regress the median stopping time on $\log(1/\delta)$ for $\delta$ from
$10^{-8}$ to $10^{-2}$. \Cref{thm:lower,thm:plugin} predict a slope of $1/\Kstar$. The
plug-in's slope is within 10--16\% of it, and at $d=1$ the confidence interval contains 1.
At $d=0$ the interval $[1.08,1.21]$ excludes 1. This is expected: \cref{thm:plugin} fixes
the slope only as $\delta\to0$. Over a finite range of $\delta$, the cost of estimating the
bet, about $\tfrac12\rank(B)\log t$ nats by episode $t$, grows with the stopping time and
inflates the fitted slope. The same overhead puts scalar Kelly's slope ($1.72$) above its
asymptote.
Scalar Kelly matches its own, smaller exponent: $\Kstar/\Kscal=1.57$ and $2.47$. The Monte
Carlo mixture is far from its asymptote at $d=1$, where $\rank(B)=10$. It certified in only
47--62\% of runs, as its $d\log T$ penalty predicts. The hedge pays its $\log2$, at most
20 extra episodes, but keeps the plug-in's slope.
The same sweep on the depth-2 random tree and on Tiger ($H=3$, $\epsilon=0.2$; censoring at
$120\log(1/\delta)/\Kstar$ or $2\times10^5$ episodes, whichever is smaller; no run censored) gives plug-in slopes of $1.10$ $[0.98,1.25]$ and
$1.17$ $[1.05,1.29]$, and hedge slopes whose intervals contain 1. The scalar betting
certificates have slopes near their own asymptote $\Kstar/\Kscal$: scalar Kelly
$3.19$ and $4.22$, WSR-PDIS $2.92$ $[2.61,3.22]$ and $4.12$ $[3.72,4.51]$, against
$\Kstar/\Kscal=2.70$ and $4.05$. Their gap to the plug-in therefore grows linearly in
$\log(1/\delta)$, as \cref{thm:invariance}(b) predicts. WSR-DR has slopes $2.05$ and $3.78$.

\begin{table}[t]
\centering
\caption{Slope of the median stopping time against $\log(1/\delta)$, in units of
$1/\Kstar$ (theory: $1$ for CV-Kelly and the hedge by its plug-in component,
$\Kstar/\Kscal$ for scalar Kelly; no finite-sample bound for the MC mixture).
Brackets are 95\% percentile-bootstrap intervals over 2000 resamples of the 100 runs per
$\delta$.}
\label{tab:slopes}
\small
\resizebox{\linewidth}{!}{%
\begin{tabular}{lccccc}
\toprule
instance & CV-Kelly & hedged & scalar Kelly & mixture (MC) & $\Kstar/\Kscal$\\
\midrule
cue $d=0$, $p=0.2$ & 1.16 [1.08, 1.21] & 1.13 [1.03, 1.18] & 1.72 [1.55, 1.85] & 1.39 [1.28, 1.51] & 1.57\\
cue $d=1$, $p=0.2$ & 1.10 [1.00, 1.19] & 1.21 [1.08, 1.31] & 2.39 [2.17, 2.68] & 21.8 [9.4, $\infty$] & 2.47\\
random, depth 2 & 1.10 [0.98, 1.25] & 1.08 [0.93, 1.20] & 3.19 [2.84, 3.46] & --- & 2.70\\
Tiger $H=3$, $\epsilon=0.2$ & 1.17 [1.05, 1.29] & 1.02 [0.92, 1.12] & 4.22 [3.74, 4.64] & --- & 4.05\\
\bottomrule
\end{tabular}}
\end{table}

\begin{figure}[t]
\centering
\includegraphics{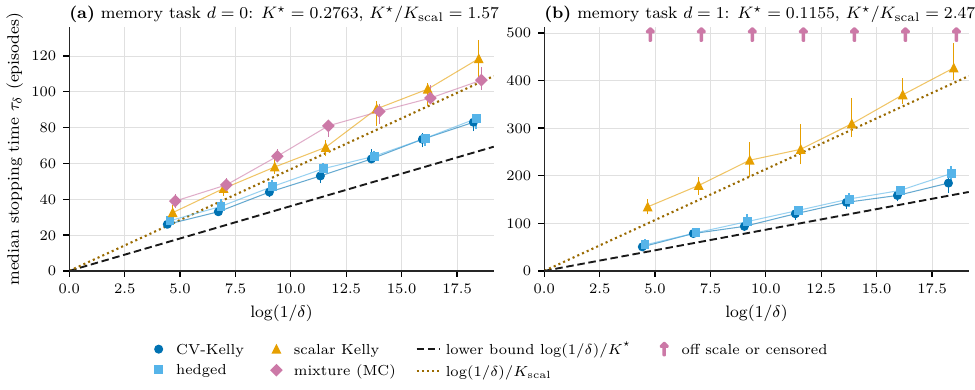}
\caption{Median stopping time against $\log(1/\delta)$ for $\delta$
from $10^{-8}$ to $10^{-2}$; 100 runs per point, bars are 95\% order-statistic intervals,
and points are offset horizontally for legibility. Dashed: the lower bound
$\log(1/\delta)/\Kstar$ on the expected stopping time (\cref{thm:lower}). Dotted:
$\log(1/\delta)/\Kscal$, the
asymptote of scalar Kelly. The plotted mixture uses 300 fixed sampled points, not
the exact integral in \cref{thm:mixture}. Arrows: median above the plotted range or censored at
$30\log(1/\delta)/\Kstar$; at $d=1$ the mixture certified in only 47--62\% of runs. Fitted
slopes are in \cref{tab:slopes}.}
\label{fig:delta}
\end{figure}

\subsection{Comparison with existing certificates}

\Cref{tab:comparators} and \cref{fig:efficiency}(a) compare all certifiers at $\delta=10^{-3}$.
\Cref{thm:invariance}(b) says that no sound certificate built on a per-episode statistic
$S$ can beat $\log(1/\delta)/K_S$ in expected stopping time. It is a limit on every such
certificate, not a guarantee that a particular scalar bound reaches it, and the gaps below
are measured against $\log(1/\delta)/\Kstar$ for all methods.
The mixture results here use 300 sampled points and do not carry
\cref{thm:mixture}'s exact-integral finite-sample bounds. CV-Kelly stops within
$1.13$--$1.43\times$ the lower bound on memory trees
and $1.55\times$ on the random tree.
It is the fastest, or tied with the hedge within the confidence intervals, on every memory
tree. The 95\% intervals for its median have half-widths of 8--18\%, for example
$[67,90]$ for cue $d=1$, $p=0.2$. On the shallow random tree, the oracle and fitted DR bets
are 18\% and 7\% faster: there a DR control variate is nearly optimal, and CV-Kelly pays
for plug-in estimation. The strongest baselines are the scalar betting certificates.
Scalar Kelly needs $1.05$--$2.65\times$ the episodes of CV-Kelly on memory trees, and
the WSR confidence sequence on the PDIS payoff needs $1.49$--$2.79\times$. Both bet on
$X$ alone, so both are limited by $\Kscal$ (\cref{thm:invariance}(b)), and their stopping
times are close to each other. WSR on the DR payoff is slower on memory trees
($2.7$--$27\times$), because its payoff range must cover $X-Bc$ over every history, but on
the random tree it is the fastest scalar method, at $1.26\times$ CV-Kelly. Against the best
scalar betting baseline on each memory tree, CV-Kelly's advantage is therefore
$1.05$--$2.65\times$, not the $9$--$52\times$ by which the doubling-grid bounds trail, when they
certify at all. \Cref{sec:random-comparators} repeats the comparison on 24 random trees.
The memory penalty of
\cref{cor:memory} is visible across rows: each step of delay multiplies this bound
by $2.1$--$2.5$, and CV-Kelly's stopping time by $1.6$--$3.0$.

\begin{table}[t]
\centering
\caption{Median stopping time (episodes) at $\delta=10^{-3}$, 100 runs per cell.
$\log(1/\delta)/\Kstar$ is the lower bound of \cref{thm:lower}(i) on the expected
stopping time of every $\delta$-sound certifier.
WSR: the betting confidence sequence of \citet{waudbysmith2024estimating} on the
PDIS or DR payoff (hedged aGRAPA/ONS bettor). The mixture column uses 300 sampled
points. A number in parentheses is the certification rate when below one.}
\label{tab:comparators}
\small
\setlength{\tabcolsep}{3pt}
\resizebox{\linewidth}{!}{%
\begin{tabular}{lrrrrrrrrrrrr}
\toprule
instance & $\frac{\log(1/\delta)}{\Kstar}$ & CV-Kelly & hedged & scalar & WSR-PDIS & WSR-DR & mixture & DR fit & DR orc & LS & EB & HCOPE\\
\midrule
cue $d{=}0$ $p{=}0.2$ & 25 & 31 & 37 & 52 & 57.5 & 114.5 & 47.5 & 72.5 & 83 & 1024 & 512 & 1024 (.91)\\
cue $d{=}0$ $p{=}0.02$ & 34 & 49 & 49 & 51.5 & 73 & 1337 (.98) & 99.5 (.90) & 582.5 & 862 & 1024 & $\infty$ & 2048\\
cue $d{=}1$ $p{=}0.2$ & 60 & 75.5 & 86 & 182 & 182.5 & 252.5 & 703.5 (.63) & 159 & 179 & 1024 & 2048 & 2048 (.87)\\
cue $d{=}1$ $p{=}0.02$ & 70 & 79.5 & 103.5 & 210.5 & 222 & 2168 & 347 (.79) & 772 & 1726 & 2048 & $\infty$ & 4096\\
cue $d{=}2$ $p{=}0.2$ & 151 & 200 & 194.5 & 441.5 & 391.5 & 545.5 & $\infty$ & 387 & 377 & 2048 & 4096 & 4096\\
cue $d{=}2$ $p{=}0.02$ & 178 & 239 & 242.5 & 507.5 & 550 & 2852 & $\infty$ & 908.5 & 3429 & 2048 & $\infty$ & 4096\\
random, depth 2 & 2349 & 3647 & 3437 & 8473.5 & 7563 & 4586.5 & $\infty$ & 3374 & 2999.5 & $\infty$ & 32768 & 65536\\
\bottomrule
\end{tabular}}
\end{table}

\begin{figure}[t]
\centering
\includegraphics{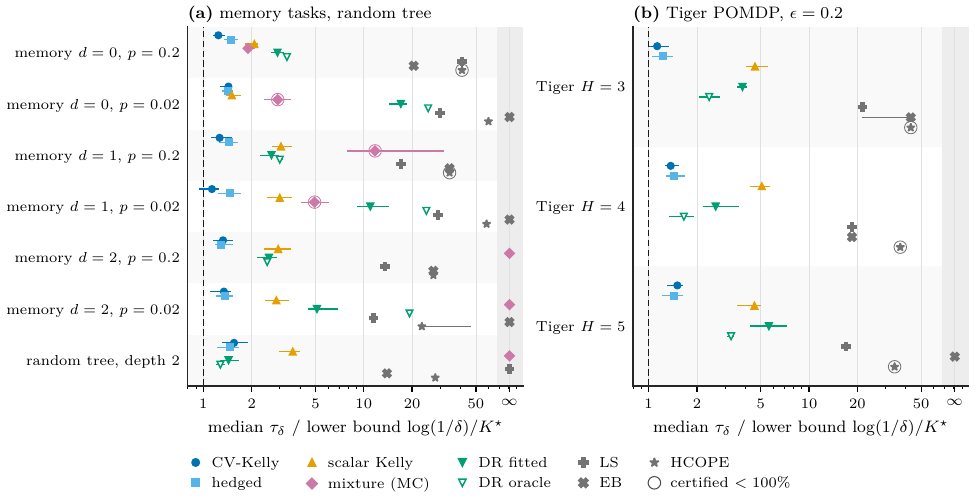}
\caption{Median stopping time at $\delta=10^{-3}$ divided by the
lower bound $\log(1/\delta)/\Kstar$ (dashed line at 1);
100 runs per cell. The mixture uses 300 sampled points. Horizontal
bars are 95\% intervals for the median, truncated at the censoring level
$60\log(1/\delta)/\Kstar$. Points in the shaded $\infty$ column: the median run was
censored. Circled points: certified in fewer than 100\% of runs. (a)~The instances of
\cref{tab:comparators}. (b)~The Tiger POMDP at $\epsilon=0.2$ (\cref{tab:tiger}).}
\label{fig:efficiency}
\end{figure}

\subsection{Comparison on 24 random trees}
\label{sec:random-comparators}

To check that the comparison does not depend on the instances chosen, we ran every
certifier on the first 24 random trees of the exponent study's seed schedule with
$\Kstar\ge0.005$ and at most 300 leaves (16 of depth 2 and 8 of depth 3, with 32--128 leaves
and $\rank(B)\in\{10,28,42\}$; every depth-4 tree exceeded the leaf cap). The cap keeps a
CV-Kelly run under a minute. \Cref{tab:random-comparators} summarizes 100 runs per tree and
certifier at $\delta=10^{-3}$. CV-Kelly stops within $1.17$--$1.64\times$ the lower bound on
every tree (median $1.38$). Scalar Kelly and WSR-PDIS are never faster. The best
scalar betting baseline on each tree needs $0.93$--$3.82\times$ CV-Kelly's episodes
(median $1.89$). A DR-type bet is faster than CV-Kelly on 3 of the 24 trees, by at most
$7\%$, and in each case the two 95\% intervals overlap. DR bets are also the least
reliable: their median ratio to CV-Kelly is $3.2$--$7.0$, and each is censored on one tree.
The doubling-grid bounds need $10$--$54\times$ the lower bound or fail.

\begin{table}[t]
\centering
\caption{24 random trees (depth 2--3), $\delta=10^{-3}$, 100 runs per tree and certifier.
Left: minimum, median and maximum over trees of the median stopping time divided by
$\log(1/\delta)/\Kstar$. Middle: the same for the ratio to CV-Kelly's median on the same tree.
``faster'': trees on which the median is below CV-Kelly's. ``finite'': trees on which the
median run certified within the horizon $60\log(1/\delta)/\Kstar$.}
\label{tab:random-comparators}
\small
\begin{tabular}{lrrrrrrrr}
\toprule
& \multicolumn{3}{c}{median $/$ lower bound} & \multicolumn{3}{c}{median $/$ CV-Kelly} & & \\
\cmidrule(lr){2-4}\cmidrule(lr){5-7}
certifier & min & median & max & min & median & max & faster & finite\\
\midrule
CV-Kelly & 1.17 & 1.38 & 1.64 & --- & --- & --- & --- & 24\\
scalar Kelly & 1.52 & 2.80 & 4.46 & 1.16 & 1.89 & 3.82 & 0 & 24\\
WSR-PDIS & 1.64 & 3.07 & 5.03 & 1.25 & 2.20 & 4.30 & 0 & 24\\
WSR-DR & 1.47 & 9.91 & $\infty$ & 0.93 & 6.96 & $\infty$ & 1 & 23\\
DR fit & 1.53 & 7.39 & $\infty$ & 0.96 & 5.21 & $\infty$ & 1 & 23\\
DR orc & 1.27 & 4.39 & $\infty$ & 0.93 & 3.16 & $\infty$ & 2 & 23\\
LS & 14.0 & 24.7 & 53.8 & 10.5 & 18.0 & 37.9 & 0 & 24\\
EB & 10.0 & $\infty$ & $\infty$ & 6.33 & $\infty$ & $\infty$ & 0 & 11\\
HCOPE & 20.0 & 48.3 & $\infty$ & 12.7 & 35.3 & $\infty$ & 0 & 19\\
\bottomrule
\end{tabular}
\end{table}

\subsection{Computational cost}
\label{sec:cost}

We also measure the computational cost of each certifier. \Cref{tab:cost} in \cref{app:matched} times every certifier of
\cref{tab:comparators} on one CPU thread (20 runs per cell, a fresh run of each method
per seed). The exact plug-in refits at $r_k=\lceil1.15^k\rceil$. A run that stops at
episode $T$ therefore solves $O(\log T)$ leaf-constrained programs (18--50 in these runs),
each a logged-law projection followed by the Kelly program of \cref{thm:duality}. On the
memory trees a CV-Kelly run takes $0.9$--$2.3$\,s. Scalar Kelly takes $0.09$--$0.44$\,s,
and the doubling-grid bounds take at most $0.007$\,s. So CV-Kelly uses $2$--$23\times$ the
wall clock of scalar Kelly for $1.05$--$2.65\times$ fewer episodes. The WSR certifiers cost about $0.05$\,s per run on a 72-leaf random tree, against
about $3$\,s for CV-Kelly. Peak traced memory is
at most $2.2$\,MiB for the plug-in and $19$\,MiB for the 300-point mixture. The plug-in is therefore best suited to settings where episodes are expensive, such as
logged deployment data.

One refit costs $O(|\J|r^2+r^3)$ with $r=\rank(B)$: at most $1.9$\,s up to 512 leaves
but $210$\,s at 2048 leaves (\cref{tab:scaling} in \cref{app:matched}). Exact certification is practical up to a few thousand leaves; for larger trees the
cushion certifier of \cref{sec:local} replaces the global program, at a per-episode cost
independent of the size of the tree.

\subsection{Soundness}
\label{sec:soundness}

\paragraph{Least-favourable nulls on 21 instances.} Each certifier was run under the
least-favourable law $q^\star$ of 21 instances: the memory tree with $d=0$, 16 random trees
(the first 16 of \cref{sec:random-comparators}) and Tiger with $H\in\{3,4\}$ and
$\epsilon\in\{0.05,0.2\}$. The horizon was 1000 episodes. Each cell used 1000 runs at
$\delta=0.05$ and 2000 at $\delta=10^{-3}$, except the four certifiers that refit the exact
program, which used 300. \Cref{tab:soundness} reports the rate pooled over the 21
instances, with its 95\% Clopper--Pearson interval, and the worst instance. Every pooled rate
is below $\delta$, and its upper confidence limit is below $\delta$ except for CV-Kelly with
the raw empirical plug-in at $\delta=10^{-3}$ (2 of 6300 runs, upper limit $1.1\times10^{-3}$). At
$\delta=0.05$ the worst single instance reaches $0.050$ (the hedge, 15 of 300 runs). No
instance's interval lies entirely above $\delta$: the largest lower limit over all certifiers
and instances is $0.035$ at $\delta=0.05$ and $3.1\times10^{-4}$ at $\delta=10^{-3}$. These
worst-instance figures are maxima over 21 cells and so are biased upwards. The exact-program
certifiers run closest to $\delta$, as expected for tests that are nearly growth-optimal
at the boundary null. The doubling-grid bounds never certified falsely, consistent with
their conservatism in \cref{tab:comparators}.

\begin{table}[t]
\centering
\caption{False certification under the least-favourable law $q^\star$ of 21 instances
(1 memory tree, 16 random trees, 4 Tiger trees), horizon 1000. ``runs'': per instance.
Pooled: rate over all instances with 95\% Clopper--Pearson interval. Worst: the
largest per-instance rate, with its interval at $\delta=0.05$. The four exact-program
certifiers used 300 runs per instance, so at $\delta=10^{-3}$ a single certification gives
a per-instance rate of $0.003$.}
\label{tab:soundness}
\small
\setlength{\tabcolsep}{3.5pt}
\resizebox{\linewidth}{!}{%
\begin{tabular}{lrllrlr}
\toprule
& \multicolumn{3}{c}{$\delta=0.05$} & \multicolumn{3}{c}{$\delta=10^{-3}$}\\
\cmidrule(lr){2-4}\cmidrule(lr){5-7}
certifier & runs & pooled & worst & runs & pooled & worst\\
\midrule
CV-Kelly & 300 & 0.025 [0.021, 0.029] & 0.040 [0.021, 0.069] & 300 & 0/6300 [0, 5.9e-4] & 0\\
CV-Kelly (raw plug-in) & 300 & 0.030 [0.025, 0.034] & 0.043 [0.023, 0.073] & 300 & 2/6300 [3.8e-5, 1.1e-3] & 0.003\\
hedged & 300 & 0.033 [0.029, 0.038] & 0.050 [0.028, 0.081] & 300 & 1/6300 [4.0e-6, 8.8e-4] & 0.003\\
mixture (MC) & 300 & 0.032 [0.028, 0.037] & 0.047 [0.026, 0.077] & 300 & 1/6300 [4.0e-6, 8.8e-4] & 0.003\\
scalar Kelly & 1000 & 0.018 [0.016, 0.020] & 0.031 [0.021, 0.044] & 2000 & 12/42000 [1.5e-4, 5.0e-4] & 0.001\\
WSR-PDIS & 1000 & 0.018 [0.016, 0.020] & 0.029 [0.020, 0.041] & 2000 & 22/42000 [3.3e-4, 7.9e-4] & 0.002\\
WSR-DR & 1000 & 0.007 [0.006, 0.008] & 0.034 [0.024, 0.047] & 2000 & 4/42000 [2.6e-5, 2.4e-4] & 0.001\\
DR fit & 1000 & 0.020 [0.018, 0.022] & 0.039 [0.028, 0.053] & 2000 & 6/42000 [5.2e-5, 3.1e-4] & 0.001\\
DR orc & 1000 & 0.021 [0.019, 0.023] & 0.036 [0.025, 0.049] & 2000 & 13/42000 [1.6e-4, 5.3e-4] & 0.001\\
LS, EB, HCOPE & 1000 & 0 [0, 1.8e-4] & 0 [0, 0.004] & 2000 & 0/42000 [0, 8.8e-5] & 0\\
\bottomrule
\end{tabular}}
\end{table}

\paragraph{Other vertices of the null.} $q^\star$ is one point of $\Qn$. On a memory tree and
a random tree we also ran each certifier at $\delta=0.05$ under 10 vertices of $\Qn$ obtained
from random linear objectives (1000 runs per law, horizon 1000). The worst rate over these
22 laws was at most $0.048$ $[0.036, 0.063]$ for every certifier, and the average over laws
was $0.009$--$0.025$ for the betting certifiers (\cref{fig:soundness}(a)).

\begin{figure}[t]
\centering
\includegraphics{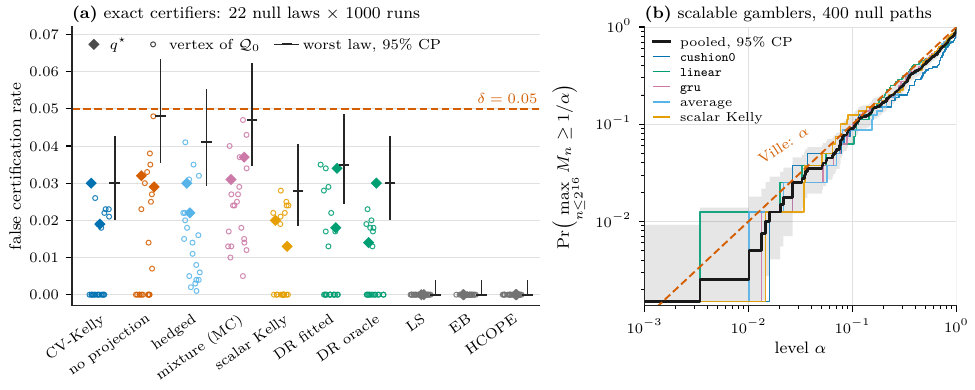}
\caption{Soundness. (a)~False certification rate at $\delta=0.05$
under 22 null laws: for each of a memory tree and a random tree, $q^\star$ (filled
diamonds) and 10 vertices of $\Qn$ (open circles), with 1000 runs per law and a horizon
of 1000 episodes. Points are jittered horizontally. Black bars: the worst law for each
certifier with its 95\% Clopper--Pearson interval. (b)~Scalable certifiers at the
DelayedCue boundary null ($\Delta=0$): the fraction of 400 wealth paths (80 null runs
$\times$ 5 Kelly-type gamblers, horizon $2^{16}$) whose maximum reached $1/\alpha$. Ville's
inequality bounds it by $\alpha$ (dashed). Grey band: 95\% Clopper--Pearson interval for
the pooled curve. Zero frequencies are drawn at the lower edge.}
\label{fig:soundness}
\end{figure}

\subsection{The curse of memory and the Tiger POMDP}

Over 40 memory trees ($k\in\{2,3,4\}$, $d\le8$, up to 3072 leaves), $\Kstar$ depends on
$(k,d)$ only through the number of cue histories $k^d$ (\cref{fig:memory}(a)). For example, $(k,d)=(2,2)$ and
$(4,1)$ agree to every digit. $\Kstar k^d$ decreases to $0.151$ ($p=0.2$) and $0.128$
($p=0.02$), so $\Kstar=\Theta(k^{-d})$ in this family, consistent with the upper
bound of \cref{cor:memory}. For $d\ge1$, the
coverage ceiling is $2.05$--$2.53\times\Kstar$, and $\Kscal/\Kstar$ lies in $0.40$--$0.45$.

\Cref{tab:tiger} and \cref{fig:memory}(b) show that the same behaviour arises in a standard partially observable
benchmark rather than a hand-built instance. Tiger's target needs memory of the listening
counts, and $\Kstar$ falls by a factor of 3--5 from $H=3$ to $H=8$. Across all twelve Tiger
instances, $\Kscal/\Kstar=0.20$--$0.26$, so the fair action bets are worth a factor of
about 4. The oracle DR bet reaches only $0.04$--$0.72$ of $\Kstar$, and the ratio
supersolution loses less than $1.5\%$ ($\Kloc/\Kstar\ge0.985$). These exponent statements
are exact computations on all twelve trees. The stopping-time experiments are narrower: at
$\epsilon=0.2$ and $H\in\{3,4,5\}$, where the exact plug-in runs in reasonable time,
CV-Kelly and the hedge stop within $1.13$--$1.51\times$ the lower bound. Relative to
CV-Kelly, scalar Kelly needs
$3.0$--$4.1\times$ as many episodes, WSR-PDIS $3.3$--$4.4\times$, WSR-DR
$2.8$--$8.1\times$, the DR Kelly bets $1.2$--$3.7\times$, and the grid bounds
$11$--$38\times$ or fail (\cref{fig:efficiency}(b)).
Under $q^\star$, CV-Kelly and the hedge certified in 0 of 100 runs at each of these $H$
(Clopper--Pearson upper limit $3.6\%$).
At $H=6$--$8$ (284--1148 leaves, $\rank(B)$ up to 510) a single CV-Kelly run did not finish
in 15 minutes, so we ran only the certifiers without a global refit, for both $\epsilon$
(100 runs per cell). The best practical scalar certificate, scalar Kelly or WSR-PDIS,
stops at $4.5$--$6.6\times$ the lower bound $\log(1/\delta)/\Kstar$, consistent with
$\Kstar/\Kscal\approx4$--$5$ on these trees. The oracle DR bet, which needs the true law,
ranges from $1.7\times$ to $22\times$, and the grid bounds need $16$--$55\times$ or fail.
On the largest Tiger trees the gap between $\Kstar$ and what scalar certificates attain is
therefore at least as large as at $H\le5$.

\begin{table}[t]
\centering
\caption{Tiger POMDP. Left: exact exponents for two logger exploration rates $\epsilon$.
Right: median stopping time and 95\% interval at $\epsilon=0.2$, $\delta=10^{-3}$, 100
runs per cell.}
\label{tab:tiger}
\small
\setlength{\tabcolsep}{4pt}
\begin{tabular}{rrcc}
\toprule
$H$ & leaves & $\Kstar$ ($\epsilon{=}0.05$) & $\Kstar$ ($\epsilon{=}0.2$)\\
\midrule
3 & 32 & 0.0289 & 0.0363\\
4 & 68 & 0.0266 & 0.0312\\
5 & 140 & 0.0087 & 0.0144\\
6 & 284 & 0.0080 & 0.0132\\
7 & 572 & 0.0058 & 0.0112\\
8 & 1148 & 0.0055 & 0.0109\\
\bottomrule
\end{tabular}
\hfill
\begin{tabular}{rrlll}
\toprule
$H$ & $\frac{\log(1/\delta)}{\Kstar}$ & CV-Kelly & hedged & scalar\\
\midrule
3 & 190 & 215 [192, 252] & 234.5 [202, 269] & 875.5 [776, 1045]\\
4 & 221 & 303.5 [282, 341] & 316.5 [288, 367] & 1123 [958, 1255]\\
5 & 480 & 725 [632, 782] & 690 [590, 779] & 2190.5 [1712, 2379]\\
\bottomrule
\end{tabular}
\end{table}

\begin{figure}[t]
\centering
\includegraphics{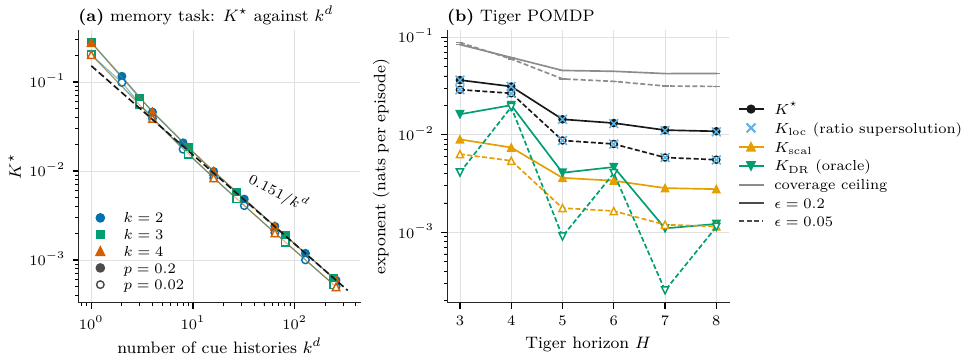}
\caption{(a)~$\Kstar$ of the memory task against the number of cue
histories $k^d$ (filled: $p=0.2$; open: $p=0.02$). Trees with equal $k^d$ coincide, and
$\Kstar k^d$ decreases to $0.151$ at $p=0.2$ (dashed), the $\Theta(k^{-d})$ rate of
\cref{cor:memory}. (b)~Exact exponents of the Tiger POMDP against the horizon $H$, for
logger exploration $\epsilon=0.2$ (solid) and $0.05$ (dashed): $\Kstar$, the cushion
exponent $\Kloc$ with the ratio supersolution (crosses, on top of $\Kstar$), $\Kscal$,
the oracle DR exponent and the coverage ceiling of \cref{thm:ceiling}.}
\label{fig:memory}
\end{figure}

\subsection{Pricing a structural assumption}

On the memory task, the assumption that distractor actions are irrelevant restores the
$d=0$ exponent at every delay ($\Kstar_F/\Kstar=2.39$, $6.03$, $13.35$ at $d=1,2,3$), and
the restricted certifier's stopping time is flat in $d$. Under $q^\star$, which violates
the assumption, it certifies falsely in every run. \Cref{thm:structure}(ii) forces a
false-certification rate of at least $1.4\%$, $19\%$ and $47\%$ on any certifier that
fast (\cref{app:structure-exp}, \cref{tab:structure}).

\subsection{Random trees: locality and the Gaussian approximation}

On 200 random trees (\cref{app:random}), the ratio supersolution has zero price of
locality on 192; the eight exceptions are depth-2 trees with a large advantage (worst
$\Kloc/\Kstar=0.69$). The median of $\Kscal/\Kstar$ is $0.37$, and the oracle DR bet's
exponent falls from $0.49\Kstar$ at depth 2 to $0.10\Kstar$ at depth 4. At each tree's own
advantage, $\Kstar$ differs from the efficiency approximation $\Delta^2/2\sigeff$ by a
factor between $0.014$ and $19.4$ (\cref{fig:random}), so \cref{thm:local} is a
small-advantage statement only.

\subsection{Scalable certification}

\Cref{tab:scalable} and \cref{fig:scalable} report the cushion certifier of \cref{prop:scalable} on DelayedCue,
with $\delta=10^{-3}$ and a horizon of $2^{16}$ episodes, on a T4 GPU, over 30 alternative
runs per cell. With 30 runs the 95\% order-statistic interval for a median spans the 10th
to 21st order statistics (shown in \cref{fig:scalable}). For \texttt{cushion0} it extends
$11$--$32\%$ below and $16$--$63\%$ above the median (unbounded above at $L=40$,
$\Delta=0.02$), so differences of less than about $1.5\times$ between gamblers in a single
cell are within the interval. The gamblers are
\texttt{cushion0} ($\lambda$ only, $g=0$), \texttt{linear} (fair fractions linear in
history features, including $\log(\pi/\mu)$) and \texttt{gru} (a recurrent head).
These large-tree results cannot be normalized by $\Kstar$, since that exponent
requires enumerating the DelayedCue tree. The matched study at the end of this section
does so on enumerable DelayedCue trees. None of
the 80 null runs (20 per $L$) certified. Pooled over $L$ and the five Kelly-type gamblers
(the three above, their average and scalar Kelly), the frequency with which wealth ever
crossed $1/\alpha$ within the horizon was $0.43$, $0.17$, $0.09$ and $0.035$ at
$\alpha=0.5$, $0.2$, $0.1$, $0.05$ (400 wealth paths, \cref{fig:soundness}(b); 95\% Clopper--Pearson intervals
$[0.38,0.48]$, $[0.13,0.21]$, $[0.06,0.12]$, $[0.02,0.06]$). These are close to Ville's
bound $\alpha$, so the tests are not conservative at the boundary null. \texttt{cushion0} is the
fastest single gambler in every cell, although the average of the three gamblers is
faster at $L=20$, $\Delta=0.05$ ($5433$ against $6286$). It beats scalar Kelly by $1.8$--$2.3\times$ at
$L\le10$ and by $1.2$--$1.8\times$ at $L=20$ (ratios of median stopping times, scalar Kelly
over \texttt{cushion0}; these are not comparable to the ratios to the lower bound
$\log(1/\delta)/\Kstar$ reported for the matched instances below, which use a different
denominator), and the grid bounds never certify reliably.
Stopping times grow with $L$ for every certifier, which is the scalable counterpart of the
curse of memory. The learned tilts lose online.
On held-out data the linear tilt has the highest growth rate, but refitting it from few
episodes costs more wealth than its higher rate recovers within $2^{16}$ episodes.
\Cref{cor:hedge} removes the need to choose a gambler in advance. Averaging the three
gamblers' wealths costs at most $\log3$ nats against the best of them. In practice the
average's median is within the 95\% interval of \texttt{cushion0}'s in every cell. The
choice matters run by run: at $\Delta=0.05$, \texttt{linear} or \texttt{gru} crossed
$1/\delta$ before \texttt{cushion0} in 10--14 of the 30 runs at every $L$, and the average
beat \texttt{cushion0} in 8--12 of them.

\begin{table}[t]
\centering
\caption{Scalable certification on DelayedCue at $\delta=10^{-3}$: median episodes over
30 alternative runs (certification rate in parentheses when below one). $\infty$:
censored at $2^{16}$. \emph{average} is the mean of the three gamblers' wealths
(\cref{cor:hedge}).}
\label{tab:scalable}
\small
\setlength{\tabcolsep}{4pt}
\resizebox{\linewidth}{!}{%
\begin{tabular}{rrrrrrrrrr}
\toprule
$L$ & $\Delta$ & \texttt{cushion0} & \texttt{linear} & \texttt{gru} & average & scalar Kelly & EB & LS & HCOPE\\
\midrule
5 & 0.02 & 14858 & 56080 (.60) & $\infty$ (.20) & 18053 & 33977 (.83) & 65536 (.67) & $\infty$ (0) & $\infty$ (0)\\
5 & 0.05 & 2462 & 7046 & 28608 (.93) & 2552 & 5715 & 16384 (.70) & $\infty$ (.03) & $\infty$ (.23)\\
10 & 0.02 & 19502 (.93) & $\infty$ (.37) & $\infty$ (.03) & 22694 (.93) & 41110 (.87) & 65536 (.53) & $\infty$ (0) & $\infty$ (0)\\
10 & 0.05 & 3510 & 6682 & 46493 (.83) & 3470 & 6384 & 16384 (.77) & $\infty$ (.13) & $\infty$ (.37)\\
20 & 0.02 & 26996 (.87) & $\infty$ (.20) & $\infty$ (.17) & 27440 (.83) & 48838 (.70) & $\infty$ (.03) & $\infty$ (0) & $\infty$ (0)\\
20 & 0.05 & 6286 & 11804 & 8814 (.77) & 5433 & 7648 & $\infty$ (.40) & $\infty$ (.13) & $\infty$ (.23)\\
40 & 0.02 & 55092 (.60) & $\infty$ (.03) & $\infty$ (0) & 58620 (.53) & $\infty$ (.50) & $\infty$ (0) & $\infty$ (0) & $\infty$ (0)\\
40 & 0.05 & 8045 & 15609 & 56642 (.53) & 8753 & 10258 & $\infty$ (0) & $\infty$ (.10) & $\infty$ (.40)\\
\bottomrule
\end{tabular}}
\end{table}

\paragraph{Matched small instances.}
\label{sec:matched}
On four enumerable DelayedCue trees with 32--512 leaves (\cref{app:matched},
\cref{tab:matched}), the stopping time in units of the lower bound factors into a
supersolution term $\Kstar/\Kloc(\mbar)$, a gambler-class term and an online-learning
term. The supersolution term is $1$ to six digits on all four trees, so the closed-form
supersolution is lossless there and the whole gap comes from the gamblers: the class
term is $1.33$--$1.48$ for \texttt{cushion0} and the online term $1.3$--$1.6$. Overall,
\texttt{cushion0} stops at $1.7$--$2.2\times$ the lower bound and CV-Kelly at
$1.6$--$3.2\times$, with overlapping 95\% intervals. A CV-Kelly run takes $176$--$231$\,s
at 512 leaves, against $40$--$100$\,s for the scalable runner with all gamblers.

\section{Conclusion}
\label{sec:conclusion}

We showed that certifying policy improvement from history-dependent logs with a known
logger has an exact instance-dependent exponent, $\Kstar$. No sound certifier stops in
fewer than $\log(1/\delta)/\Kstar$ expected episodes at any $\delta$, and whenever
$0<\Kstar<\infty$ a plug-in Kelly certifier attains this rate to first order as
$\delta\to0$. The optimal certificate bets on the logged actions as well as on the
advantage, which explains why refinements of the off-policy estimator cannot change the
exponent, why memory forces exponential sample sizes, and how much a structural
assumption is worth. Coverage enters through the superhedging price and decides whether
certification is impossible, possible at a finite rate, or possible in a constant number
of episodes. In our experiments the plug-in's median stopping time was within
$1.13$--$1.64\times$ the lower bound (Tiger with $H\le5$), and the cushion certifier
extends the approach to trees too large to enumerate.

A natural next step is to bring the scalable certifier closer to $\Kstar$, for example by
learning the supersolution or the pool of gamblers from data. Extending the framework to
loggers that are unknown and must be estimated would further broaden the range of
deployments in which such certificates can be used.

\appendix

\section{Overview of the appendix}
\label{app:overview}

The appendix has two parts. Part~I proves the results of the main text in the order in
which they appear there: the lower bound and the duality theorem (\cref{sec:exponent}),
the coverage regimes and the local expansion (\cref{sec:coverage}), the plug-in and
mixture certifiers (\cref{sec:achieve}), structural assumptions (\cref{sec:structure})
and local certification (\cref{sec:local}). Each proof opens with a short overview and is
split into named steps. Part~II collects the additional experiments and the
implementation details. \Cref{tab:appx-map} maps each result to its proof, and
\cref{tab:notation} summarizes the notation. Results that are not listed in
\cref{tab:appx-map} are proved in the main text.

\begin{table}[h]
\centering
\small
\caption{Where each result is proved.}
\label{tab:appx-map}
\begin{tabular}{p{0.52\linewidth}p{0.40\linewidth}}
\toprule
Result & Proof\\
\midrule
\Cref{thm:lower} (lower bound) & \Cref{app:lower}\\
\Cref{thm:duality} (duality) & \Cref{app:duality}\\
\Cref{prop:trichotomy} (coverage trichotomy) & \Cref{app:coverage}\\
\Cref{prop:positivity} (positivity under partial coverage) & \Cref{app:coverage}\\
\Cref{thm:local} (local expansion) & \Cref{app:local}\\
\Cref{prop:sandwich} (explicit local sandwich) & \Cref{app:local}\\
\Cref{thm:plugin} (plug-in certifier), with \cref{lem:continuity,lem:projection} & \Cref{app:plugin}\\
\Cref{thm:plugin-explicit} (explicit plug-in bounds) & \Cref{app:plugin}\\
\Cref{thm:mixture} (universal mixture) & \Cref{app:mixture}\\
\Cref{prop:finite-mixture} (finite-point mixture) & \Cref{app:mixture}\\
\Cref{lem:irrelevance} and \cref{thm:structure} (structural assumptions) & Stated in \cref{app:structure-main}, proved in \cref{app:structure}\\
\Cref{prop:pool} (pool selection) & \Cref{app:structure-main}\\
\Cref{thm:cushion} (cushion parametrization) & \Cref{app:cushion}\\
\Cref{cor:local} (local exponent) & \Cref{app:cushion}\\
\bottomrule
\end{tabular}
\end{table}

\begin{table}[h]
\centering
\small
\caption{Main notation. Definitions are in \cref{sec:setting} unless another location is
given.}
\label{tab:notation}
\begin{tabular}{p{0.22\linewidth}p{0.70\linewidth}}
\toprule
Symbol & Meaning\\
\midrule
\multicolumn{2}{l}{\emph{Trees, policies and laws}}\\
$\tau$, $h$, $\J$ & Episode, history (node), set of leaves (complete episodes)\\
$ha$, $ho$, $h\preceq j$ & Child of $h$ by action $a$ or outcome $o$; node $h$ lies on the path to leaf $j$\\
$\mu$, $\pi$ & Logger (known, $\mu(a\mid h)>0$) and target policy\\
$R\in[0,b]$ & Total reward of an episode\\
$W$, $X$ & $W(\tau)=\prod_{t}\pi(a_t\mid h_t)/\mu(a_t\mid h_t)$ and $X=(W-1)R$\\
$\Es$, $P_e$ & All environments on the tree; logged law induced by $e$ and $\mu$\\
$P$, $S$ & True logged law $P_{e_0}$; its support $S=\supp P$\\
$\Delta$ & $\Delta(P)=V(\pi)-V(\mu)=\E_PX$\\
$B$ & Fair-bet matrix, $B_{j,(h,a)}=\1\{ha\preceq j\}-\mu(a\mid h)\1\{h\preceq j\}$ (\cref{lem:polytope})\\
$\Qs$, $q$ & Action-neutral laws $\{q\in\Delta(\J):B^\top q=0\}$; a generic law\\
$\Qn$ & Null laws $\{q\in\Qs:\E_qX\le0\}$\\
\midrule
\multicolumn{2}{l}{\emph{Exponents and optimal bets}}\\
$\Kstar$ & Certification exponent $\inf\{\KL(P\,\|\,q):q\in\Qn\}$ (\cref{def:exponent})\\
$\Kscal$ & Scalar exponent $\sup\{\E_P\log(1+\lambda X):\lambda\ge0,\ 1+\lambda X\ge0\text{ on }\J\}$\\
$\Ys$, $Y$ & Feasible bets $Y=1+\lambda X+B\nu\ge0$ on $\J$ (\cref{thm:duality})\\
$Y^\star$, $q^\star$ & Optimal bet $1+\lambda^\star X+B\nu^\star$; optimal null law, $q^\star=P/Y^\star$ on $S$\\
$m$ & Superhedging price on nodes (\cref{prop:superhedge}); $m=m(\mathrm{root})$ in \cref{sec:coverage}\\
$\sigeff$, $\sigis$ & $\min_\nu\Var_P(X+B\nu)$ and $\Var_P(X)$ (\cref{sec:coverage})\\
\midrule
\multicolumn{2}{l}{\emph{Certifiers}}\\
$\tau_\delta$ & Stopping time of a sequential certifier at level $\delta$\\
$M_t$, $Y_t$ & Wealth after $t$ episodes; the factor $1+s_t[\hat\lambda X(\tau_t)+(B\hat\nu)(\tau_t)]$ at episode $t$ (\cref{alg:plugin})\\
$P_{\min}$, $\Lambda$, $c_S$ & $\min_SP$ and the constants of \cref{thm:plugin-explicit}\\
$U$, $D$, $d$ & Basis of $\range(B)$; $D=[X,U]$; $d=1+\rank(B)$ (\cref{sec:achieve})\\
$Z$, $D_t$ & Feasible bets $\{z:\lambda\ge0,\ 1+Dz\ge0\text{ on }\J\}$ in these coordinates; $D_t=D(\tau_t)$\\
$G$ & Dominance certificate $G=X+B\nu\ge0$ in the arbitrage regime (\cref{prop:trichotomy}(d))\\
$\mbar$ & Supersolution, an upper bound on $m$ (\cref{sec:local})\\
$C$, $g$, $d(h)$ & Cushion, fair fraction and drop of the cushion game (\cref{sec:local})\\
$\Kloc(\mbar)$ & Local exponent, the best growth rate of the cushion game (\cref{cor:local})\\
\bottomrule
\end{tabular}
\end{table}

\clearpage
\begin{center}
{\Large\bfseries Part I: Proofs}
\end{center}
\addvspace{\baselineskip}

\section{Proof of \texorpdfstring{Theorem~\ref{thm:lower}}{the lower bound} (lower bound)}
\label{app:lower}

\Cref{thm:lower} states that every stopping time $\tau$ with $\E_P[\tau]<\infty$ satisfies
$q^{\otimes\infty}(\tau<\infty)\ge\exp(-\E_P[\tau]\,\KL(P\|q))$ for every law $q$ on $\J$
(\cref{eq:tradeoff}), and derives from it bounds for sequential certifiers (i), fixed-sample
tests (ii) and the least-favourable law (iii).

\begin{proof}[Proof of \cref{thm:lower}]
\emph{Proof overview.} We first prove \eqref{eq:tradeoff} for the bounded stopping times
$\tau\wedge N$, using the change-of-measure argument of \citet[Lemma~1]{garivier2016optimal}
on the event $\{\tau\le N\}$. Letting $N\to\infty$ gives \eqref{eq:tradeoff}. Parts (i)--(iii)
then follow by choosing $q$.

\paragraph{Step 1: truncated stopping times.} Fix $q$ with $\KL(P\|q)<\infty$; otherwise
\eqref{eq:tradeoff} is trivial. Then $P\ll q$ on $\J$, and the per-episode log-likelihood
ratio is bounded on $\supp P$. For each $N$, $\tau\wedge N$ is a bounded stopping time. Let
$A_N=\{\tau\le N\}$. Wald's identity and the data-processing inequality, as in
\citet[Lemma~1]{garivier2016optimal}, give
\begin{align*}
\E_P[\tau\wedge N]\,\KL(P\|q)
&\ \ge\ \kl\bigl(P(A_N),\,q(A_N)\bigr)\\
&\ \ge\ P(A_N)\log\frac{P(A_N)}{q(A_N)}+\bigl(1-P(A_N)\bigr)\log\bigl(1-P(A_N)\bigr).
\end{align*}
The second inequality drops the nonnegative term $-(1-x)\log(1-y)$ from $\kl(x,y)$.

\paragraph{Step 2: letting $N\to\infty$.} As $N\to\infty$:
\begin{itemize}
\item $P(A_N)\to1$, because $\E_P[\tau]<\infty$;
\item $q(A_N)\uparrow q(\tau<\infty)$;
\item $\E_P[\tau\wedge N]\uparrow\E_P[\tau]$, by monotone convergence.
\end{itemize}
The right side of Step~1 therefore tends to $\log(1/q(\tau<\infty))$, which proves
\eqref{eq:tradeoff}.

\paragraph{Step 3: consequences.}
(i) If $\E_P[\tau_\delta]<\infty$, apply \eqref{eq:tradeoff} to every $q\in\Qn$ and use
$q(\tau_\delta<\infty)\le\delta$. This gives $\E_P[\tau_\delta]\KL(P\|q)\ge\log(1/\delta)$,
and taking the infimum over $q$ proves the claim.

(ii) Apply the data-processing inequality to the event $\{\phi_n=1\}$:
\[
n\KL(P\|q)\ \ge\ \kl\bigl(P^{\otimes n}(\phi_n=1),\,q^{\otimes n}(\phi_n=1)\bigr)\ \ge\ \kl(1-\beta,\delta),
\]
because $\kl(x,y)$ decreases in $y$ for $y\le x$. Then take the infimum over $q$.

(iii) Take $q=q^\star$ in \eqref{eq:tradeoff}; $\KL(P\|q^\star)=\Kstar(P)$ by
\cref{thm:duality}.
\end{proof}

\section{Proof of \texorpdfstring{Theorem~\ref{thm:duality}}{the duality theorem} (duality)}
\label{app:duality}

\Cref{thm:duality} states that $\Kstar(P)=\sup_{Y\in\Ys}\E_P\log Y$ for every $P\in\Qs$,
with attainment by some $Y^\star=1+\lambda^\star X+B\nu^\star$ and a least-favourable law
$q^\star$ satisfying $q^\star_j=P_j/Y^\star_j$ on $\supp P$ when $\Kstar(P)<\infty$.

\paragraph{Notation used throughout the appendix.} $S=\supp P$ and
$P_{\min}=\min_{j\in S}P_j$. For a law $p$ on $\J$, let
$f_p(Y)=\sum_{j:p_j>0}p_j\log Y_j$, with $\log0=-\infty$.

\paragraph{Proof overview.} Weak duality, $\sup_{Y\in\Ys}f_P(Y)\le\Kstar(P)$, was proved in
\cref{sec:exponent}. We prove the reverse inequality, the attainment claims and the form of
$q^\star$. If $\Kstar(P)<\infty$ (Case~A), the refined Slater condition gives strong
Lagrangian duality, and optimizing out the scale of the dual variable turns the Lagrangian dual
into $\sup_{Y\in\Ys}\E_P\log Y$. If $\Kstar(P)=\infty$ (Case~B), LP duality produces feasible
bets with unbounded growth rate.

\subsection*{Case A: \texorpdfstring{$\Kstar(P)<\infty$}{K*(P) finite}}

\paragraph{Step A1: the primal and strong duality.} Since $\Kstar(P)<\infty$, some
$\bar q\in\Qn$ is positive on $S$. Write the primal as
\[
\min\ g(q)=\sum_{j\in S}P_j\log\frac{P_j}{q_j}\quad\text{s.t.}\quad q\ge0,\ \1^\top q=1,\
B^\top q=0,\ X^\top q\le0.
\]
The objective $g$ is convex on the open domain $\{q:q_j>0\ \forall j\in S\}$, and every
constraint is affine. Since $\bar q$ is feasible and lies in the domain, the refined Slater
condition \citep[\S5.2.3]{boyd2004convex} gives zero duality gap and attainment of the
Lagrangian dual.

\paragraph{Step A2: the Lagrangian.} Use multipliers $\lambda\ge0$ for $X^\top q\le0$, $\nu$
for $B^\top q=0$, $c$ for $\1^\top q=1$ and $\varrho\ge0$ for $q\ge0$, and set
$w=c\1+\lambda X+B\nu-\varrho$. The Lagrangian is
\[
L(q)=\sum_{j\in S}P_j\log\frac{P_j}{q_j}+\sum_jq_jw_j-c.
\]
Minimizing over $q$ coordinate by coordinate:
\begin{itemize}
\item for $j\notin S$, $\inf_{q_j\ge0}q_jw_j$ is $0$ if $w_j\ge0$ and $-\infty$ otherwise;
\item for $j\in S$, $\inf_{q_j>0}[P_j\log(P_j/q_j)+q_jw_j]$ equals $P_j\log w_j+P_j$,
  attained at $q_j=P_j/w_j$, if $w_j>0$, and $-\infty$ otherwise.
\end{itemize}
The dual function is therefore $\E_P\log w+1-c$ on $\{w\ge0,\ w>0\text{ on }S\}$. Since
increasing $w$ increases the dual function, $\varrho=0$ is optimal.

\paragraph{Step A3: removing the scale.} Every dual-feasible point has $c>0$. Indeed,
\[
\E_{\bar q}w=c+\lambda\E_{\bar q}X\le c,
\]
while $\E_{\bar q}w>0$ because $w\ge0$ everywhere and $w>0$ on $S$, where $\bar q>0$.
Writing $w=cY$ with $Y=1+(\lambda/c)X+B(\nu/c)\in\Ys$, the dual becomes
\[
\sup_{c>0,\ Y\in\Ys}\bigl[\E_P\log Y+(\log c+1-c)\bigr]=\sup_{Y\in\Ys}\E_P\log Y,
\]
because $\log c+1-c\le0$ with equality at $c=1$. This proves $\Kstar=\sup f_P$, and dual
attainment gives an optimal $Y^\star=1+\lambda^\star X+B\nu^\star$.

\paragraph{Step A4: the least-favourable law.} The primal minimum is attained at some
$q^\star$, because $\Qn$ is compact and $\KL(P\|\cdot)$ is lower semicontinuous. By strong
duality, $q^\star$ minimizes the Lagrangian at the dual optimum. The coordinatewise
minimization of Step~A2 then gives
\[
q^\star_j=P_j/Y^\star_j\ \text{ on }S,\qquad q^\star_jY^\star_j=0\ \text{ off }S.
\]
Complementary slackness gives $\lambda^\star\E_{q^\star}X=0$. Hence
$\KL(P\|q^\star)=\E_P\log Y^\star$.

\subsection*{Case B: \texorpdfstring{$\Kstar(P)=\infty$}{K*(P) infinite}}

We show that $\sup f_P=\infty$. There are two sub-cases.

\paragraph{Step B1: $\Qn=\emptyset$.} Then $m:=\min_{q\in\Qs}\E_qX>0$ by compactness of
$\Qs$. LP duality for $\min\{X^\top q:q\ge0,\ \1^\top q=1,\ B^\top q=0\}$ gives $\nu$ with
$X+B\nu\ge m$ on every leaf. So $Y_k=1+kX+B(k\nu)\ge1+km$ is feasible for every $k\ge0$, and
\[
f_P(Y_k)\ge\log(1+km)\to\infty.
\]

\paragraph{Step B2: $\Qn\ne\emptyset$, but every $q\in\Qn$ vanishes somewhere on $S$.} $\Qn$ is
a polytope. A point in its relative interior has maximal support $U$: every $q\in\Qn$ has
$\supp q\subseteq U$. By assumption $S\setminus U\neq\emptyset$, so
$\max\{\1_{S\setminus U}^\top q:q\in\Qn\}=0$. The LP duality of \cref{lem:evariables},
applied to $Y=\1_{S\setminus U}$, gives $\lambda\ge0$ and $\nu$ with
$G=\lambda X+B\nu\ge\1_{S\setminus U}$ on every leaf. Then $Y_k=1+kG\ge1$ is feasible and
\[
f_P(Y_k)\ge P(S\setminus U)\log(1+k)\to\infty.
\]

\paragraph{Extension to structural constraints.} The same proof applies verbatim with $B$
replaced by $[B,F]$ (\cref{thm:structure}), since $\Qs_F$ is again a polytope. \hfill$\square$

\section{Proofs for \texorpdfstring{Section~\ref{sec:coverage}}{the coverage regimes} (coverage regimes)}
\label{app:coverage}

This section proves \cref{prop:trichotomy} (coverage trichotomy) and \cref{prop:positivity}
(positivity under partial coverage). Recall that $m=m(\mathrm{root})$ is the superhedging price
of \cref{prop:superhedge} and that $\mu>0$ on every action node in \cref{prop:trichotomy}.

\subsection{A dual certificate}

Both proofs use the following certificate. By finite LP duality (as in
\cref{prop:superhedge}),
\[
\max_\nu\min_{j\in\J}(X+B\nu)_j=\min_{q\in\Qs}\E_qX=-m,
\]
and the maximum is attained because $\Qs\neq\emptyset$ makes both programs feasible and
bounded. Hence if $m\le0$ there is $\nu$ with
\[
G:=X+B\nu\ge-m\ge0\quad\text{on }\J.
\]
Every $q\in\Qs$ has $\E_q[B\nu]=0$, so $\E_qG=\E_qX$. In particular $\E_PG=\Delta$ because
$P\in\Qs$.

\subsection{Proof of \texorpdfstring{Proposition~\ref{prop:trichotomy}}{the coverage trichotomy} (coverage trichotomy)}

\paragraph{Proof overview.} Parts (a) and (b) follow from the definition of $\Kstar$ and
compactness of $\Qn$. Part (c) is a cycle of implications; the step $m=0$ uses the dual
certificate $G$, and the step $m>0\Rightarrow$\cref{ass:fs} mixes a maximizer of $\E_q[-X]$
with a full-support law. Part (d) uses that $G$ vanishes almost surely under every null law.

\paragraph{Part (a).} If $\Delta\le0$ then $P\in\Qn$, so $\Kstar(P)\le\KL(P\|P)=0$.
Soundness applied to $q=P\in\Qn$ gives $P(\tau<\infty)\le\delta<1$, so $P(\tau=\infty)>0$
and $\E_P\tau=\infty$.

\paragraph{Part (b).} If $\Delta>0$ then $P\notin\Qn$. $\Qn$ is a closed subset of the
simplex, hence compact, and $q\mapsto\KL(P\|q)$ is lower semicontinuous, so the infimum is
attained when $\Qn\neq\emptyset$ (if $\Qn=\emptyset$, $\Kstar=\infty>0$). At a minimizer
$q^\star\neq P$, so $\KL(P\|q^\star)>0$.

\paragraph{Part (c).} Assume $\Delta>0$. We prove four implications.

\emph{\cref{ass:fs} $\Rightarrow m>0$.} If $m<0$ then $\Qn=\emptyset$
(\cref{prop:superhedge}). If $m=0$, take $G$ from the dual certificate and a full-support
$q\in\Qn$. Then
\[
0\le\E_qG=\E_qX\le0,
\]
and with $G\ge0$ this forces $G=0$ $q$-a.s., hence $G\equiv0$ on $\J$. This contradicts
$\E_PG=\Delta>0$.

\emph{$m>0\Rightarrow$ \cref{ass:fs}.} Let $q_m\in\argmax_{\Qs}\E_q[-X]$, so
$\E_{q_m}X=-m<0$. Let $u\in\Qs$ be the law of $\mu$ run against the environment that draws
every outcome uniformly; since $\mu>0$ on every action node, $u$ has full support. For
$\varepsilon=\min\{\tfrac12,\,m/(m+|\E_uX|)\}>0$, the mixture
$q_\varepsilon=(1-\varepsilon)q_m+\varepsilon u$ lies in the convex set $\Qs$, has full
support, and
\[
\E_{q_\varepsilon}X\le-(1-\varepsilon)m+\varepsilon|\E_uX|\le0,
\]
so $q_\varepsilon\in\Qn$.

\emph{\cref{ass:fs} $\Rightarrow\Kstar<\infty$}, since $\KL(P\|q)<\infty$ for a
full-support $q$.

\emph{$\Kstar<\infty\Rightarrow m>0$.} If $m<0$, $\Kstar=\infty$. If $m=0$, every $q\in\Qn$
has $\E_qG=\E_qX\le0$ and $G\ge0$, so $q(G>0)=0$. But $\E_PG=\Delta>0$ gives $P(G>0)>0$, so
$P\not\ll q$ and $\KL(P\|q)=\infty$ for every $q\in\Qn$.

\paragraph{Part (d).} With $G$ as above, every $q\in\Qn$ has $q(G>0)=0$ (shown in (c)), so
\[
q^{\otimes\infty}(\tau<\infty)\le\sum_tq(G(j_t)>0)=0\le\delta.
\]
Under $P$ the episodes are i.i.d., so $\tau$ is geometric with success probability
$p=P(G>0)>0$ and $\E_P\tau=1/p$. \hfill$\square$

\subsection{Proof of \texorpdfstring{Proposition~\ref{prop:positivity}}{positivity} (positivity under partial coverage)}

\paragraph{Proof overview.} We identify the set of observable laws of null environments with
the null polytope of the covered tree for the claim $X^0$. Since uncovered rewards are
invisible to the data, the adversary can set them to $0$. The equivalence then follows from
\cref{prop:trichotomy}(a),(b) on the covered tree, and a one-step example shows that
$\Delta^0\le0<V(\pi)-V(\mu)$ can occur.

\paragraph{Step 1: decomposition of an environment.} An environment $e$ is specified by its
kernels at environment nodes reached with positive probability under $\mu$, which determine
the observable law $q=P_e$ on the covered tree, together with the unobserved kernels and
rewards after uncovered actions, which do not affect $q$. For every $e$,
\[
V_e(\mu)=\E_q R,\qquad V_e(\pi)=\E_q[W^0R]+V^{\mathrm{unc}}_e(\pi),
\]
with $V^{\mathrm{unc}}_e(\pi)\in[0,\infty)$ because rewards are nonnegative.

\paragraph{Step 2: the null of observable laws.} We claim
\[
\bigcup_{e}\{P_e: V_e(\pi)\le V_e(\mu)\}=\{q\in\Qs^0:\E_qX^0\le0\},
\]
where $\Qs^0$ is the action-neutral polytope of the covered tree.
\begin{itemize}
\item ``$\subseteq$'': $\E_qX^0=V_e(\pi)-V^{\mathrm{unc}}_e(\pi)-V_e(\mu)\le0$.
\item ``$\supseteq$'': given $q$ with $\E_qX^0\le0$, set every unobserved reward to $0$, so
$V^{\mathrm{unc}}_e(\pi)=0$ and $V_e(\pi)-V_e(\mu)=\E_qX^0\le0$.
\end{itemize}

\paragraph{Step 3: reduction to the covered tree.} Since data are drawn from the covered tree
only, soundness and $\Kstar$ are defined by this null, which is the null of the covered tree
with claim $X^0$. Applying \cref{prop:trichotomy}(a),(b) on the covered tree (where $\mu>0$ by
definition) gives $\Kstar=0$ iff $\Delta^0\le0$.

\paragraph{Step 4: an instance with $\Delta^0\le0<V(\pi)-V(\mu)$.} Take one step and two
actions. The logger $\mu$ always plays action $1$, with reward $r_1<b$, and $\pi$ plays the
uncovered action $2$, whose reward is $b$. Then $\Delta^0=-r_1\le0$ while
$V(\pi)-V(\mu)=b-r_1>0$. \hfill$\square$

\section{Proof of \texorpdfstring{Theorem~\ref{thm:local}}{the local expansion} (local expansion)}
\label{app:local}

This section proves \cref{thm:local}, which states that
$\Kstar(P)=\frac{\Delta^2}{2\sigeff}(1+O(\Delta))$ and
$\Kscal(P)=\frac{\Delta^2}{2\sigis}(1+O(\Delta))$ as $\Delta\to0$, and the explicit
sandwich of \cref{prop:sandwich}.

\paragraph{Assumptions.} We make the hypotheses of \cref{thm:local} precise. Along the
sequence of instances, $\Delta\to0$, $\sigeff$ is bounded away from zero,
$\sup_\J|X|\le\kappa_X$, and some minimizer $\theta^\star$ of $\Var_P(X+B\nu)$ has
$\sup_S|X+B\theta^\star|\le\kappa_S$, for constants $\kappa_X,\kappa_S$.

\paragraph{Proof overview.} The efficient residual $Z=X+B\theta^\star$ defines both a feasible
bet $1+sZ$ and a feasible null law $P(1-\alpha(Z-\Delta))$. The bet needs $Z$ bounded on
\emph{every} leaf, not only on $S$, so we first replace $\theta^\star$ by a canonical
minimizer $\theta^\circ$ that is bounded on all of $\J$ (Step~1). Evaluating the dual at the
shrunken bet gives the lower bound (Step~2); evaluating the primal at the linear tilt gives
the upper bound (Step~3). The scalar exponent follows by removing $B$ (Step~4), and the same
two test points with exact series bounds give \cref{prop:sandwich}.

\subsection{Proof of \texorpdfstring{Theorem~\ref{thm:local}}{the local expansion}}

\paragraph{Step 1: a canonical efficient residual is bounded on every leaf.} For an action
node $h$ and $\nu\in\R^N$, write
$\tilde\nu(h,a)=\nu(h,a)-\sum_{a'}\mu(a'\mid h)\nu(h,a')$, so that
\[
(B\nu)(j)=\sum_{(h,a)\preceq j}\tilde\nu(h,a)
\]
by \eqref{eq:fairbets}. Call an action node \emph{$P$-reached} if $P(h)>0$. Since $\mu>0$,
every child $ha$ of a $P$-reached node is also $P$-reached. Let $\theta^\circ$ agree with
$\theta^\star$ at $P$-reached nodes and vanish elsewhere. Every leaf in $S$ passes only
through $P$-reached nodes, so $B\theta^\circ=B\theta^\star$ on $S$. Hence $\theta^\circ$ is
again a minimizer, and $Z:=X+B\theta^\circ$ satisfies $\sup_S|Z|\le\kappa_S$.

It remains to bound $Z$ off $S$. Because $P\in\Qs$, fair bets have conditional mean zero
under $P$ at every $P$-reached node $g$. Therefore
\[
\E_P[B\theta^\circ\mid g]=\sum_{(h,a)\preceq g}\tilde\theta^\circ(h,a).
\]
Let $j\notin S$, and let $g$ be the deepest node on the path to $j$ with $P(g)>0$. Every
action node on the path after $g$ has $P(h)=0$, where $\tilde\theta^\circ=0$. So
$(B\theta^\circ)(j)=\E_P[B\theta^\circ\mid g]=\E_P[Z\mid g]-\E_P[X\mid g]$, and
\[
|Z(j)|\le|X(j)|+\kappa_S+\kappa_X\le2\kappa_X+\kappa_S.
\]
Thus $Z$ is bounded on all of $\J$ by $\kappa:=2\kappa_X+\kappa_S$. This leafwise bound is
what makes the efficient bet below feasible. We use $\theta^\star:=\theta^\circ$ from now
on.

Two properties of $Z$ are used repeatedly. Since $P\in\Qs$, $\E_P[B\nu]=0$ for all $\nu$, so
$\E_PZ=\Delta$. The normal equations of the projection state that $Z-\Delta$ is orthogonal in
$L^2(P)$ to every column of $B$.

\paragraph{Step 2: lower bound (dual).} Let $s=\Delta/\E_PZ^2$ and
$Y=1+sZ=1+sX+B(s\theta^\star)$. For $\Delta$ small, $s\kappa\le1/2$, so $Y\ge1/2$ on $\J$
and $Y\in\Ys$. Using $\log(1+x)\ge x-x^2/2-|x|^3$ for $|x|\le1/2$,
\begin{align*}
\Kstar\ge\E_P\log(1+sZ)
&\ge s\Delta-\tfrac12s^2\E_PZ^2-s^3\kappa^3
&&\text{($\E_PZ=\Delta$, $|Z|\le\kappa$)}\\
&=\frac{\Delta^2}{2\E_PZ^2}-O(\Delta^3),
&&\E_PZ^2=\sigeff+\Delta^2.
\end{align*}

\paragraph{Step 3: upper bound (primal).} Let $\alpha=\Delta/\sigeff$,
$g=1-\alpha(Z-\Delta)$ and $q=Pg$. For small $\Delta$, $g\ge1/2$, so $q\ge0$. We check that
$q\in\Qn$:
\begin{align*}
\1^\top q&=\E_Pg=1,\\
B^\top q&=\E_P[gB]=\E_P[B]-\alpha\E_P[(Z-\Delta)B]=0
&&\text{(orthogonality of $Z-\Delta$)},\\
\E_qX&=\E_P[gX]=\Delta-\alpha\sigeff=0
&&\text{(see below)}.
\end{align*}
The last line uses $\E_P[(Z-\Delta)X]=\E_P[(Z-\Delta)(Z-B\theta^\star)]=\Var_P(Z)$.
Finally
\[
\KL(P\|q)=-\E_P\log g=\tfrac12\alpha^2\sigeff+O(\alpha^3)
=\frac{\Delta^2}{2\sigeff}+O(\Delta^3).
\]

Combining Steps~2 and~3, and using that $\sigeff$ is bounded away from zero,
$\Kstar=\frac{\Delta^2}{2\sigeff}(1+O(\Delta))$.

\paragraph{Step 4: scalar exponent.} Repeat the argument with $B$ removed. The dual point
$Y=1+sX$ with $s=\Delta/\E_PX^2$ lower-bounds $\Kscal$. For the upper bound, any law $q$ on
$\J$ with $\E_qX\le0$ satisfies $\E_q(1+\lambda X)\le1$ for every $\lambda\ge0$, so Jensen's
inequality gives $\Kscal\le\KL(P\|q)$. Take $q=P(1-\alpha(X-\Delta))$ with
$\alpha=\Delta/\sigis$. \hfill$\square$

\subsection{Proof of \texorpdfstring{Proposition~\ref{prop:sandwich}}{the explicit sandwich} (explicit local sandwich)}

Let $Z=X+B\theta^\circ$ be the canonical efficient residual of Step~1 and
$\sigma=\sigeff=\Var_PZ$. Then $\E_PZ=\Delta$, $\E_PZ^2=\sigma+\Delta^2$, and $Z-\Delta$ is
$L^2(P)$-orthogonal to every column of $B$. The proof uses the same test points as
\cref{thm:local}, replacing the $O(\cdot)$ remainders by exact series bounds.

\paragraph{Part (i): upper bound.} Let $\alpha=\Delta/\sigma$, $g=1-\alpha(Z-\Delta)$ and
$q=Pg$. On $S$, $\alpha(Z-\Delta)\le\rho_+<1$, so $g>0$ on $S$ and $q\ge0$ (off $S$, $q=0$).
As in Step~3, $\1^\top q=1$, $B^\top q=0$ and $\E_qX=\Delta-\alpha\sigma=0$, so $q\in\Qn$ and
\[
\Kstar\le\KL(P\|q)=\E_P[-\log(1-x)],\qquad x=\alpha(Z-\Delta)\le\rho_+.
\]
We bound $-\log(1-x)$ pointwise:
\begin{align*}
-\log(1-x)&=x+\sum_{k\ge2}x^k/k\le x+\frac{x^2}{2(1-x)}\le x+\frac{x^2}{2(1-\rho_+)}
&&(0\le x\le\rho_+),\\
-\log(1-x)&=-\log(1+|x|)\le x+x^2/2
&&(x<0).
\end{align*}
Since $\E_Px=0$ and $\E_Px^2=\alpha^2\sigma$,
\[
\Kstar\le\frac{\alpha^2\sigma}{2(1-\rho_+)}=\frac{\Delta^2}{2\sigma(1-\rho_+)}.
\]

\paragraph{Part (ii): lower bound.} Let $Y=1+sZ=1+sX+B(s\theta^\circ)$ with
$s=\Delta/(\sigma+\Delta^2)>0$. The hypothesis $s\max\{0,-\min_\J Z\}\le1$ is equivalent to
$Y\ge0$ on $\J$, so $Y\in\Ys$ and $\Kstar\ge\E_P\log Y$. On $S$, $x=sZ\ge-\rho_->-1$. We bound
$\log(1+x)$ pointwise:
\begin{align*}
\log(1+x)&\ge x-x^2/2&&(x\ge0),\\
\log(1+x)&=-\sum_k|x|^k/k\ge x-\frac{x^2}{2(1-|x|)}&&(-\rho_-\le x<0).
\end{align*}
In both cases $\log(1+x)\ge x-x^2/(2(1-\rho_-))$. Taking expectations with $\E_Px=s\Delta$ and
$\E_Px^2=s^2(\sigma+\Delta^2)$,
\[
\Kstar\ge s\Delta-\frac{s^2(\sigma+\Delta^2)}{2(1-\rho_-)}
=\frac{\Delta^2}{\sigma+\Delta^2}\Bigl(1-\frac1{2(1-\rho_-)}\Bigr)
=\frac{\Delta^2(1-2\rho_-)}{2(\sigma+\Delta^2)(1-\rho_-)}.
\]

\paragraph{Scalar version.} Replace $Z$ by $X$, $\sigma$ by $\sigis$ and drop $B$. In (i),
$q=P(1-\alpha(X-\Delta))$ is a law with $\E_qX=\Delta-\alpha\sigis=0$, and the Jensen
argument of Step~4 gives $\Kscal\le\KL(P\|q)$; the same series bound applies. In (ii), $1+sX$
is a feasible scalar bet. Since $\rho_\pm=O(\Delta)$ under the hypotheses of
\cref{thm:local}, the two bounds reproduce \cref{thm:local}. \hfill$\square$

\section{Proof of \texorpdfstring{Theorems~\ref{thm:plugin} and~\ref{thm:plugin-explicit}}{the plug-in theorems} (plug-in certifier)}
\label{app:plugin}

Part (i) of \cref{thm:plugin} (soundness) is proved in the main text. This section proves
parts (ii) and (iii) of \cref{thm:plugin} and both parts of \cref{thm:plugin-explicit}.

\paragraph{Notation.} Throughout this section $0\le\Kstar(P)<\infty$, $Y^\star$ is the
optimal wealth of \cref{thm:duality}, $L_j=\log Y^\star_j$ for $j\in S$, and
$r=\max_SL-\min_SL$. Let $\bar q$ be as in Case A of \cref{app:duality}. Recall that
$f_p(Y)=\E_p\log Y$, that $\hat Y_t$ denotes the bet fitted at the last refit before
episode $t$, and that $M_t=\prod_{s\le t}Y_s$ is the wealth of \cref{alg:plugin}.

\paragraph{Proof overview.} The plug-in bet is close to $Y^\star$ whenever the projected law
$\tilde p$ is close to $P$ and has support $S$. \Cref{lem:continuity} turns closeness of laws
into closeness of log-bets, and \cref{lem:projection} shows that $\tilde p$ has the right
support and is close to $P$ with exponentially high probability. Off these good events the
shrinkage $s_t=1-t^{-1/2}$ still guarantees a loss of at most $\frac12\log t$ per episode.
The strong law of large numbers then gives the almost-sure rate, and summing tail
probabilities gives the expected stopping time. For \cref{thm:plugin-explicit} the
qualitative continuity lemma is replaced by a quantitative one, and the strong law by the
Azuma--Hoeffding inequality.

\subsection{Two lemmas: continuity of the optimal bet and the logged-law projection}

\begin{lemma}[Continuity of the optimal bet]
\label{lem:continuity}
For every $\epsilon'>0$ there is $\eta>0$ such that, for every law $p$ with $\supp p=S$ and
$\|p-P\|_1\le\eta$, every maximizer $Y(p)$ of $f_p$ over $\Ys$ satisfies
$\max_{j\in S}|\log Y_j(p)-L_j|\le\epsilon'$.
\end{lemma}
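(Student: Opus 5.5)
The plan is a compactness-and-uniqueness argument by contradiction. Throughout, work only with the restrictions of bets to $S$, since $f_p$ depends only on these when $\supp p=S$. The key structural fact is that the optimal values $(Y^\star_j)_{j\in S}$ are unique. If two maximizers $Y,Y'$ of $f_P$ over $\Ys$ differed at some $j\in S$, then $\tfrac12(Y+Y')\in\Ys$, because $\Ys$ is convex. Strict concavity of $\log$ and $P_j>0$ on $S$ would then make $f_P(\tfrac12(Y+Y'))>\Kstar$, which contradicts \cref{thm:duality}. So the vector $L=(\log Y^\star_j)_{j\in S}$ is well defined.

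Suppose the lemma fails. Then there are $\epsilon'>0$, laws $p_n$ with $\supp p_n=S$ and $\|p_n-P\|_1\to0$, and maximizers $Y_n$ of $f_{p_n}$ with $\max_S|\log Y_{n,j}-L_j|>\epsilon'$. First I show that $\log Y_{n,j}$, $j\in S$, stays in a fixed compact interval.

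\emph{Upper bound.} Take $\bar q\in\Qn$ positive on $S$, which exists since $\Kstar<\infty$. Every $Y\in\Ys$ satisfies $\E_{\bar q}Y\le1$ and $Y\ge0$, so $Y_j\le1/\bar q_j$ on $S$.

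\emph{Lower bound.} Since $1\in\Ys$, optimality gives $f_{p_n}(Y_n)\ge0$. For large $n$, $p_{n,j}\ge P_{\min}/2$ on $S$. Combining this with the upper bound on the other coordinates gives
\[
\log Y_{n,j}\ \ge\ -\frac{2}{P_{\min}}\sum_{k\in S}\log^+(1/\bar q_k)\qquad\text{for all } j\in S,
\]
uniformly in $n$.

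Next, pass to a subsequence along which $Y_n|_S\to y$, with $y>0$ on $S$. The set $\{Y|_S:Y\in\Ys\}$ is a linear image of the polyhedron $\{(\lambda,\nu):\lambda\ge0,\ 1+\lambda X+B\nu\ge0\text{ on }\J\}$. It is therefore a polyhedron, hence closed. So $y=Y|_S$ for some $Y\in\Ys$.

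Because the $\log Y_{n,j}$ are uniformly bounded on $S$ and $p_n\to P$, we get $f_{p_n}(Y_n)\to f_P(Y)$ and $f_{p_n}(Y^\star)\to f_P(Y^\star)=\Kstar$. Here $\log Y^\star$ is finite on $S$ because $\Kstar<\infty$. Optimality of $Y_n$ gives $f_{p_n}(Y_n)\ge f_{p_n}(Y^\star)$, so $f_P(Y)\ge\Kstar$ and $Y$ is optimal for $P$. Uniqueness on $S$ then forces $\log y=L$, which contradicts $\max_S|\log Y_{n,j}-L_j|>\epsilon'$ along the subsequence.

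I expect the main obstacle to be the uniform two-sided control of $\log Y_n$ on $S$. The upper bound depends on the Slater point $\bar q$, and it is the only place where $\Kstar<\infty$ enters. Without the lower bound, a maximizer could approach zero at some leaf of $S$, and the passage to the limit in $f_{p_n}$ would fail. Closedness of the restricted feasible set is routine by the polyhedral argument above. No interior or full-support assumption on $\J\setminus S$ is needed, since leaves off $S$ affect only feasibility, and feasibility is preserved in the limit.
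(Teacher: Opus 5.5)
Your proof is correct and follows the paper's argument: you bound $\Ys$ restricted to $S$ using the Slater point $\bar q$, use closedness of its polyhedral image, get uniqueness of the maximizer on $S$ from strict concavity, and identify every limit point with $Y^\star|_S$. The only difference is that you secure positivity of the limit with an explicit lower bound from $f_{p_n}(Y_n)\ge f_{p_n}(1)=0$, whereas the paper lets the limit point have zeros and rules them out by upper semicontinuity of $\log$, since $f_P(\bar y)\ge\Kstar>-\infty$. Your bound is valid but not actually needed for the passage to the limit.
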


\begin{proof}
\emph{Compactness and uniqueness.} Every $Y\in\Ys$ has $\E_{\bar q}Y\le1$, so
$0\le Y_j\le1/\bar q_j$ for $j\in S$. The set $\Ys_S=\{Y|_S:Y\in\Ys\}$ is a linear image of a
polyhedron, hence closed, and it is bounded, hence compact. If $\supp p=S$, then $f_p$ is
strictly concave on $\Ys_S\cap(0,\infty)^S$, so the restriction $y(p)$ of any maximizer is
unique. In particular $y(P)=Y^\star|_S$, which is positive because
$f_P(Y^\star)=\Kstar>-\infty$.

\emph{Convergence.} Let $p_n\to P$ with $\supp p_n=S$, and let $\bar y$ be a limit point of
$y(p_n)$. Since $f_{p_n}(y(p_n))\ge f_{p_n}(Y^\star)\to\Kstar$, and each $P_j$ with $j\in S$
is positive, upper semicontinuity of $\log$ gives
\[
\Kstar\le\limsup_nf_{p_n}(y(p_n))\le f_P(\bar y)\le\Kstar.
\]
By uniqueness, $\bar y=Y^\star|_S$. Hence $y(p_n)\to Y^\star|_S$, a positive vector, and the
logarithms converge uniformly on the finite set $S$. The stated $\eta$ follows by
contradiction over sequences.
\end{proof}

\begin{lemma}[The logged-law projection]
\label{lem:projection}
Let $\hat p$ be the empirical law of $n$ episodes from $P$,
$\tilde p=\argmin_{q\in\Qs}\KL(\hat p\|q)$ and $\hat\eta=\|\hat p-P\|_\infty$.
\begin{enumerate}
\item[(a)] If $\hat\eta<P_{\min}$, then $\supp\tilde p=S$.
\item[(b)] $\|\tilde p-P\|_1\le|S|\hat\eta+\sqrt{2|S|/P_{\min}}\,\hat\eta$.
\item[(c)] $P(\hat\eta>\eta')\le2|S|\exp(-2n\eta'^2)$.
\end{enumerate}
\end{lemma}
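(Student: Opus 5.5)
The plan is to first identify $\tilde p$ explicitly as the maximum-likelihood law, then prove the three parts separately: (a) by induction down the tree, (b) by a triangle inequality together with Pinsker's inequality, and (c) by Hoeffding's inequality with a union bound.

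\emph{Form of the projection.} By \cref{lem:polytope}, every $q\in\Qs$ equals $P_e$ for some environment $e$. So $q_j=\prod\mu(a\mid h)\prod e(o\mid h)$ along the path to $j$, with the $\mu$ factors fixed. Hence
\[
\KL(\hat p\|q)=\text{const}-\sum_{\text{env nodes }h}\ \sum_o\hat p(ho)\log e(o\mid h),
\]
which separates over environment nodes. At every node with $\hat p(h)>0$ the minimizer is the empirical conditional $e(o\mid h)=\hat p(ho)/\hat p(h)$. At nodes with $\hat p(h)=0$ the kernel does not enter the objective; we fix any convention, for instance a uniform kernel. So $\tilde p$ is $\mu$ times the empirical environment conditionals, as stated after \cref{alg:plugin}. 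The convention at unvisited nodes will not matter for (a), because under its hypothesis every node that $\tilde p$ reaches is visited.

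\emph{Part (a).} Two observations come first. Leaves outside $S$ have $\hat p_j=0$ almost surely. If $\hat\eta<P_{\min}$, then $\hat p_j>0$ for every $j\in S$, so every node with $P(h)>0$ has $\hat p(h)>0$. I would then show by induction from the root that $\tilde p(h)>0$ if and only if $P(h)>0$.
\begin{itemize}
\item At an action node, both laws multiply by $\mu(a\mid h)>0$, so positivity passes to every child.
\item At an environment node $h$ with $P(h)>0$, $h$ is visited. The child $ho$ gets $\tilde p$-mass in proportion to $\hat p(ho)$. This is positive exactly when $P(ho)>0$: if $P(ho)>0$, some leaf of $S$ below $ho$ is observed, and if $P(ho)=0$ the child is never observed.
\end{itemize}
Nodes with $P(h)=0$ thus get $\tilde p(h)=0$ and are never reached, so the arbitrary convention is irrelevant. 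Applying the induction at the leaves gives $\supp\tilde p=S$.

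\emph{Part (b).} By the triangle inequality,
\[
\|\tilde p-P\|_1\le\|\hat p-P\|_1+\|\tilde p-\hat p\|_1 .
\]
The first term is at most $|S|\hat\eta$, since $\hat p-P$ vanishes off $S$. For the second term, I use three facts in turn:
\begin{itemize}
\item Pinsker's inequality, $\|\hat p-\tilde p\|_1\le\sqrt{2\KL(\hat p\|\tilde p)}$;
\item the optimality of $\tilde p$ over $\Qs$, which contains $P$, so $\KL(\hat p\|\tilde p)\le\KL(\hat p\|P)$;
\item the bound $\KL\le\log(1+\chi^2)\le\chi^2$.
\end{itemize}
Together they give
\[
\KL(\hat p\|P)\le\sum_{j\in S}\frac{(\hat p_j-P_j)^2}{P_j}\le\frac{|S|\hat\eta^2}{P_{\min}} .
\]
Here $\hat p\ll P$ almost surely, so the $\chi^2$ divergence is finite. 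This yields exactly $\sqrt{2|S|/P_{\min}}\,\hat\eta$ for the second term.

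\emph{Part (c).} For each $j\in S$, $\hat p_j$ is a mean of $n$ i.i.d.\ Bernoulli$(P_j)$ variables. Hoeffding's inequality gives $P(|\hat p_j-P_j|>\eta')\le2e^{-2n\eta'^2}$. Leaves outside $S$ contribute zero deviation almost surely, and a union bound over $S$ finishes the proof.

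The only delicate step is (a), specifically making sure that the arbitrary kernels at unvisited environment nodes cannot create mass outside $S$. The induction handles this because $P\in\Qs$ with $\mu>0$ means zero-probability branches arise only at environment nodes.
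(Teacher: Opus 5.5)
Your proposal is correct and follows the paper's proof step for step. You identify $\tilde p$ as $\mu$ times the empirical environment conditionals, and get support in (a) from which transitions were observed. For (b) you use the chain $\KL(\hat p\|\tilde p)\le\KL(\hat p\|P)\le\chi^2(\hat p\|P)\le|S|\hat\eta^2/P_{\min}$ with Pinsker and the triangle inequality, and for (c) Hoeffding with a union bound.

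Your root-to-leaf induction in (a) is slightly more careful than the paper's one-line argument. It makes explicit that under $\hat\eta<P_{\min}$ every node reached by $\tilde p$ is visited. So the arbitrary kernels at unvisited nodes, which could otherwise sit below an unobserved action branch, cannot put mass outside $S$.
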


\begin{proof}
(a) $\tilde p$ is the maximum-likelihood law with $\mu$ known, so it equals $\mu$ times the
empirical environment conditionals. Since $\mu>0$, a leaf outside $S$ has an environment
transition of $P$-probability zero. That transition is never observed, so its empirical
conditional is zero and $\supp\tilde p\subseteq S$. If $\hat\eta<P_{\min}$, every leaf of
$S$ has been observed. Then every environment transition on a path in $S$ has positive
empirical frequency, so $\tilde p>0$ on $S$.

(b) Since $P\in\Qs$ and $\hat p=P=0$ off $S$,
\begin{align*}
\KL(\hat p\|\tilde p)&\le\KL(\hat p\|P)
&&(\text{$\tilde p$ minimizes over $\Qs\ni P$})\\
&\le\chi^2(\hat p\|P)=\sum_{j\in S}(\hat p_j-P_j)^2/P_j\le|S|\hat\eta^2/P_{\min}.
\end{align*}
Pinsker's inequality gives $\|\hat p-\tilde p\|_1\le\sqrt{2|S|/P_{\min}}\,\hat\eta$. Add
$\|\hat p-P\|_1\le|S|\hat\eta$.

(c) Apply Hoeffding's inequality to each $j\in S$ and take a union bound.
\end{proof}

\subsection{Proof of \texorpdfstring{Theorem~\ref{thm:plugin}}{the plug-in theorem}, parts (ii) and (iii)}

\paragraph{Step 1: per-episode bounds.} Let $n(t)$ be the number of episodes behind the bet
used at episode $t$. From $r_k\le t-1<r_{k+1}\le\gamma r_k+1$ we get $n(t)\ge(t-2)/\gamma$.
Write the factor as $Y_t=(1-s_t)+s_t\hat Y_t(j_t)$ with $s_t=1-t^{-1/2}$, where $\hat Y_t$ is
the exact optimizer at the projected law $\tilde p_{n(t)}$ and $j_t$ is the leaf of episode
$t$. Since $\hat Y_t\ge0$ and $\log(1-x)\ge-2x$ for $x\le1/2$, for every $t$:
\begin{align}
\log Y_t&\ge\log(1-s_t)=-\tfrac12\log t, \label{eq:always}\\
\log Y_t&\ge\log s_t+\log\hat Y_t(j_t)\ge\log\hat Y_t(j_t)-2t^{-1/2}\qquad(t\ge4).
\label{eq:good}
\end{align}
The first bound holds always; the second is useful on the good events below.

\paragraph{Step 2: good events.} Given $\epsilon'>0$, take $\eta$ from \cref{lem:continuity}
and $\eta'<P_{\min}$ with $(|S|+\sqrt{2|S|/P_{\min}})\eta'\le\eta$. Let
$G_t=\{\|\hat p_{n(t)}-P\|_\infty\le\eta'\}$. On $G_t$, \cref{lem:projection} gives
$\supp\tilde p_{n(t)}=S$ and $\|\tilde p_{n(t)}-P\|_1\le\eta$. Since $j_t\in S$ almost surely,
\cref{lem:continuity}, \eqref{eq:good} and \cref{lem:projection}(c) give
\begin{align*}
\log Y_t&\ge L_{j_t}-\epsilon'-2t^{-1/2}\quad\text{on }G_t,\ t\ge4,\\
P(G_t^c)&\le2|S|\exp\bigl(-2\eta'^2(t-2)/\gamma\bigr).
\end{align*}

\paragraph{Step 3: almost-sure rate (part (ii)).} $\sum_tP(G_t^c)<\infty$, so by the
Borel--Cantelli lemma $G_t$ holds for all $t\ge t_0(\omega)$ almost surely. Summing
\eqref{eq:always} for $t<t_0$ and the good-event bound for $t\ge t_0$,
\[
\log M_t\ge\sum_{s\le t}L_{j_s}-t\epsilon'-4\sqrt t-C(\omega),
\]
where $C(\omega)$ collects finitely many early terms. By the strong law,
$\frac1t\sum_{s\le t}L_{j_s}\to\E_PL=\Kstar$, so $\liminf\frac1t\log M_t\ge\Kstar-\epsilon'$.
Since $\epsilon'$ is arbitrary, (ii) follows. This includes the case $\Kstar=0$.

\paragraph{Step 4: a high-probability wealth bound (part (iii)).} If $\Kstar=0$ the upper
bound is $+\infty$, so assume $\Kstar>0$. Fix $\epsilon\in(0,1/3)$ and set
$\epsilon'=\epsilon\Kstar$. For $T\ge16$ let $T_0=\lceil\sqrt T\rceil\ge4$ and
\[
E_T=\bigcap_{T_0\le t\le T}G_t,\qquad H_T=\Bigl\{\sum_{t\le T}L_{j_t}\ge T(\Kstar-\epsilon')\Bigr\}.
\]
On $E_T\cap H_T$, apply \eqref{eq:always} for $t<T_0$ and the good-event bound for
$t\ge T_0$. Using $\sum_{t\le T}t^{-1/2}\le2\sqrt T$, $T_0\le2\sqrt T$ and
$\sum_{t<T_0}L_{j_t}\le T_0\max(\max_SL,0)$,
\[
\log M_T\ge T\Kstar(1-2\epsilon)-\sqrt T(\log T+C_1),\qquad C_1=4+2\max(\max_SL,0).
\]

\paragraph{Step 5: from wealth to stopping time.} Let $T_1(\epsilon,P)\ge16$ be such that
$\sqrt T(\log T+C_1)\le\epsilon T\Kstar$ for all $T\ge T_1$. For
\[
T\ge T^\dagger:=\max\bigl(T_1,\lceil\log(1/\delta)/((1-3\epsilon)\Kstar)\rceil\bigr),
\]
the event $E_T\cap H_T$ implies $\log M_T\ge(1-3\epsilon)T\Kstar\ge\log(1/\delta)$, so
$\tau_\delta\le T$. Hence
\[
P(\tau_\delta>T)\le P(E_T^c)+P(H_T^c)\le
2|S|\sum_{t\ge\sqrt T}e^{-2\eta'^2(t-2)/\gamma}+e^{-2T\epsilon^2\Kstar{}^2/r^2}.
\]
The second term is Hoeffding's inequality for the i.i.d.\ variables $L_{j_t}$ of range $r$
(it is $0$ if $r=0$). The first term is $O(e^{-c\sqrt T})$, so both are summable. Therefore
\begin{align*}
\E_P[\tau_\delta]=\sum_{T\ge0}P(\tau_\delta>T)
&\le T^\dagger+\sum_{T\ge T^\dagger}[P(E_T^c)+P(H_T^c)]\\
&\le\frac{\log(1/\delta)}{(1-3\epsilon)\Kstar}+C(\epsilon,P),
\end{align*}
with $C(\epsilon,P)=T_1+1+\sum_{T\ge16}[P(E_T^c)+P(H_T^c)]$, which does not depend on
$\delta$. The lower bound is \cref{thm:lower}(i). \hfill$\square$

\subsection{Proof of \texorpdfstring{Theorem~\ref{thm:plugin-explicit}}{the explicit plug-in bounds} (explicit plug-in bounds)}

Recall from the main text: $P_{\min}=\min_SP$, $q^\star=P/Y^\star$ on $S$,
$\Lambda=\max\{\max_S\log(1/q^\star),\allowbreak\log(2/P_{\min})\}$, $c_S=|S|+\sqrt{2|S|/P_{\min}}$ and
$\eta'=P_{\min}/(2c_S)$; part (i) uses $\ell=\log(4|S|T/\beta)$ and
$T_0=\max\{4,R_1+1,\lceil2+\gamma\ell/(2\eta'^2)\rceil\}$.

\paragraph{Proof overview.} Step~1 is a quantitative replacement for
\cref{lem:continuity}: log-bets are bounded by $\Lambda$ on $S$, which converts an
$\ell_1$ error in the law into a loss of at most $2\Lambda\|\tilde p-P\|_1$ in growth rate.
Step~2 controls all projected laws at once by a union bound. Step~3 replaces the strong law
by the Azuma--Hoeffding inequality, and Step~4 assembles part (i). Part (ii) chooses
$\beta=2/T^2$ and sums the resulting tail bound, using two elementary facts about
$x/\log x$.

\paragraph{Step 1: quantitative continuity.} Let $\tilde p\in\Qs$ with
$\supp\tilde p\subseteq S$ and $\|\tilde p-P\|_1\le P_{\min}/2$. Then
$\tilde p_j\ge P_j-P_{\min}/2\ge P_{\min}/2$ on $S$, so $\supp\tilde p=S$; let $\hat y$ be a
maximizer of $f_{\tilde p}$ over $\Ys$. We bound $\hat y$ on $S$ from both sides.
\begin{itemize}
\item \emph{Upper bound.} $q^\star\in\Qn$ by \cref{thm:duality}, and every
$Y=1+\lambda X+B\nu\in\Ys$ has $\E_{q^\star}Y=1+\lambda\E_{q^\star}X\le1$. Since $Y\ge0$,
$q^\star_jY_j\le1$, i.e.\ $Y_j\le1/q^\star_j$ on $S$.
\item \emph{Lower bound.} $Y\equiv1\in\Ys$ and $\Ys$ is convex, so the one-sided derivative
of $f_{\tilde p}$ at $\hat y$ towards $1$ is nonpositive: $\E_{\tilde p}[(1-\hat y)/\hat y]\le0$,
i.e.\ $\E_{\tilde p}[1/\hat y]\le1$, hence $\hat y_j\ge\tilde p_j\ge P_{\min}/2$ on $S$. The
same argument at $P$ gives $Y^\star_j\ge P_j\ge P_{\min}$.
\end{itemize}
Hence $|\log\hat y_j|\le\Lambda$ and $|\log Y^\star_j|\le\Lambda$ for $j\in S$ (note
$q^\star_j\le1$, so $\log(1/q^\star_j)\ge0$). Using optimality of $\hat y$ for $\tilde p$
twice,
\begin{equation}
\E_P\log\hat y\ge\E_{\tilde p}\log\hat y-\Lambda\|\tilde p-P\|_1
\ge\E_{\tilde p}\log Y^\star-\Lambda\|\tilde p-P\|_1\ge\Kstar-2\Lambda\|\tilde p-P\|_1 .
\label{eq:quant-cont}
\end{equation}

\paragraph{Step 2: all projections are good simultaneously.} Fix $T\ge1$ and
$\beta\in(0,1)$. If $T<T_0$, then \eqref{eq:always} gives
\[
\log M_T\ge-\tfrac T2\log T_0\ge T\Kstar-(T_0-1)\bigl(\Kstar+\tfrac12\log T_0\bigr)
\]
deterministically, so assume $T\ge T_0$. Let
$A_n=\{\|\hat p_n-P\|_\infty\le\sqrt{\ell/(2n)}\}$. By Hoeffding's inequality and a union
bound over $j\in S$, $P(A_n^c)\le2|S|e^{-\ell}=\beta/(2T)$, so $A=\bigcap_{n\le T}A_n$ has
$P(A^c)\le\beta/2$.

Here $\eta'=P_{\min}/(2c_S)$ is the constant of \cref{thm:plugin-explicit}, not the $\eta'$
of Step~2 of the previous proof. For $t\ge T_0$ we have
$n(t)\ge(t-2)/\gamma\ge\ell/(2\eta'^2)$, so on $A_{n(t)}$ the deviation is at most
$\eta'<P_{\min}$. \Cref{lem:projection}(a,b) then give $\supp\tilde p_{n(t)}=S$ and
\[
\|\tilde p_{n(t)}-P\|_1\le c_S\sqrt{\ell/(2n(t))}\le c_S\sqrt{\gamma\ell/(2(t-2))}
\le c_S\eta'=P_{\min}/2,
\]
so Step~1 applies to $\tilde p_{n(t)}$.

\paragraph{Step 3: martingale concentration.} Let $\mathcal F_{t-1}$ be generated by
episodes $1,\dots,t-1$. The bet $\hat Y_t$ and the event $A_{n(t)}$ are
$\mathcal F_{t-1}$-measurable, and $j_t\sim P$ is independent of $\mathcal F_{t-1}$. Define,
for $T_0\le t\le T$,
\[
D_t=\1_{A_{n(t)}}\bigl(\log\hat Y_t(j_t)-\E_P[\log\hat Y_t(j)\mid\mathcal F_{t-1}]\bigr).
\]
On $A_{n(t)}$, and since $j_t\in S$ almost surely, $\log\hat Y_t(j_t)\in[-\Lambda,\Lambda]$.
So $(D_t)$ is a martingale difference sequence whose conditional range has length at most
$2\Lambda$. The Azuma--Hoeffding inequality gives
\[
P\Bigl(\sum_tD_t<-\Lambda\sqrt{2T\log(2/\beta)}\Bigr)
\le\exp\bigl(-2\cdot2T\Lambda^2\log(2/\beta)/(4T\Lambda^2)\bigr)=\beta/2.
\]

\paragraph{Step 4: assembling part (i).} On the complement of both bad events (probability
at least $1-\beta$), all $A_{n(t)}$ hold. By \eqref{eq:always} for $t<T_0$, \eqref{eq:good}
for $t\ge T_0\ge4$, and \eqref{eq:quant-cont} with the $\ell_1$ bound of Step~2,
\begin{align*}
\log M_T&\ge-\tfrac{T_0-1}2\log T_0+\sum_{t=T_0}^T\bigl(\E_P[\log\hat Y_t\mid\mathcal F_{t-1}]
-2t^{-1/2}\bigr)-\Lambda\sqrt{2T\log(2/\beta)}\\
&\ge-\tfrac{T_0-1}2\log T_0+(T-T_0+1)\Kstar-2\Lambda c_S\sqrt{\gamma\ell/2}
\sum_{t=T_0}^T(t-2)^{-1/2}\\
&\qquad-4\sqrt T-\Lambda\sqrt{2T\log(2/\beta)} .
\end{align*}
Since $T_0-2\ge2$, $\sum_{t=T_0}^T(t-2)^{-1/2}\le\sum_{k=1}^{T}k^{-1/2}\le2\sqrt T$, and the
right-hand side is at least $T\Kstar-R_\beta(T)$. This proves part (i). \hfill$\square$

\paragraph{Part (ii): constants.} With $\eta'$ as in \cref{thm:plugin-explicit}, set
\begin{align*}
c_l&=3+\log(2|S|), & \kappa&=R_1+3+\gamma c_l/(2\eta'^2), & \kappa'&=\max(e,\kappa),\\
a&=4+2\Lambda\bigl(1+c_S\sqrt{2\gamma c_l}\bigr), & b&=\kappa(\Kstar+\tfrac12), &&\\
c_1&=\max(e,16a^2/\Kstar{}^2), & c_2&=\max(e^2/4,4b/\Kstar), &&
\end{align*}
and
\[
T_2=\max\bigl(2c_1\log c_1,\,4c_2\log^2(4c_2)\bigr),\qquad T_3=\max(3,2\kappa'\log\kappa').
\]
Note $4\sqrt T+2\Lambda c_S\sqrt{2\gamma Tc_l\log T}+\Lambda\sqrt{4T\log T}\le a\sqrt{T\log T}$
for $T\ge3$.

\paragraph{Step 5: two elementary facts.}
\begin{itemize}
\item[(F1)] If $c\ge e$ and $x\ge2c\log c$, then $x/\log x\ge c$. Indeed, $x/\log x$ is
increasing on $[e,\infty)$, $2c\log c\ge e$, and at $x=2c\log c$ the claim reads
$\log c\ge\log(2\log c)$, i.e.\ $c\ge2\log c$.
\item[(F2)] If $c_2\ge e^2/4$ and $x\ge4c_2\log^2(4c_2)$, then $x/\log^2x\ge c_2$. Apply
(F1) to $y=\sqrt x$ and $c=2\sqrt{c_2}\ge e$, noting $4c^2\log^2c=4c_2\log^2(4c_2)$.
\end{itemize}

\paragraph{Step 6: a monotone envelope for $R_\beta$.} Take $\beta=\beta_T=2/T^2$ with
$T\ge3$. Then $\ell=\log(2|S|T^3)\le c_l\log T$ and $\log(2/\beta)=2\log T$, using
$\log T\ge1$. Each entry of the maximum defining $T_0$ is at most $\kappa\log T$ (e.g.\
$\lceil2+\gamma\ell/(2\eta'^2)\rceil\le(3+\gamma c_l/(2\eta'^2))\log T$), so
$T_0\le\kappa\log T$. If moreover $T\ge T_3$, (F1) with $c=\kappa'$ gives $\kappa\log T\le T$,
hence $\log T_0\le\log T$ and
\[
T_0\bigl(\Kstar+\tfrac12\log T_0\bigr)\le\kappa(\Kstar+\tfrac12)\log^2T=b\log^2T.
\]
Together with $4\sqrt T\le4\sqrt{T\log T}$,
\[
R_{\beta_T}(T)\le U(T):=a\sqrt{T\log T}+b\log^2T\qquad(T\ge\max(3,T_3)).
\]

\paragraph{Step 7: the wealth threshold is crossed after $T_1$.} $U$ is increasing. Let
$u_0=U(L^\star)/\Kstar$ and $T_1=T^\star+u_0+\max(T_2,T_3)$. We claim
$T\Kstar-U(T)\ge\log(1/\delta)$, i.e.\ $T-T^\star\ge U(T)/\Kstar$, for every $T\ge T_1$.
\begin{itemize}
\item If $T\le L^\star$, then $U(T)/\Kstar\le u_0\le T-T^\star$.
\item If $T>L^\star\ge2T^\star$, it suffices that $U(T)/\Kstar\le T/2$. This follows from
$a\sqrt{T\log T}/\Kstar\le T/4$, i.e.\ $T/\log T\ge16a^2/\Kstar{}^2$, which holds by (F1)
since $T\ge2c_1\log c_1$; and from $b\log^2T/\Kstar\le T/4$, which holds by (F2) since
$T\ge4c_2\log^2(4c_2)$.
\end{itemize}

\paragraph{Step 8: summing the tail.} For every integer $T\ge T_1$ (note $T_1\ge3$), part (i)
gives $P(\tau_\delta>T)\le P(\log M_T<\log(1/\delta))\le\beta_T=2/T^2$, and
\[
\E_P[\tau_\delta]=\sum_{T\ge0}P(\tau_\delta>T)\le\lceil T_1\rceil+\sum_{T\ge3}\frac2{T^2}
\le T_1+1+2\Bigl(\frac{\pi^2}6-\frac54\Bigr)\le T_1+2 .
\]
For the asymptotic form, $u_0=O(\sqrt{T^\star\log T^\star}+\log^2T^\star)$ with constants
not depending on $\delta$. \hfill$\square$

\paragraph{Implementation.} The proof analyses the exact optimizer. The implementation
uses a log-barrier method with damped Newton steps, run to a duality gap of at most
$10^{-10}$. Its line search keeps every iterate strictly feasible (wealth positive on every
leaf, $\lambda>0$), so part (i) of \cref{thm:plugin} holds exactly for the returned bets.
The refit list is capped at $\lceil1.15^{199}\rceil\approx1.2\times10^{12}$ episodes.

\section{Proof of \texorpdfstring{Theorem~\ref{thm:mixture} and Proposition~\ref{prop:finite-mixture}}{the universal mixture results} (universal mixture)}
\label{app:mixture}

Recall the coordinates of \cref{thm:mixture}: $U$ is a basis of $\range(B)$, $D=[X,U]$,
$d=1+\rank(B)$, $Z=\{z=(\lambda,\nu'):\lambda\ge0,\ 1+Dz\ge0\text{ on }\J\}$ and
$D_t=D(\tau_t)$. The mixture wealth is $M_T=\mathrm{vol}(Z)^{-1}\int_Z\prod_{t\le T}(1+D_tz)\,dz$.

\paragraph{Proof overview.} We first show that $Z$ is a bounded convex body, so that the
uniform mixture over $Z$ is well defined. Soundness (i) is an average of supermartingales.
The regret bound (ii) compares the mixture with the best fixed bet by integrating over a
small homothetic copy of $Z$ around it. Parts (iii)--(iv) apply Hoeffding's inequality to
the log-wealth of the optimal bet $Y^\star$, and part (v) replaces Hoeffding by Wald's
identity. \Cref{prop:finite-mixture} repeats the soundness and Wald arguments for a finite
average and adds a volume argument for random sampling.

\subsection{Geometry of \texorpdfstring{$Z$}{Z}}

Suppose $\bar q\in\Qn$ has full support (\cref{ass:fs}), and let $v=(v_0,v')$ satisfy
$Dv\ge0$ on $\J$ with $v_0\ge0$. Then
\[
0\le\E_{\bar q}[Dv]=v_0\E_{\bar q}X\le0,
\]
so $Dv=0$ on every leaf. If $\Delta(P)\neq0$, then $X\notin\range(B)$, because
$\E_PX\neq0$ while $\E_P[B\nu]=0$. The columns of $D$ are therefore linearly independent,
and $v=0$. So $Z$ has no nonzero recession direction, that is, it is bounded. It contains a
neighbourhood of $0$ in $\{\lambda\ge0\}$ because $D$ is bounded. Hence
$0<\mathrm{vol}(Z)<\infty$.

\subsection{Proof of \texorpdfstring{Theorem~\ref{thm:mixture}}{the universal mixture theorem}}

\paragraph{Part (i): soundness.} Each $z\in Z$ defines a feasible bet, so
$\prod_{t\le T}(1+D_tz)$ is a nonnegative supermartingale with initial value $1$ under every
$q\in\Qn$. Averaging over $z$ preserves this, by Fubini's theorem. Ville's inequality gives
soundness.

\paragraph{Part (ii): regret against the best fixed bet.} Let $z^\star$ maximize
$\sum_{t\le T}\log(1+D_tz)$ over the compact set $Z$, and fix $\alpha\in(0,1)$. The set
$Z_\alpha=(1-\alpha)z^\star+\alpha Z$ lies in $Z$ by convexity and has volume
$\alpha^d\mathrm{vol}(Z)$. For $z=(1-\alpha)z^\star+\alpha z'$,
\[
1+D_tz=(1-\alpha)(1+D_tz^\star)+\alpha(1+D_tz')\ge(1-\alpha)(1+D_tz^\star).
\]
Integrating over $Z_\alpha$ only,
\[
M_T\ge\alpha^d(1-\alpha)^T\prod_{t\le T}(1+D_tz^\star).
\]
Take $\alpha=1/(T+1)$ and use $T\log(1+1/T)\le1$.

\paragraph{Part (iii), Step 1: reduction to a sum of i.i.d.\ terms.} The optimal bet
$Y^\star$ of \cref{thm:duality} lies in $Z$. Let $S_T=\sum_{t\le T}\log Y^\star(j_t)$. Its
summands are i.i.d.\ under $P$, with mean $\Kstar$ and range $r$. If $\tau_\delta>T$, then
$\log M_T<\log(1/\delta)$, and by (ii) $S_T<c_T$. Put $g(T)=T\Kstar-c_T$. If $g(T)>0$,
Hoeffding's inequality gives
\[
P(\tau_\delta>T)\le P(S_T<c_T)\le\exp\bigl(-2g(T)^2/(Tr^2)\bigr).
\]

\paragraph{Part (iii), Step 2: the implicit sample-size condition.} The bound above is at
most $\beta$ if and only if $g(T)\ge b_\beta\sqrt T$. As a quadratic in $x=\sqrt T$, with
$c=c_T$ held fixed, the inequality holds when
\[
x\ge\frac{b_\beta+\sqrt{b_\beta^2+4\Kstar c}}{2\Kstar}.
\]
Squaring, and using $\sqrt{b_\beta^2+4\Kstar c}\le b_\beta+2\sqrt{\Kstar c}$,
\[
\frac{\bigl(b_\beta+\sqrt{b_\beta^2+4\Kstar c}\bigr)^2}{4\Kstar{}^2}
\le\frac{\bigl(b_\beta+\sqrt{\Kstar c}\bigr)^2}{\Kstar{}^2}
=\frac c\Kstar+\frac{b_\beta^2}{\Kstar{}^2}+\frac{2b_\beta\sqrt c}{\Kstar{}^{3/2}},
\]
so the condition $T\ge c/\Kstar+b_\beta^2/\Kstar{}^2+2b_\beta\sqrt c/\Kstar{}^{3/2}$
suffices. The condition is checked at each $T$ with $c=c_T$. Since the argument holds
pointwise in $T$, this is valid even though $c_T$ grows with $T$.

\paragraph{Part (iii), Step 3: the explicit condition.} Fix $\epsilon\in(0,1/2)$. Two
elementary bounds control the $T$-dependent terms. By concavity of $\log$,
$\log y\le\log x+y/x-1$ for all $x,y>0$; we apply it with $y=T+1$ and
$x=d/(\epsilon\Kstar)$. Together with the AM--GM inequality this gives
\begin{align*}
b_\beta\sqrt T&\le\epsilon T\Kstar+\frac{b_\beta^2}{4\epsilon\Kstar}
&&\text{(AM--GM)},\\
d\log(T+1)&\le d\log\frac{d}{e\epsilon\Kstar}+\epsilon\Kstar(T+1)
&&\text{(concavity of $\log$)}.
\end{align*}
Hence
\[
g(T)-b_\beta\sqrt T\ \ge\ (1-2\epsilon)T\Kstar-\Bigl[\log(1/\delta)+1+d\log\frac{d}{e\epsilon\Kstar}
+\epsilon\Kstar+\frac{b_\beta^2}{4\epsilon\Kstar}\Bigr],
\]
which is nonnegative exactly when $T\ge T_\beta(\epsilon)$. For $\beta<1$ this also gives
$g(T)>0$, as Hoeffding's inequality requires.

\paragraph{Part (iv): expected stopping time.} We have $g'(T)=\Kstar-d/(T+1)\ge\Kstar/2$
whenever $T+1\ge2d/\Kstar$. So for $T\ge\bar T$, $g(T)\ge(T-\bar T)\Kstar/2$. Writing
$u=T-\bar T$,
\[
P(\tau_\delta>\bar T+u)\le\exp\Bigl(-\frac{u^2\Kstar{}^2}{2(\bar T+u)r^2}\Bigr)\le
\begin{cases}
\exp\bigl(-u^2\Kstar{}^2/(4\bar Tr^2)\bigr),&u\le\bar T,\\
\exp\bigl(-u\Kstar{}^2/(4r^2)\bigr),&u>\bar T.
\end{cases}
\]
Hence $\E_P[\tau_\delta]=\sum_{T\ge0}P(\tau_\delta>T)\le\bar T+1+\Sigma_1+\Sigma_2$, where
the two sums collect the two cases:
\begin{align*}
\Sigma_1&\le\int_0^\infty e^{-u^2/(2\sigma^2)}du=\sigma\sqrt{\pi/2}=\frac{r\sqrt{\pi\bar T}}{\Kstar}
&&\text{(Gaussian part, $\sigma^2=2\bar Tr^2/\Kstar{}^2$)},\\
\Sigma_2&\le\frac1{1-e^{-a}}\le1+\frac1a
&&\text{(geometric part, $a=\Kstar{}^2/(4r^2)$)}.
\end{align*}
Together, $\E_P[\tau_\delta]\le\bar T+2+r\sqrt{\pi\bar T}/\Kstar+4r^2/\Kstar{}^2$, which is
within the stated bound. Finally, $\bar T$ solves $T\Kstar=\log(1/\delta)+1+d\log(T+1)$ up
to rounding, which gives $\bar T=\log(1/\delta)/\Kstar+O(d\log\log(1/\delta)/\Kstar)$ for
fixed $P$.

\paragraph{Part (v), Step 1: a dominating crossing time.} Write $L=\log(1/\delta)$ and
$\xi_t=\log Y^\star(j_t)$. Under $P$ the $\xi_t$ are i.i.d., bounded on $\supp P$, and have
mean $\Kstar>0$. Put
\[
 \sigma_L=\inf\{T\ge1:\ \textstyle\sum_{t=1}^T\xi_t\ge L+1+d\log(T+1)\}.
\]
The pathwise bound in (ii) gives $\tau_\delta\le\sigma_L$. Because
$(L+1+d\log(T+1))/T\to0$, Hoeffding's inequality (or the deterministic drift when the range
of $\xi_t$ is zero) bounds $P(\sigma_L>T)$ by an exponentially summable tail for all
sufficiently large $T$. In particular $\E_P\sigma_L<\infty$.

\paragraph{Part (v), Step 2: overshoot and Wald's identity.} Let
$u=\max_{j\in\supp P}\log Y^\star(j)$, which is finite. At the first crossing the preceding
partial sum is below the preceding boundary, so, also when $\sigma_L=1$,
\[
 \sum_{t=1}^{\sigma_L}\xi_t
 \le L+1+d\log(\sigma_L+1)+u.
\]
Wald's identity applies to the integrable stopping time with bounded increments. With
$x=\E_P\sigma_L$, Jensen's inequality yields
\[
 \Kstar x\le L+1+u+d\log(x+1).
\]

\paragraph{Part (v), Step 3: solving for $x$.} The constant
$B=\sup_{y\ge0}\{d\log(y+1)-\Kstar y/2\}$ is finite and nonnegative. Hence
$x\le2(L+1+u+B)/\Kstar$, and substituting this back into the logarithm gives
\[
 x\le\frac{L}{\Kstar}+\frac{d}{\Kstar}\log L+O_P(1).
\]
The lower bound for $\E_P\tau_\delta$ is \cref{thm:lower}(i), since
$\tau_\delta\le\sigma_L<\infty$ almost surely. \hfill$\square$

\subsection{Proof of \texorpdfstring{Proposition~\ref{prop:finite-mixture}}{the finite-point mixture proposition}}

\paragraph{Proof overview.} Soundness holds for any fixed average of feasible bets. The
conditional stopping-time bound follows from the best of the $M$ bets by Wald's identity.
The sampling claim lower-bounds the probability that one uniform draw lands in a small
homothetic copy of $Z$ on which the growth rate is within $\epsilon$ of $\Kstar$.

\paragraph{Step 1: soundness.} For each feasible fixed $z_i$, its wealth product is a
nonnegative test supermartingale under every null law. A fixed average retains this
property, conditional on the bets; Ville's inequality gives $\delta$-soundness.

\paragraph{Step 2: conditional stopping time.} Let $i^\star$ maximize $K_M>0$ and write
$S_T=\sum_{t\le T}\log(1+D_tz_{i^\star})$. The sampled mixture satisfies
$M_T^{(M)}\ge e^{S_T}/M$ pathwise, so its stopping time is at most the first crossing
$\sigma$ of $L+\log M$ by $S_T$, where $L=\log(1/\delta)$. The increments are bounded and
i.i.d.\ under $P$, with positive mean $K_M$. A Hoeffding tail (or deterministic drift when
their range is zero) gives $\E_P\sigma<\infty$. At the first crossing,
\[
S_\sigma<L+\log M+u_M
\]
(with a weak inequality if equality occurs), because the preceding sum is below the
threshold. Wald's identity then yields $K_M\E_P\sigma\le L+\log M+u_M$, proving the
conditional stopping-time bound.

\paragraph{Step 3: a good region of $Z$.} Let $z^\star$ attain $\Kstar$ and put
$g(z)=\E_P\log(1+D_1z)$, taking $g(z)=-\infty$ when a factor vanishes on $\supp P$. Every
$w=z/2$ with $z\in Z$ has factors
\[
1+D(j)w=\bigl(1+(1+D(j)z)\bigr)/2\ge1/2,
\]
and hence $g(w)\ge-\log2$. Set $t=\epsilon/(\Kstar+\log2)\in(0,1)$. Convexity of $Z$ and
concavity of the logarithm show that every
\[
z'=(1-t)z^\star+\frac{t}{2}z,\qquad z\in Z,
\]
lies in $Z$ and satisfies
\[
g(z')\ge(1-t)\Kstar-t\log2=\Kstar-\epsilon.
\]

\paragraph{Step 4: sampling.} The homothetic image of $Z$ in Step~3 occupies the fraction
$(t/2)^d$ of its volume. Therefore one independent exact uniform draw has probability at
least $(t/2)^d$ of meeting the desired log rate, and $M$ draws miss it with probability at
most $\exp[-M(t/2)^d]$. Such draws have positive factors on $\supp P$ almost surely: each
zero-factor set is a proper affine face of the full-dimensional $Z$, and there are finitely
many outcomes. This proves the stated probability and conditional implication.
\hfill$\square$

\section{Proofs for \texorpdfstring{Section~\ref{sec:structure}}{structural assumptions} (structural assumptions and pool selection)}
\label{app:structure-main}

\Cref{sec:structure} summarizes how structural assumptions change the exponent. This
section states the underlying results in full (\cref{lem:irrelevance,thm:structure}),
proves them, and adds the pool-selection result \cref{prop:pool}. The experiment that
prices an assumption is in Part~II (\cref{app:structure-exp}).

\subsection{Statements}

\Cref{thm:ceiling} shows that exponential sample sizes are unavoidable without further
assumptions. Assumptions about the environment restrict $\Es$, add linear constraints to
$\Qs$, and can only raise $\Kstar$. For $q\in\Qs$, let
\[
e_q(j)=q_j\Big/\prod_{(h,a)\preceq j}\mu(a\mid h)
\]
be the product of the environment probabilities along the path to $j$.

\begin{lemma}[Action irrelevance is linear]
\label{lem:irrelevance}
Let $S$ be a set of action depths. The assumption that environment kernels after the
actions at depths in $S$ do not depend on those actions is equivalent to
$e_q(j)=e_q(j')$ for every pair of leaves with the same outcomes that differ only in
actions at depths in $S$. Since $\mu$ is known, these are linear equalities
$F^\top q=0$.
\end{lemma}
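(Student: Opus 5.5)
The plan is to reduce both directions to the path factorization of action-neutral laws in \cref{lem:polytope}. Every $q\in\Qs$ factors leafwise as $q_j=e_q(j)\prod_{(h,a)\preceq j}\mu(a\mid h)$. Here $e_q(j)$ is the product of the environment conditionals $e(o\mid h)=q(ho)/q(h)$ along the path to $j$. Since $\mu>0$ is known, this product of propensities is a fixed positive constant $c_j$. Hence $e_q(j)=q_j/c_j$ is a linear function of $q$, and any equality $e_q(j)=e_q(j')$ reads $q_j/c_j-q_{j'}/c_{j'}=0$. Collecting these as columns of $F$ gives the linear form $F^\top q=0$ immediately, so the content of the lemma is the equivalence.

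\emph{Assumption implies the equalities.} Take leaves $j,j'$ with the same outcome sequence that differ only in actions at depths in $S$. Under the assumption, the environment kernel at each outcome step depends on the history only through the outcomes and the actions at depths outside $S$. These coincide along the two paths, so each factor $e(o_t\mid h_t)$ equals $e(o_t\mid h'_t)$. Multiplying the factors gives $e_q(j)=e_q(j')$.

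\emph{Equalities imply the assumption.} Fix an environment node $h$ and a partner $h'$ differing only in actions at depths in $S$. The main step is to recover the kernels from the leaf quantities by marginalization. Define $e_q$ on internal nodes as the product of conditionals to that node. Then $e_q(ho)=e_q(h)\,e(o\mid h)$, and summing leaf values $e_q(j)$ over all completions of a node $g$, weighted by the future propensities, gives $e_q(g)$. Pair each completion of $ho$ with the completion of $h'o$ that copies its outcomes and its later actions. Later actions at depths in $S$ may differ, but the $\mu$ weights depend on history. So the induction should run backward from the leaves: if $e_q(j)=e_q(j')$ for all paired leaves, then the conditional laws below paired nodes agree. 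This yields $e_q(ho)=e_q(h'o)$ for all $o$, and by the same argument $e_q(h)=e_q(h')$. Dividing gives $e(o\mid h)=e(o\mid h')$ wherever $e_q(h)>0$. Where $e_q(h)=0$ the kernel is arbitrary, as in \cref{lem:polytope}, and can be chosen to satisfy the assumption.

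\emph{Expected obstacle.} The delicate part is the backward induction under history-dependent $\mu$. Paired subtrees carry different action weights, so marginal node masses are not directly comparable. The induction should therefore be phrased in terms of $e_q$ rather than $q$. Since $e_q$ is the product of environment conditionals alone, the propensity weights never need to be matched. Zero-probability nodes also need care, and the equivalence should be read as holding up to the free choice of kernels there.
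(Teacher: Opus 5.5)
Your proof follows the paper's argument. The forward direction multiplies agreeing kernels along paired paths. The converse sums out later outcomes over subtrees with the same branching structure, an assumption your pairing also makes implicitly. Linearity comes from dividing by the known path propensities $\mu(j)>0$.

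Your induction step is left unexecuted, and the obstacle you flag, history-dependent $\mu$ weights, never actually arises. Because $e_q$ contains no propensities, $e_q(ga)=e_q(g)$ for every action $a$, and $e_q(h)=\sum_o e_q(ho)$ at environment nodes. Backward induction over paired nodes, copying later outcomes and actions, therefore gives $e_q(g)=e_q(g')$ without any weights. It follows that $e(o\mid h)=e_q(ho)/e_q(h)=e(o\mid h')$ whenever $e_q(h)>0$.

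Since $e_q(h')=q(h')/\prod\mu$ along the path with $\mu>0$, the same equality shows that paired nodes are null simultaneously. This justifies your free choice of kernels there, a point the paper's one-line proof does not address.
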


\begin{theorem}[Value and risk of an assumption]
\label{thm:structure}
Let $\Qs_F=\{q\in\Qs:F^\top q=0\}$, assume $P\in\Qs_F$, and let
$\Kstar_F=\inf\{\KL(P\|q):q\in\Qs_F,\ \E_qX\le0\}$.
\begin{enumerate}
\item[(i)] \emph{Value.}
\[
\Kstar_F=\sup\Bigl\{\E_P\log(1+\lambda X+B\nu+F\eta):\lambda\ge0,\
1+\lambda X+B\nu+F\eta\ge0\text{ on }\J\Bigr\}\ \ge\ \Kstar,
\]
with equality whenever the least-favourable law $q^\star$ of \cref{thm:duality}
satisfies $F^\top q^\star=0$. Certifiers built from these bets are $\delta$-sound for
every null in $\Qs_F$, and by \cref{thm:lower}(i) applied to the restricted null, no
certifier that is $\delta$-sound on $\Qs_F$ can beat $\log(1/\delta)/\Kstar_F$ in expectation.
\item[(ii)] \emph{Unavoidable risk.} Let $0<\Kstar(P)<\infty$. Every certifier, whether
  or not it uses $F$, with $\E_P[\tau_\delta]<\infty$ certifies falsely under $q^\star$
  with probability
  \[
  q^{\star\otimes\infty}(\tau_\delta<\infty)\ \ge\ \exp\bigl(-\Kstar(P)\,\E_P[\tau_\delta]\bigr).
  \]
  In particular, a certifier that is $\delta$-sound on $\Qs_F$ and stops in
  $\E_P[\tau_\delta]\le(1+o(1))\log(1/\delta)/\Kstar_F$ certifies falsely under $q^\star$
  with probability at least $\delta^{(1+o(1))\Kstar/\Kstar_F}$. When $\Kstar_F>\Kstar$
  this exceeds $\delta$ by a factor that grows without bound as $\delta\to0$, and
  $q^\star\in\Qn\setminus\Qs_F$.
\item[(iii)] \emph{Every excluded null is exploitable.} For every
  $q_0\in\Qn\setminus\Qs_F$ there is a feasible restricted bet $Y_0=1+F\eta_0$ with
  $\E_{q_0}\log Y_0>0$. The certifier that bets $Y_0$ at every episode certifies under
  $q_0$ with probability one, at every $\delta$.
\end{enumerate}
\end{theorem}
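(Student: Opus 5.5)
The plan is to handle the three parts separately, reusing \cref{thm:duality} and \cref{thm:lower} almost verbatim with the enlarged constraint matrix $[B,F]$.

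\emph{Part (i).} $\Qs_F$ is again a polytope, cut out by $q\ge0$, $\1^\top q=1$ and $[B,F]^\top q=0$. The duality proof of \cref{app:duality} therefore goes through with $B\nu$ replaced by $B\nu+F\eta$, exactly as the remark at the end of that appendix says. Case~A gives the sup formula when $\Kstar_F<\infty$, and Case~B gives an unbounded growth rate when $\Kstar_F=\infty$. This yields the displayed variational formula.

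The inequality $\Kstar_F\ge\Kstar$ follows from the inclusion $\{q\in\Qs_F:\E_qX\le0\}\subseteq\Qn$. Equivalently, the feasible set of bets only grows when $\eta=0$ is allowed. For equality, assume $F^\top q^\star=0$. Then $q^\star$ is feasible for the restricted primal, so $\Kstar_F\le\KL(P\|q^\star)=\Kstar$.

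Soundness on $\Qs_F$ is the argument of \cref{thm:plugin}(i). Under any $q\in\Qs_F$ with $\E_qX\le0$ we have $\E_q[B\nu+F\eta]=0$, so every such bet is an e-variable, and Ville's inequality applies. The optimality claim is \cref{thm:lower}(i) with $\Qn$ replaced by the restricted null, since its proof uses nothing about the null class beyond the infimum of $\KL$.

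\emph{Part (ii).} Apply \eqref{eq:tradeoff} with $q=q^\star$ and use $\KL(P\|q^\star)=\Kstar$ from \cref{thm:duality}; this is just \cref{thm:lower}(iii). Substituting $\E_P[\tau_\delta]\le(1+o(1))\log(1/\delta)/\Kstar_F$ gives the lower bound $\delta^{(1+o(1))\Kstar/\Kstar_F}$. When $\Kstar_F>\Kstar$, the ratio $\delta^{(1+o(1))\Kstar/\Kstar_F}/\delta=\delta^{-(1-\Kstar/\Kstar_F)+o(1)}$ diverges as $\delta\to0$.

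It remains to show $q^\star\notin\Qs_F$. If $F^\top q^\star=0$ held, the equality case of (i) would give $\Kstar_F=\Kstar$, a contradiction. We also know $q^\star\in\Qn$ by \cref{thm:duality}.

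\emph{Part (iii).} This is the main step. Take $q_0\in\Qn\setminus\Qs_F$, so $c:=F^\top q_0\ne0$. Set $\eta_0=\epsilon c/M$, with $M$ large enough that $|F\eta_0|\le1/2$ on $\J$. Then $Y_0=1+F\eta_0\ge1/2$, so $Y_0$ is feasible. It is sound on $\Qs_F$ because $\E_q[F\eta_0]=0$ there.

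Under $q_0$ we get $\E_{q_0}[F\eta_0]=\epsilon\|c\|^2/M>0$. Using $\log(1+x)\ge x-x^2$ for $|x|\le1/2$,
\[
\E_{q_0}\log Y_0\ \ge\ \frac{\epsilon\|c\|^2}{M}-\frac{\epsilon^2}{M^2}\,\E_{q_0}\bigl[(Fc)^2\bigr],
\]
which is positive for $\epsilon$ small. The episodes are i.i.d.\ under $q_0$ and the log-factors are bounded, so the strong law gives $\log M_t\to\infty$ almost surely. Hence the wealth crosses $1/\delta$ with $q_0$-probability one, at every $\delta$.

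The only delicate point is that $F\eta_0$ must be bounded below on all of $\J$, not just on $\supp q_0$. Since $\J$ is finite, the scaling by $1/M$ handles this.
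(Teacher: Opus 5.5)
Your proposal is correct and follows the paper's proof essentially step for step. You transfer the duality of Theorem~\ref{thm:duality} to the enlarged matrix $[B,F]$, get $\Kstar_F\ge\Kstar$ (with equality when $F^\top q^\star=0$) from the inclusion of the null sets, and read (ii) off Theorem~\ref{thm:lower}(iii), with $q^\star\notin\Qs_F$ by contradiction with the equality case. For (iii) you use the perturbed bet $1+\varepsilon FF^\top q_0$, the bound $\log(1+x)\ge x-x^2$ and the strong law; the only differences from the paper are cosmetic, namely your order (i), (ii), (iii) and the redundant scaling $\epsilon/M$ where the paper uses a single $\varepsilon$.
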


The ratio $\Kstar_F/\Kstar$ measures, in exponent units, what an assumption buys. Part (ii)
shows what it costs, and the cost does not depend on how the certifier is built. Any
procedure, restricted plug-in or otherwise, that turns the assumption into a speed-up
over $\log(1/\delta)/\Kstar$ must certify falsely at the least-favourable environment
$q^\star$ with probability at least $\exp(-\Kstar\E_P[\tau_\delta])$. For a restricted
certifier that attains $\Kstar_F$, this is $\delta^{\Kstar/\Kstar_F}$ to first order. The
exponent ratio therefore prices the gain and the risk together. Part (iii) adds that every
excluded null, not only $q^\star$, admits a restricted bet that certifies falsely almost
surely. \Cref{app:structure-exp} measures both sides for the tested instance.

\subsection{Proofs of \texorpdfstring{Lemma~\ref{lem:irrelevance} and Theorem~\ref{thm:structure}}{the structural results}}
\label{app:structure}

\begin{proof}[Proof of \cref{lem:irrelevance}]
$e_q(j)$ is the product of the environment kernels along the path to $j$. If those kernels
do not depend on the actions at depths in $S$, the products agree for paired leaves.
Conversely, if the products agree for all such pairs, summing out later outcomes, which
have the same branching structure on both sides, shows that each kernel agrees. Since $\mu$
is known, $e_q(j)=e_q(j')$ is the linear equality
\[
\frac{q_j}{\mu(j)}-\frac{q_{j'}}{\mu(j')}=0,
\]
where $\mu(j)$ is the product of logged propensities along $j$.
\end{proof}

\paragraph{Proof of \cref{thm:structure}: overview.} For (i), the duality of
\cref{thm:duality} holds verbatim with $B$ replaced by $[B,F]$, since $\Qs_F$ is again a
polytope. Shrinking the constraint set raises the infimum, and the proof of
\cref{thm:lower} uses nothing about the null beyond its definition. Part (ii) is
\cref{thm:lower}(iii) applied to $\tau_\delta$, with
$\E_P[\tau_\delta]\le(1+o(1))\log(1/\delta)/\Kstar_F$ substituted; if $\Kstar_F>\Kstar$,
then $q^\star\notin\Qs_F$, since otherwise $q^\star$ would be feasible for the restricted
problem and give $\Kstar_F\le\Kstar$. Part (iii) perturbs the constant bet $1$ in the
direction $F\,F^\top q_0$ and uses the strong law of large numbers. We prove (i), then
(iii), then (ii).

\paragraph{Step 1: the dual formula and $\Kstar_F\ge\Kstar$ (part (i)).}
\Cref{lem:evariables}, weak duality and \cref{app:duality} use $B$ only through two facts:
$\Qs=\{q\ge0,\1^\top q=1,B^\top q=0\}$ is a polytope, and $\E_q[B\nu]=0$ on it. Both remain
true with $[B,F]$ and $\Qs_F$, which gives the dual formula. Since $\Qs_F\subseteq\Qs$, the
infimum over the smaller set is larger, so $\Kstar_F\ge\Kstar$. If $q^\star\in\Qs_F$, it is
feasible for the restricted problem and attains $\Kstar$ there, so $\Kstar_F=\Kstar$.

\paragraph{Step 2: soundness and the restricted lower bound (part (i)).} Every restricted
bet $Y$ has
\[
\E_qY=1+\lambda\E_qX\le1\qquad\text{for }q\in\Qs_F\cap\Qn,
\]
so the certifiers of \cref{sec:achieve} built from them are sound for those nulls. The
proof of \cref{thm:lower} uses nothing about the null beyond its definition, so
\cref{thm:lower}(i) applied to the restricted null $\Qs_F\cap\Qn$ gives the lower bound
$\log(1/\delta)/\Kstar_F$ for certifiers that are $\delta$-sound on $\Qs_F$.

\paragraph{Step 3: a profitable bet against an excluded null (part (iii)).} Let
$q_0\in\Qn\setminus\Qs_F$, and set
\[
v=F^\top q_0\neq0,\qquad \eta_0=\varepsilon v,\qquad
0<\varepsilon\le\frac{1}{2\max_j|(Fv)_j|}.
\]
Then $Y_0=1+F\eta_0\ge1/2$ on every leaf, so it is a feasible restricted bet (with
$\lambda=0$, $\nu=0$). Its mean increment is positive:
\[
\E_{q_0}[F\eta_0]=\varepsilon\,v^\top F^\top q_0=\varepsilon\|v\|^2>0,
\]
which also implies $\E_{q_0}[(Fv)^2]>0$. Using $\log(1+x)\ge x-x^2$ for $x\ge-1/2$,
\[
\E_{q_0}\log Y_0\ge\varepsilon\|v\|^2-\varepsilon^2\E_{q_0}[(Fv)^2]>0
\quad\text{for}\quad\varepsilon<\|v\|^2/\E_{q_0}[(Fv)^2].
\]
By the strong law of large numbers,
\[
\frac1t\log\prod_{s\le t}Y_0(\tau_s)\to\E_{q_0}\log Y_0>0
\qquad\text{almost surely under }q_0.
\]
The wealth therefore crosses $1/\delta$ in finite time almost surely, which proves part
(iii).

\paragraph{Step 4: the unavoidable risk (part (ii)).} Let $\tau$ be any stopping time with
$\E_P[\tau]<\infty$. By \cref{thm:lower}(iii),
\[
q^\star(\tau<\infty)\ge\exp(-\Kstar\E_P[\tau]),
\]
whatever null class $\tau$ was designed for. If
$\E_P[\tau_\delta]\le(1+\epsilon_\delta)\log(1/\delta)/\Kstar_F$ with $\epsilon_\delta\to0$,
this is at least $\delta^{(1+\epsilon_\delta)\Kstar/\Kstar_F}$. When $\Kstar_F>\Kstar$, the
ratio to $\delta$ is
\[
\delta^{-(1-(1+\epsilon_\delta)\Kstar/\Kstar_F)}\to\infty .
\]
Finally $q^\star\in\Qn$ by construction. If also $q^\star\in\Qs_F$, then $q^\star$ is
feasible for the restricted problem and $\Kstar_F\le\KL(P\|q^\star)=\Kstar$. So
$\Kstar_F>\Kstar$ forces $q^\star\notin\Qs_F$, which is consistent with soundness on
$\Qs_F$. \hfill$\square$

\subsection{Selecting from a finite pool}

\begin{proposition}[Pool selection]
\label{prop:pool}
Let $\pi_1,\dots,\pi_K$ be candidates with exponents $\Kstar_j$. Run \cref{alg:plugin}
for each at level $\delta/K$ on the same episodes, and select the first candidate to
certify. With probability at least $1-\delta$, no candidate with $\Delta_j\le0$ is
selected. The expected stopping time is at most
$\min_j[\log(K/\delta)/((1-3\epsilon)\Kstar_j)+C_j(\epsilon,P)]$.
\end{proposition}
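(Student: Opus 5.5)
The plan has two parts: soundness by a union bound, and the stopping-time bound by running the argument of \cref{thm:plugin}(iii) for a single well-chosen candidate.

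For each candidate $j$, let $\tau^j$ be the stopping time of \cref{alg:plugin} run for $\pi_j$ at level $\delta/K$. Its null is $\Qn^j=\{q\in\Qs:\E_qX_j\le0\}$, where $X_j=(W_j-1)R$. The selected candidate is the one attaining $\tau=\min_j\tau^j$. Soundness is a union bound over the candidates.

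Suppose some $\pi_j$ with $\Delta_j\le0$ is selected. Then $\tau^j<\infty$ under the true law $P$. Because $\Delta_j\le0$, we have $P\in\Qn^j$. By \cref{thm:plugin}(i) at level $\delta/K$, the wealth $M^j$ is a nonnegative supermartingale under $P$, so Ville's inequality gives $P(\tau^j<\infty)\le\delta/K$. Summing over the at most $K$ such $j$ bounds the probability of a wrong selection by $\delta$.

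This step needs the soundness of each certifier against the specific law $P$, not against a generic null element. That is exactly what membership $P\in\Qn^j$ provides. All certifiers run on the same episodes, but they may be dependent without harm, since the union bound requires no independence.

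For the stopping time, note pathwise that $\tau\le\tau^j$ for every $j$. Hence $\E_P\tau\le\min_j\E_P\tau^j$. For any $j$ with $0<\Kstar_j<\infty$, apply \cref{thm:plugin}(iii) with $\delta$ replaced by $\delta/K$. This gives $\E_P\tau^j\le\log(K/\delta)/((1-3\epsilon)\Kstar_j)+C_j(\epsilon,P)$, where the constant $C_j$ does not depend on the level. For candidates with $\Kstar_j\in\{0,\infty\}$, the bracket should be read as $+\infty$ when $\Kstar_j=0$. When $\Kstar_j=\infty$, the plug-in bet grows without bound, or we simply exclude such $j$ from the minimum. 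I expect the statement intends the minimum over $j$ with $0<\Kstar_j<\infty$, and I would state that convention explicitly.

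The main point to check is that $C_j(\epsilon,P)$ in \cref{thm:plugin}(iii) is genuinely independent of $\delta$. The proof there confirms this: $C=T_1+1+\sum_T[P(E_T^c)+P(H_T^c)]$, and none of these terms involves $\delta$. Replacing $\delta$ by $\delta/K$ therefore only changes $\log(1/\delta)$ to $\log(K/\delta)$.
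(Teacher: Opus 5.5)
Your proof is correct and follows the paper's own one-line argument: a union bound over the level-$\delta/K$ runs, each sound at $P$ whenever $\Delta_j\le0$ because then $P\in\Qn^j$, combined with $\E_P[\min_j\tau^j]\le\min_j\E_P[\tau^j]$ and \cref{thm:plugin}(iii), whose constant $C_j(\epsilon,P)$ does not depend on the level. Your explicit restriction of the minimum to candidates with $0<\Kstar_j<\infty$ is a sensible clarification that the paper leaves implicit, since \cref{thm:plugin}(iii) covers only that regime.
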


This follows from a union bound and \cref{thm:plugin}. A sound selection rule must
separate $P$ from the joint null $\{q\in\Qs:\E_qX_j\le0\ \forall j\}$, whose exponent
$\Kstar_{\mathrm{all}}$ is at least $\max_j\Kstar_j$. Closing the gap between
$\max_j\Kstar_j$ and $\Kstar_{\mathrm{all}}$ resembles the top-two problem in
best-arm identification \citep{degenne2019pure}, and we leave it open.

\section{Proofs for \texorpdfstring{Section~\ref{sec:local}}{local certification} (local certification)}
\label{app:cushion}

This section proves \cref{thm:cushion} (cushion parametrization) and \cref{cor:local}
(local exponent). Recall the cushion game of \cref{sec:local}. A supersolution $\mbar$
satisfies $\mbar(j)=-X(j)$ at leaves, $\mbar(h)\ge\sum_a\mu(a\mid h)\mbar(ha)$ at action
nodes and $\mbar(h)\ge\mbar(ho)$ at environment nodes. The cushion starts at
$C(\mathrm{root})=1-\lambda\mbar(\mathrm{root})$ and evolves by
\begin{align*}
C(ha)&=\bigl[C(h)+\lambda\,d(h)\bigr]\bigl(1+g(h,a)\bigr),
  &d(h)&=\mbar(h)-\textstyle\sum_{a'}\mu(a'\mid h)\,\mbar(ha')\ge0,\\
C(ho)&=C(h)+\lambda\bigl[\mbar(h)-\mbar(ho)\bigr],
\end{align*}
with payoff $Y=C(\text{leaf})$.

\subsection{Proof of \texorpdfstring{Theorem~\ref{thm:cushion}}{the cushion parametrization}}

\paragraph{Proof overview.} The proof tracks the potential $w=C+\lambda\mbar$. It is
constant across outcomes at environment nodes and moves by a fair bet at action nodes, so
at a leaf it equals one plus the payoff of a sequence of fair bets. Subtracting
$\lambda\mbar(\text{leaf})=-\lambda X$ gives the form $1+B\nu+\lambda X$. The node
constraints of (ii) are $C\ge0$, and running the argument backwards recovers a fair
fraction from any feasible bet. Part (iii) is a backward induction showing that $w\ge\lambda m$
at every node.

\paragraph{Step 1: the potential $w$.} Put
\[
w(h)=C(h)+\lambda\mbar(h),\qquad\text{so } w(\mathrm{root})=1.
\]
At an environment node, $w(ho)=C(h)+\lambda\mbar(h)=w(h)$.

\paragraph{Step 2: action nodes move $w$ by a fair bet.} At an action node,
\[
w(ha)-w(h)=[C(h)+\lambda d(h)]\,g(h,a)
  +\lambda\Bigl[\mbar(ha)-\sum_{a'}\mu(a'\mid h)\mbar(ha')\Bigr],
\]
which has $\mu$-mean zero. It is therefore a fair bet with stake
\[
\nu(h,a)=[C(h)+\lambda d(h)]g(h,a)+\lambda\mbar(ha).
\]

\paragraph{Step 3: the leaf identity.} Summing along the path,
$w(\text{leaf})=1+(B\nu)(\text{leaf})$, and since $\mbar(\text{leaf})=-X(\text{leaf})$,
\[
Y=C(\text{leaf})=w(\text{leaf})-\lambda\mbar(\text{leaf})=1+B\nu+\lambda X.
\]

\paragraph{Step 4: nonnegativity, which proves (i).} The cushion stays nonnegative:
\begin{itemize}
\item $C(\mathrm{root})\ge0$ because $\lambda\le1/\mbar(\mathrm{root})$;
\item the action update multiplies a nonnegative quantity by $1+g\ge0$;
\item the environment update adds $\lambda[\mbar(h)-\mbar(ho)]\ge0$.
\end{itemize}
This proves (i). It also shows that $w(h)-\lambda\mbar(h)=C(h)\ge0$ at every node, which
are the node constraints of (ii).

\paragraph{Step 5: the converse, which proves (ii).} Suppose $(\lambda,\nu)$ satisfies
$w(h)\ge\lambda\mbar(h)$ at every node, and set $C=w-\lambda\mbar\ge0$. At an action node,
\[
\sum_a\mu(a\mid h)C(ha)=w(h)-\lambda\sum_a\mu(a\mid h)\mbar(ha)=C(h)+\lambda d(h).
\]
If this is positive,
\[
g(h,a)=\frac{C(ha)}{C(h)+\lambda d(h)}-1
\]
is a fair fraction with $g\ge-1$. If it is zero, every $C(ha)=0$ (because $\mu>0$) and any
fair fraction works. Environment nodes satisfy the update because $w$ is constant across
outcomes. This proves (ii).

\paragraph{Step 6: exactness with $\mbar=m$, which proves (iii).} Take a feasible
$(\lambda,\nu)$ and argue backward from the leaves:
\begin{align*}
\text{leaves:}\quad& w+\lambda X\ge0,\ \text{that is},\ w\ge\lambda m;\\
\text{environment nodes:}\quad& w(h)=w(ho)\ge\lambda m(ho)\ \text{for every }o,\\
&\quad\text{so}\ w(h)\ge\lambda\max_om(ho)=\lambda m(h);\\
\text{action nodes:}\quad& w(h)=\textstyle\sum_a\mu(a\mid h)w(ha)\ge\lambda m(h).
\end{align*}
\hfill$\square$

\subsection{Proof of \texorpdfstring{Corollary~\ref{cor:local}}{the local exponent corollary}}

\begin{proof}[Proof of \cref{cor:local}]
By (ii), $\Kloc(\mbar)$ is the program of \cref{thm:duality} with additional constraints, so
$\Kloc(\mbar)\le\Kstar$, and by (iii) $\Kloc(m)=\Kstar$. The scalar bet has $\nu=0$, so
$w\equiv1$. Its feasibility requires $\lambda\max_j(-X_j)\le1$. If
$\mbar(h)\le\max_j(-X_j)$ at every non-leaf node, then $\lambda\mbar(h)\le1=w(h)$, so the
node constraints hold.
\end{proof}

\clearpage
\begin{center}
{\Large\bfseries Part II: Additional experiments and implementation details}
\end{center}
\medskip

Part II collects the experiments that support the main text but are not needed to
follow it: the structural-assumption experiment for \cref{thm:structure}
(\cref{app:structure-exp}), wall-clock cost and the matched scalable instances
(\cref{app:matched}), the random-tree study (\cref{app:random}), and the implementation
details of all certifiers and comparators (\cref{app:details}).

\section{Experiment: pricing a structural assumption}
\label{app:structure-exp}

Let $F$ be the assumption that distractor actions in the memory task do not affect later
outcomes (\cref{lem:irrelevance}). \Cref{tab:structure} shows both sides of
\cref{thm:structure}.

\paragraph{Value.} $F$ restores the $d=0$ exponent at every delay. The gain
$\Kstar_F/\Kstar=2.39$, $6.03$, $13.35$ is visible in the stopping times: the restricted
certifier's stopping time is flat in $d$, while the universal one doubles with each step.

\paragraph{Risk.} Under $q^\star$, which violates $F$, the restricted certifier separates
$q^\star$ from the restricted null at exponent $0.21$--$0.27$ and certifies falsely in
every run. Under the restricted least-favourable law $q^\star_F$, neither certifier ever
certified. \Cref{thm:structure}(ii) predicts this failure for any certifier this fast, not
only for ours. The restricted certifier's mean stopping time under $P$ is $36.9$, $35.8$
and $36.3$ episodes. So every certifier with these means must certify falsely under
$q^\star$ with probability at least $\exp(-\Kstar\E_P[\tau])=0.014$, $0.19$ and $0.47$,
that is $14$--$470$ times $\delta$. The observed rate of $100\%$ respects the floor. The
floor rises with $d$ along with the gain $\Kstar_F/\Kstar$: the larger the speed-up, the
higher the false-certification rate at $q^\star$ must be.

\begin{table}[t]
\centering
\caption{Value and risk of the assumption $F$ (distractor actions are irrelevant),
$\delta=10^{-3}$. Median episodes with 95\% intervals; ``false'' is the false certification
rate. $\le3.6\%$ is the Clopper--Pearson limit for 0 of 100. In parentheses: the floor
$\exp(-\Kstar\E_P[\tau])$ of \cref{thm:structure}(ii), evaluated at the restricted
certifier's mean stopping time under $P$, below which no certifier that fast can keep
its false-certification rate under $q^\star$.}
\label{tab:structure}
\small
\setlength{\tabcolsep}{4pt}
\begin{tabular}{rrrrlll}
\toprule
 & & & & \multicolumn{2}{c}{under $P$ (alternative)} & under $q^\star$ (violates $F$)\\
\cmidrule(lr){5-6}\cmidrule(lr){7-7}
$d$ & $\Kstar$ & $\Kstar_F$ & gain & universal & restricted & restricted\\
\midrule
1 & 0.1155 & 0.2763 & 2.39 & 82.5 [70, 95] & 35 [32, 38] & 61.5 [54, 68], 100\% false ($\ge1.4\%$)\\
2 & 0.0458 & 0.2763 & 6.03 & 203 [184, 229] & 35 [33, 37] & 40.5 [36, 44], 100\% false ($\ge19\%$)\\
3 & 0.0207 & 0.2763 & 13.35 & 417.5 [374, 486] & 35.5 [34, 38] & 39 [35, 42], 100\% false ($\ge47\%$)\\
\bottomrule
\end{tabular}
\end{table}

\section{Additional cost and scalable results}
\label{app:matched}

\subsection{Wall-clock cost}

\begin{table}[t]
\centering
\caption{Wall-clock seconds per run at $\delta=10^{-3}$ (median of 20 runs, one CPU
thread). In parentheses: median number of leaf-constrained refits. A run lasts until it
certifies or reaches the horizon of \cref{tab:comparators}. Grid: the slowest of LS, EB
and HCOPE.}
\label{tab:cost}
\small
\setlength{\tabcolsep}{4pt}
\begin{tabular}{lrrrrrrr}
\toprule
instance & CV-Kelly & hedged & scalar & mixture & DR fit & DR orc & grid\\
\midrule
cue $d{=}0$ $p{=}0.2$ & 0.91 (18) & 1.34 & 0.23 & 0.25 & 0.37 & 0.49 & 0.004\\
cue $d{=}0$ $p{=}0.02$ & 0.92 (20.5) & 1.58 & 0.44 & 0.33 & 0.52 & 0.30 & 0.005\\
cue $d{=}1$ $p{=}0.2$ & 1.62 (24.5) & 2.09 & 0.12 & 0.37 & 0.71 & 0.66 & 0.005\\
cue $d{=}1$ $p{=}0.02$ & 1.67 (25) & 1.77 & 0.09 & 0.36 & 0.55 & 0.22 & 0.006\\
cue $d{=}2$ $p{=}0.2$ & 2.30 (32) & 2.67 & 0.10 & 0.59 & 0.40 & 0.48 & 0.006\\
cue $d{=}2$ $p{=}0.02$ & 2.26 (32.5) & 2.99 & 0.12 & 0.59 & 0.40 & 0.20 & 0.007\\
random, depth 2 & 2.02 (50) & 7.33 & 0.20 & 5.34 & 0.33 & 0.20 & 0.14\\
\bottomrule
\end{tabular}
\end{table}

\Cref{tab:scaling} shows how one refit, on an empirical law of 4096 episodes, grows with
the tree. Up to 512 leaves a refit takes at most $1.9$\,s, and $\log(\text{seconds})$ grows
with slope $1.3$ in $\log(\text{leaves})$. Beyond that the dense Newton systems
dominate: each step forms and solves an $r\times r$ system with $r=\rank(B)$, at cost
$O(|\J|r^2+r^3)$. A refit takes $6.9$\,s at 768 leaves and $210$\,s at 2048 leaves, where
$r=852$. Since refits are geometric, a run of $T$ episodes costs at
most $1+\log_{1.15}T$ such solves. Even so, exact certification is practical only up to a
few thousand leaves. This is the regime boundary at which \cref{sec:local} replaces the
global program with the cushion certifier, whose per-episode cost does not depend on the
size of the tree. On every cue and DelayedCue tree in \cref{tab:scaling}, its closed-form
supersolution loses nothing: $\Kloc(\mbar)=\Kstar$ to six digits.

\begin{table}[t]
\centering
\caption{Cost of one plug-in refit (logged-law projection plus Kelly program on an
empirical law of 4096 episodes; median of 3) and of computing $\Kstar$ and $\Kloc(\mbar)$
once on the true law, in seconds, with the refit's peak traced memory.}
\label{tab:scaling}
\small
\setlength{\tabcolsep}{4pt}
\begin{tabular}{lrrrrrr}
\toprule
tree & leaves & $\rank(B)$ & refit & MiB & $\Kstar$ & $\Kloc$\\
\midrule
cue $d{=}2$ & 48 & 22 & 0.026 & 0.0 & 0.010 & 0.015\\
cue $d{=}4$ & 192 & 94 & 0.10 & 0.6 & 0.052 & 0.11\\
cue $d{=}6$ & 768 & 382 & 6.9 & 9.1 & 3.3 & 7.4\\
random, depth 4 & 512 & 170 & 1.6 & 2.9 & 1.1 & --\\
DelayedCue $K{=}2$, $L{=}2$ & 128 & 52 & 0.017 & 0.2 & 0.022 & 0.090\\
DelayedCue $K{=}2$, $L{=}3$ & 512 & 212 & 1.9 & 3.9 & 0.32 & 2.6\\
DelayedCue $K{=}4$, $L{=}1$ & 512 & 240 & 1.1 & 3.7 & 0.41 & 1.9\\
DelayedCue $K{=}2$, $L{=}4$ & 2048 & 852 & 210 & 62 & 8.2 & 94\\
\bottomrule
\end{tabular}
\end{table}

\subsection{Scalable certification and matched instances}

\begin{figure}[t]
\centering
\includegraphics{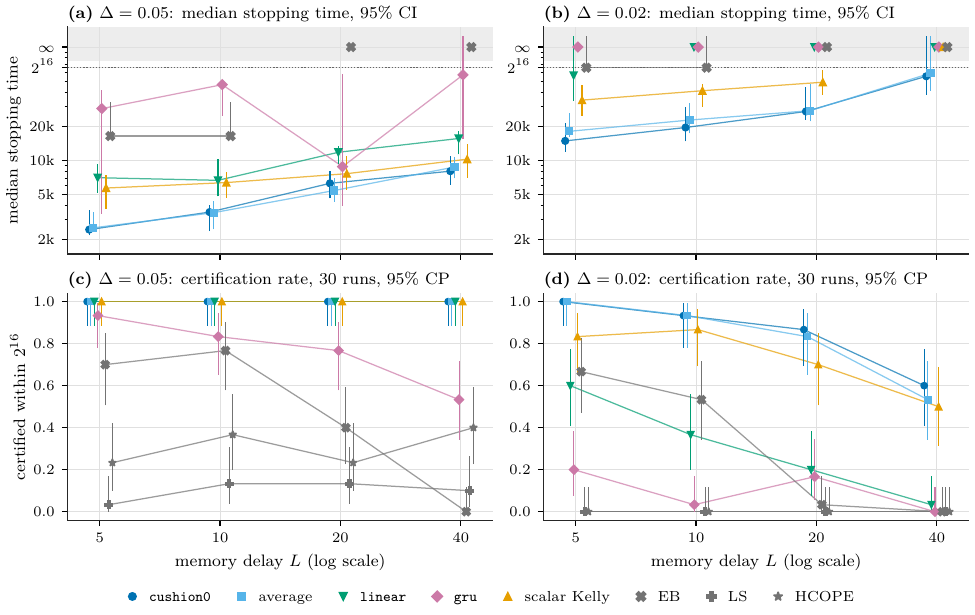}
\caption{Cushion certifier on DelayedCue at $\delta=10^{-3}$ with a
horizon of $2^{16}$ episodes and 30 runs per cell (the data of \cref{tab:scalable}).
(a,\,b)~Median stopping time with 95\% order-statistic intervals; points in the shaded
band have a censored median. (c,\,d)~Fraction of runs certified within $2^{16}$ episodes,
with 95\% Clopper--Pearson intervals. Points are offset horizontally for legibility.}
\label{fig:scalable}
\end{figure}

To see where the scalable certifier loses against $\Kstar$, we enumerate DelayedCue at
$\Delta=0.05$ for $(K,L)=(2,1),(2,2),(2,3),(4,1)$, which gives 32--512 leaves. We run the
same online configuration as \cref{tab:scalable} (30 runs, horizon $2^{16}$, on CPU)
together with CV-Kelly on the enumerated tree. Let $K_{\mathrm{class}}$ be the best growth
rate in a gambler's class, fitted on the exact law of the tree. The stopping time, in
units of the lower bound, then factors as
\[
\frac{T\Kstar}{\log(1/\delta)}=\underbrace{\frac{\Kstar}{\Kloc(\mbar)}}_{\text{supersolution}}
\cdot\underbrace{\frac{\Kloc(\mbar)}{K_{\mathrm{class}}}}_{\text{gambler class}}
\cdot\underbrace{\frac{TK_{\mathrm{class}}}{\log(1/\delta)}}_{\text{online learning}}.
\]
\Cref{tab:matched} reports each factor.

\paragraph{Supersolution factor.} It is $1$ to six digits on all four trees, so the
closed-form ratio supersolution of the scalable certifier is lossless here. The whole gap
comes from the gamblers.

\paragraph{Gambler factors.} The class factor is $1.33$--$1.48$ for \texttt{cushion0} and
$1.24$--$1.42$ for \texttt{linear}. The online factor is $1.3$--$1.6$ for \texttt{cushion0}
but $2.9$--$3.2$ for \texttt{linear}. This explains the held-out observation of \cref{tab:scalable}:
\texttt{linear}'s richer class has the higher growth rate, but learning it online costs
more than the class gains.

\paragraph{Comparison with the exact plug-in.} \texttt{cushion0} stops at
$1.7$--$2.2\times$ the lower bound and CV-Kelly at $1.6$--$3.2\times$. Their 95\%
intervals overlap on every tree. At $(2,3)$ the point estimate favours \texttt{cushion0}
($1172$ against $1782$ episodes), because the plug-in must estimate $212$ fair-bet
coordinates. In wall clock, a CV-Kelly run takes $2.7$\,s at 32 leaves and
$176$--$231$\,s at 512 leaves. The scalable runner takes $40$--$100$\,s per run, for all
gamblers and baselines together over the full $2^{16}$ episodes.

\begin{table}[t]
\centering
\caption{Matched DelayedCue instances at $\Delta=0.05$, $\delta=10^{-3}$, 30 runs.
Exponent ratios are computed on the enumerated tree. Median stopping times are in
units of the lower bound $\log(1/\delta)/\Kstar$ of \cref{thm:lower} (shown in episodes).
Online factor: median stopping time times $K_{\mathrm{class}}/\log(1/\delta)$.}
\label{tab:matched}
\small
\setlength{\tabcolsep}{3.5pt}
\begin{tabular}{lrrrrrrrrrr}
\toprule
& & & \multicolumn{3}{c}{exponent ratios $\Kstar/\cdot$} & \multicolumn{2}{c}{online factor} &
\multicolumn{3}{c}{median $/$ lower bound}\\
\cmidrule(lr){4-6}\cmidrule(lr){7-8}\cmidrule(lr){9-11}
$(K,L)$ & leaves & bound & $\Kloc$ & \texttt{cushion0} & \texttt{linear} & \texttt{cushion0} &
\texttt{linear} & \texttt{cushion0} & CV-Kelly & scalar\\
\midrule
$(2,1)$ & 32 & 542 & 1.000 & 1.36 & 1.32 & 1.26 & 3.07 & 1.71 & 1.56 & 6.08\\
$(2,2)$ & 128 & 552 & 1.000 & 1.33 & 1.25 & 1.63 & 3.20 & 2.17 & 1.83 & 4.99\\
$(2,3)$ & 512 & 562 & 1.000 & 1.36 & 1.24 & 1.54 & 3.20 & 2.09 & 3.17 & 5.26\\
$(4,1)$ & 512 & 1138 & 1.000 & 1.48 & 1.42 & 1.51 & 2.93 & 2.24 & 1.93 & 5.34\\
\bottomrule
\end{tabular}
\end{table}

\section{Random trees: locality and the Gaussian approximation}
\label{app:random}

These enumerated trees isolate the price of the supersolution, $\Kloc(\mbar)$ against
$\Kstar$. The matched DelayedCue study of \cref{sec:matched} also measures the loss from
learning a gambler.

\paragraph{Price of locality.} On 200 random trees, the ratio supersolution had zero price
of locality (relative error below $10^{-4}$) on 192 trees. The eight exceptions are
depth-2 trees with a large advantage ($\Delta\ge0.12$), the worst with
$\Kloc/\Kstar=0.69$. With the exact price $m$ in place of $\mbar$ the loss disappears, as
\cref{cor:local} requires, so the loss is the slack of the closed form, which binds only
when the optimal stake is large. Looser supersolutions pay a price even on memory trees:
at $d=1$ a constant $\mbar=1,1.5,3,10$ gives $\Kloc=0.1045,0.0942,0.0703,0.0314$, against
$\Kstar=0.1155$.

\paragraph{Scalar and DR bets.} The median of $\Kscal/\Kstar$ is $0.37$. The oracle DR
bet has median $K_{\mathrm{DR}}/\Kstar=0.49$, $0.28$ and $0.10$ at depths 2, 3 and 4. Its
additive range must cover every history, and that cost grows with depth.

\paragraph{Gaussian approximation.} \Cref{fig:random} compares $\Kstar$ with the
efficiency bound of \cref{thm:local} on the same trees.

\begin{figure}[t]
\centering
\includegraphics{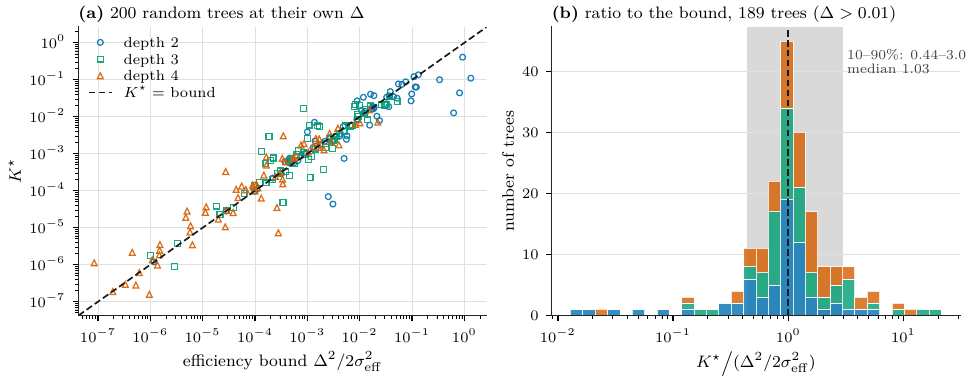}
\caption{$\Kstar$ and the semiparametric efficiency bound on 200
random trees. (a)~$\Kstar$ against $\Delta^2/2\sigeff$, each tree at its own advantage
$\Delta$. (b)~Histogram of the ratio $\Kstar/(\Delta^2/2\sigeff)$ on the 189 of the 200 trees
with $\Delta>0.01$, stacked by depth; shaded: the 10--90\% range. The filter removes trees
where the ratio is numerically unstable ($\Delta\to0$) and is unrelated to the 192 trees
with zero price of locality in the text. \Cref{thm:local} makes the
ratio tend to 1 only as $\Delta\to0$. At a fixed advantage it ranges from $0.014$ to
$19.4$, and it is below 1 on 47\% of trees.}
\label{fig:random}
\end{figure}

\section{Implementation and experimental details}
\label{app:details}

\begin{figure}[t]
\centering
\includegraphics{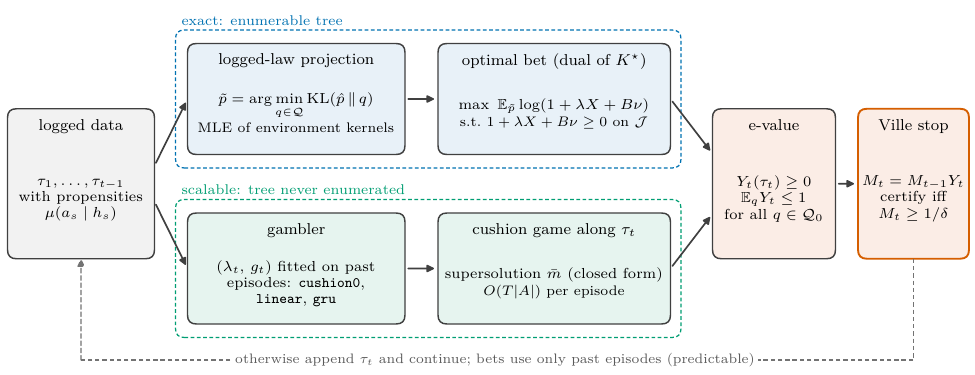}
\caption{Certification pipeline. Each lane turns the past episodes
into a predictable bet and evaluates it on the new episode $\tau_t$, giving an e-value
$Y_t$. The product $M_t$ is a nonnegative supermartingale under every null law, so
stopping once $M_t\ge1/\delta$ is $\delta$-sound by Ville's inequality. Top: for an
enumerable tree, \cref{alg:plugin} projects the empirical law onto $\Qs$ and solves the
dual program of \cref{thm:duality}. Bottom: when the tree cannot be enumerated, the
cushion game of \cref{thm:cushion} with the closed-form supersolution of
\cref{lem:ratio} evaluates the bet along the realized episode only
(\cref{prop:scalable}).}
\label{fig:architecture}
\end{figure}

\begin{table}[t]
\centering
\caption{Exact exponents on the published hard instances. $V(\pi)=3/4$ throughout.
$\KL_2=\KL(P_1\|P_0)$ is the exponent within each paper's two-model class. The ceiling
of \cref{thm:ceiling} equals $P(A)\log(3/2)$ exactly.}
\label{tab:hard}
\small
\begin{tabular}{lrrrrrrrr}
\toprule
family & $H$ & leaves & $P(A)$ & $V(\mu)$ & $\Kstar$ & $\Kstar/P(A)$ & ceiling & $\KL_2$\\
\midrule
Jajoo & 3 & 18 & 0.2222 & 0.5556 & 0.02414 & 0.1086 & 0.09010 & 0.1221\\
Jajoo & 4 & 54 & 0.1481 & 0.5370 & 0.01724 & 0.1164 & 0.06007 & 0.0814\\
Jajoo & 5 & 162 & 0.0988 & 0.5247 & 0.01198 & 0.1213 & 0.04005 & 0.0543\\
Jajoo & 6 & 486 & 0.0658 & 0.5165 & 0.00820 & 0.1245 & 0.02670 & 0.0362\\
Jajoo & 7 & 1458 & 0.0439 & 0.5110 & 0.00556 & 0.1267 & 0.01780 & 0.0241\\
Jajoo & 8 & 4374 & 0.0293 & 0.5073 & 0.00375 & 0.1280 & 0.01187 & 0.0161\\
\midrule
Luo & 3 & 48 & 0.0625 & 0.5156 & 0.00907 & 0.1451 & 0.02534 & 0.0407\\
Luo & 4 & 288 & 0.0156 & 0.5039 & 0.00231 & 0.1476 & 0.00634 & 0.0102\\
Luo & 5 & 1728 & 0.0039 & 0.5010 & 0.00058 & 0.1482 & 0.00158 & 0.0025\\
\bottomrule
\end{tabular}
\end{table}

\paragraph{Exact programs.} \Cref{thm:duality}'s program is solved in the coordinates
$(\lambda,\nu)$ by a log-barrier method on the leaf constraints, with damped Newton steps
and a backtracking line search that keeps every iterate strictly feasible. The recovered
$q^\star=P/Y^\star$ is checked to be action-neutral, to sum to one and to satisfy
$\E_{q^\star}X\le10^{-7}$, and its divergence matches the dual value to within $10^{-6}$. The
logged-law projection $\argmin_{Q\in\Qn}\KL(\hat P_n\|Q)$ solves the dual program
$\max_\nu\E_{\hat P_n}\log(1+B\nu)$ with the same barrier Newton method and recovers
$Q=\hat P_n/(1+B\nu^\star)$.
$\Kloc$ adds one zero-probability barrier row per node, and $\Kstar_F$ adds the columns of
$F$.

\paragraph{Universal mixture.} The mixture averages over 300 points of a hit-and-run walk on
$Z$, drawn before any episode is seen. A fixed mixing law preserves exact soundness.

\paragraph{Comparators.} \emph{LS} compares a logarithmic-smoothing lower bound on $V(\pi)$
from $WR$ with an LS upper bound on $V(\mu)$ from $R$, over a grid of smoothing levels.
\emph{HCOPE} clips importance weights at a level chosen on the first 20\% of the data and uses
a Maurer--Pontil empirical-Bernstein bound for $V(\pi)$ against an empirical-Bernstein upper
bound for $V(\mu)$. \emph{EB} is an empirical-Bernstein lower bound on $\E X$. All three are
evaluated on the doubling grid $n=2,4,8,\dots$ with a union bound over grid points and
parameters, which makes them anytime-valid. \emph{DR oracle} bets
$1+\lambda(X-Bc)$, with $c$ the $L^2(P)$-efficient control variate under the true law, and
$\lambda$ refit predictably by Kelly. \emph{DR fitted} refits $c$ on past episodes as well.
Every DR bet must be feasible on every leaf.

\paragraph{Null laws for soundness.} Besides $q^\star$, we use vertices of $\Qn$ obtained by
maximizing random linear objectives over $\Qn$ with an LP solver. These are adversarial
boundary laws that put zero mass on many leaves.

\paragraph{DelayedCue.} A cue $c\in\{1,\dots,4\}$ is drawn. At each of $L$ distractor steps a
symbol $s_t$ is shown uniformly at random. The logger plays $s_t$ with probability $0.4$,
repeats its previous action with probability $0.3$, and otherwise plays uniformly. The
target mixes the logger's distribution with probability $0.9$ and plays $s_t$ with
probability $0.1$. Matching the symbol pays $0.5/L$. At the final step the logger recalls the
cue with probability $0.3$ (otherwise uniform), and the target with probability $0.9$. The
final reward is Bernoulli with mean $p_{\mathrm{hit}}$ on recall and $0.5$ otherwise, with
$p_{\mathrm{hit}}$ set so that $\Delta$ takes the stated value exactly. The per-step ratio
bounds of \cref{lem:ratio} are computed exactly, and the simulator attains them. The
\texttt{linear} gambler's fair fractions are a softmax-tilt of $\mu$ by a linear function of
features including $\log(\pi/\mu)$. The \texttt{gru} gambler adds a GRU head over the
observable history. Both choose $(\lambda,\theta)$ by maximizing past mean log-wealth.

\paragraph{Compute.} Tree experiments ran on 8 CPU cores, with BLAS limited to one thread per
worker process. The saved comparison sweep (all methods and cells of
\cref{tab:comparators}, including tree construction and exponent calculations)
records 674 seconds of wall-clock time as a batch total. The scalable experiment ran on one NVIDIA T4 GPU and took 3.8 hours.
The plug-in refits at $r_k=\lceil1.15^k\rceil$ and thus performs at most
$1+\lfloor\log_{1.15}T\rfloor$ distinct scheduled refits by episode $T$, with leaf
constraints at each refit. Scalar Kelly has no fair-action variables. Per-method wall
clock, refit counts, peak memory and the growth of a refit with tree size are measured
separately on one CPU thread per worker, and reported
in \cref{sec:cost}. Seeds are derived per cell from a master seed, so every cell is
reproducible independently.

\vskip 0.2in
\bibliography{references}

\end{document}